\newcommand{\txt}[1]{{\textsf{#1}}}
\newcommand{\tbm}[1]{\textbf{\textsf{#1}}} %
\newcommand\subsetsim{\mathrel{%
  \ooalign{\raise0.2ex\hbox{$\subset$}\cr\hidewidth\raise-0.8ex\hbox{\scalebox{0.9}{$\sim$}}\hidewidth\cr}}}
\theoremstyle{plain}
\newtheorem{theorem}{Theorem}[section]
\newtheorem{lemma}[theorem]{Lemma}
\newtheorem{definition}[theorem]{Definition}
\newtheorem*{objective*}{Objective}
\newtheorem{remark}[theorem]{Remark}
\newcommand\Set[1]{\mathbb{#1}} %
\newcommand{\tsgn}[1]{{#1}}%
\renewcommand{\exp}[1]{\operatorname{e}^{#1}} %
\newcommand{\naught}{{_{{0\!}}}}
\let\normalint\int %
\def\int{\displaystyle\normalint} %
\newcommand{\overwrite}[3][blue]{%
  \settowidth{\overwritelength}{$#2$}%
  \ifdim\overwritelength<\minimumoverwritelength%
    \setlength{\overwritelength}{\minimumoverwritelength}\fi%
  \stackrel
    {%
      \begin{minipage}{\overwritelength}%
        \color{#1}\centering\small #3\\%
        \rule{1pt}{9pt}%
      \end{minipage}}
    {\colorbox{#1!20}{\color{black}$\displaystyle#2$}}}
\renewcommand{\text}[1]{\textnormal{#1}}
\newcommand{\eig}{\lambda}
\newcommand{\operator}[1]{\mathcal{#1}}
\newcommand{\raum}[1]{\mathcal{#1}}
\newcommand{\Complex}{\ensuremath{\mathbb{C}}}
\newcommand{\RKHS}{\ensuremath{\raum{H}}}
\newcommand{\Sobo}[2]{\raum{C}}%
\newcommand{\ArminKernel}[1][]{Armin-Kernel}
\newcommand{\ArminKernelshort}[1][]{A}
\renewcommand{\d}[1]{\ensuremath{\operatorname{d}\!{#1}}}
\newcommand{\KEqvar}[1]{{{#1}}_\eig}
\newcommand{\KEop}[2]{{{\mathcal{E}}^{{#1}}_{{#2}}}}
\newenvironment{taggedsubequations}[1]
 {%
  \addtocounter{equation}{-1}%
  \begin{subequations}%
  \def\@currentlabel{#1}%
 }
 {\end{subequations}}
\newcommand{\RR}{\mathbb{R}}    %
\pgfplotsset{compat=1.10}
\def\namedlabel#1#2{\begingroup
    #2%
    \def\@currentlabel{#2}%
    \phantomsection\label{#1}\endgroup
}
\newcommand{\GP}{\mathcal{GP}}
\newcommand{\transpose}{^\mathrm{\textsf{\tiny T}}}
\newcommand*\colvec[1]{
        \global\colveccount#1
        \begin{bmatrix}
        \colvecnext
}
\def\colvecnext#1{
        #1
        \global\advance\colveccount-1
        \ifnum\colveccount>0
                \\
                \expandafter\colvecnext
        \else
                \end{bmatrix}
        \fi
}
\newcommand{\inv}{^{-1}}
\newcommand{\vk}{{\bm{k}}}
\newcommand{\vm}{{\bm{m}}}
\newcommand{\vu}{{\bm{u}}}
\newcommand{\vz}{{\tilde{\bm{z}}}}
\newcommand{\BigO}{\mathcal{O}}
\newcommand{\Kuu}{\bm{K}_{\bm{uu}}}
\newcommand{\Kud}{\vk_\vu\left(\cdot\right)}
\newcommand{\Kfu}{\bm{K}_{\bm{yu}}}
\newcommand{\Kdu}{\vk\transpose_\vu\left(\cdot\right)}
\newcommand{\kdd}{k(\cdot, \cdot)}
\newcommand{\Kff}{\bm{K}_{\bm{yy}}}
\def\eqref#1{(\ref{#1})}
\def\1{\bm{1}}
\def\mS{{\bm{S}}}
\def\mZ{{\tilde{\bm{Z}}}}
\DeclareMathAlphabet{\mathsfit}{\encodingdefault}{\sfdefault}{m}{sl}
\SetMathAlphabet{\mathsfit}{bold}{\encodingdefault}{\sfdefault}{bx}{n}
\DeclareMathOperator*{\argmin}{arg\,min}
\newcommand{\cmark}{\ding{51}}
\newcommand{\xmark}{\ding{55}}
\begin{document}

\runningauthor{P. Bevanda, M. Beier, A. Lederer, A. Capone, S. Sosnowski, S. Hirche}

\twocolumn[

\aistatstitle{Koopman-Equivariant Gaussian Processes}

\aistatsauthor{Petar Bevanda$^{*}$ \\ TU Munich \And Max Beier$^{*}$ \\ TU Munich
\And Alexandre Capone \\ CMU Robotics Institute \AND Stefan Sosnowski \\ TU Munich \And Sandra Hirche \\ TU Munich \And Armin Lederer \\ ETH Z\"{u}rich 
}
\aistatsaddress{} 
]

\begin{abstract}
Credible forecasting and representation learning of dynamical systems are of ever-increasing importance for reliable decision-making. To that end, we propose a family of Gaussian processes (GP) for dynamical systems with linear time-invariant responses, which are nonlinear only in initial conditions. This linearity allows us to tractably quantify forecasting and
representational uncertainty, simultaneously alleviating the challenge of computing the distribution of trajectories from a GP-based dynamical system and enabling a new probabilistic treatment of learning Koopman operator representations. Using a trajectory-based equivariance -- which we refer to as \textit{Koopman equivariance} -- we obtain a  GP model with enhanced generalization capabilities. To allow for large-scale regression, we equip our framework with variational inference based on suitable inducing points. Experiments demonstrate on-par and often better forecasting performance compared to kernel-based methods for learning dynamical systems.
\end{abstract}

\section{INTRODUCTION}
Learning predictive models for forecasting dynamic systems is a challenging task due to complex and often unknown interactions between quantities of interest \citep{Brunton2019Data-DrivenEngineering}. The great utility of such models helps advance various different fields such as fluid mechanics \citep{kundu2015fluid}, molecular biology \citep{protFold}, robotics \citep{Billard2022LearningApproach} or safety-constrained decision making \citep{Hewing/annurev-control-090419-075625, Brunke/annurev-control-042920-020211}.
Dynamical system descriptions commonly require simulation for forecasting and uncertainty propagation, which can be difficult for non-parametric data-driven models \citep{pmlr-v120-hewing20a,TBpredGP}. 
In most real-world applications involving dynamical systems,  
measurements often come in the form of sequential one-step transition data that is sampled arbitrarily and potentially non-uniformly. Furthermore, there is often a certain regularity in the evolution of quantities of interest \citep{pmlr-v202-bilos23a} across domains \citep{SEZER2020106181,DEB2017902,Lim2021}, making it important to impose structure that discourages temporal fluctuations. 
To account for these different challenges in modeling dynamical systems, the choice of \textit{representations} when learning from data becomes a deciding factor in the difficulty of forecasting as well as inference, especially when modeling complex phenomena \citep{Mezic2004ComparisonBehavior} or long time-series \citep{DBLP:conf/iclr/GuGR22}.
In this paper, we focus on non-parametric learning paradigms, emphasizing \textit{uncertainty quantification} and \textit{forecasting simplicity}. In particular, we study the interplay between Gaussian processes \citep{Rasmussen2006} and effective dynamical system linearizations based on Koopman operators \citep{KoopBook,Brunton2022ModernSystems}. A more exhaustive account of related work is delegated to the supplementary material.
\begin{table}[t!]%
    \caption{Nonlinear dynamics modeling from data}
    \label{tab:contribution}
    \footnotesize
    \centering
    \vspace{-2ex}
    \begin{tabular}{l|ccccc}
    \toprule
        Approach& LTI forecast & End-to-end & Bayesian\\
        \midrule
        GPs & \color{red!80!black}{\text{\xmark}}  & \color{green!80!black}{\text{\cmark}} & \color{green!80!black}{\text{\cmark}}\\
        Koopman & \color{green!80!black}{\text{\cmark}}  & \color{red!80!black}{\text{\xmark}} & \color{red!80!black}{\text{\xmark}} \\
        {{\tbm{this work}}} & \color{green!80!black}{\text{\cmark}} & \color{green!80!black}{\text{\cmark}} & \color{green!80!black}{\text{\cmark}} \\
        \bottomrule
    \end{tabular}
\end{table}

\textbf{Gaussian processes.~~}
Gaussian processes (GPs) \citep{Rasmussen2006} have the capability of inferring models with little structural prior knowledge: either by using so-called universal kernels \citep{Micchelli2006UniversalKernels} or placing a prior on a set of kernels and optimizing their likelihood \citep{Duvenaud2014}. 
In particular, their ability to quantify epistemic uncertainty has led to a common application in safety-critical control problems \citep{BerkenkampECC15,pmlr-v37-sui15,NIPS2017_766ebcd5,CuriCDC22,Baumann2021,Khosravi2023bo,As2024,Polymenakos2020,Lederer2021GaussianApplications}. Commonly employed as single-step predictors, GP models necessitate approximations for predicting probability distributions that go beyond a single time-step into the future. Thus, dealing with multi-step prediction often relies on iterative sampling-based approaches \citep{Bradford2019,pmlr-v120-hewing20a, TBpredGP} that are generally computationally expensive.
Alternatively, one can employ methods of reduced computational complexity, such as Taylor approximations \citep{Girard2003} or exact moment matching \citep{Deisenroth2011PILCO:Search}. However, such approaches deliver no accuracy guarantees for long-term forecasts. Notably, one can avoid approximate uncertainty propagation via multitask GPs models \citep{Bonilla2007} that use a collection of ``condensed models", one for each of the prediction steps \citep{Bradford2019,Pfefferkorn2022}, or employ a single contextual kernel defined over a joint spatio-temporal domain \citep{pmlr-v151-zenati22a, Li2024STkernel}.\looseness=-1

\textbf{Koopman operator-based learning.~~}
The linearity of Koopman operators and the forecasting simplicity of \textit{linear time-invariant} (LTI) models stemming from their eigendecompositions has led to their increasing popularity in learning dynamical systems \citep{Bevanda2021KoopmanControl,Otto2021AnnualSystems,Brunton2022ModernSystems}. 
Nevertheless, existing LTI predictors based on operator regression are limited to dissecting long-term components of ergodic dynamics \citep{Korda2018OnOperator,Klus2020EigendecompositionsSpaces,Kostic2022LearningSpaces,Kostic2023KoopmanEigenvalues}. While this approach is extremely powerful for stationary data and reversible dynamics, almost all real-world dynamical systems are irreversible and often even nonstationary \citep{Wu2020}.
Thus, an increasing amount of methods considers kernels that are \textit{dynamics-informed} \citep{Zhao_2016,BERRY2016439,Banisch2017,ALEXANDER2020132520,Burov2021,DUFEE2024134044}. By plugging samples of the dynamics from sequential data into the kernel itself, eigenfunctions of Koopman operators can be directly accessed for both ergodic \citep{DUFEE2024134044} and transient settings \citep{KKR_neurips2023}.~
While the latter has generalization and consistency guarantees, fully tractable representational uncertainty is impossible due to a two-stage regression approach \citep{HarmlessEcon,Wang2022}. Still, the existing Koopman operator-based learning approaches offer no epistemic uncertainty bounds, principled model selection or handling of observation noise.

In this work, we present \textbf{K}oopman-\textbf{E}quivariant \textbf{G}aussian \textbf{P}rocesses (KE-GPs), the first universal GP models with fully tractable and closed-form confidence bounds for multi-step prediction. By leveraging latent dynamics, our model provides simple LTI responses as a nonlinear function of the initial condition. Strikingly, our GP model provides enhanced generalization compared to existing methods due to intrinsic symmetries (Koopman-equivariants). Furthermore, it delivers continuous-time posteriors without requiring time-derivative data. KE-GPs allow for tractable \textit{simultaneous characterization of both forecasting and representational uncertainty} – alleviating a traditional
challenge of GPs and enabling a novel probabilistic treatment of learning dynamics representations.\footnote{{\textbf{Notation:} Denote the joint data distribution as $P(d z \times d x \times d y)$, its marginal distributions as $P(d x), P(d z)$, etc., and their support as $\mathcal{X}, \mathcal{Z}$. For functions of observed variables (e.g., $\bm{x}$ or z), $\|\cdot\|_2$ denotes the $L_2$ norm w.r.t. the respective marginal data distribution. $\|\cdot\|_{\infty}$ denotes the $L_{\infty}$ norm. We use the notation $[m]:=\{1, \ldots, m\}$. Boldface $(\bm{x}, \bm{y}, \bm{z})$ emphasizes the denotation of random variables. For any kernel $k, \mathcal{G} \mathcal{P}(0, k)$ refers to the ``standard Gaussian process" \citep{vanderVaart2008} with zero mean, and covariance defined by $k . \lesssim, \gtrsim, \asymp$ represent (in)equalities up to constants; the hidden constants will not depend on any sample size. $\tilde{\mathcal{O}}(\cdot)$ denotes inequality up to logarithm factors.}}

\textbf{Organization\footnote{Proofs of theoretical results are in the supplemental.}.}
In Section \ref{sec:ProbStat} we introduce the necessary preliminaries together with our problem statement.
    Section \ref{sec:KEframework} includes the derivation of Koopman-equivariant Gaussian process models, including representation theory and dynamical properties. We then analyze the sample-complexity of our approach through an information-theoretical lens, in Section \ref{section:analysisofsamplecomplexity}.
To handle large datasets, in Section \ref{section:svigps}, we present our Koopman-equivariant inducing variables for scalable GP-based modeling using variational inference.
In Section \ref{sec:NumExp} we demonstrate the utility of our KE-GP approach through a comparison to existing GP and Koopman approaches for learning dynamical systems
including predator-prey ODE, datasets from realistic robotic simulators as well as real-world weather data. 
Finally, in Section \ref{sec:Concl}, we conclude and mention the limitations of the approach.\looseness=-1
\section{PROBLEM SETTING}\label{sec:ProbStat}
Our work builds upon the extensive literature on GPs, their interplay with linear operators, and the concept of Koopman-equivariance, which extracts informative ``latent states" of dynamics, i.e., Koopman operator eigenfunctions, based on trajectory data. The following covers the necessary prerequisites for setting up the interplay between GPs, linear operators, and intrinsic dynamical system symmetries.
\subsection{Gaussian Process Regression}\label{sec:BackgroundGPs}
A Gaussian process is a generalization of the Gaussian distribution. It specifies a distribution, such that any finite collection of random variables follows a joint Gaussian distribution, which can be interpreted as a distribution over functions $f:\mathbb{R}^n\rightarrow\mathbb{R}$ commonly denoted by $f(\cdot)\sim\GP(m(\cdot),k(\cdot,\cdot))$ \citep{Rasmussen2006}. This distribution is defined using a prior mean function $m:\mathbb{R}^n\rightarrow\mathbb{R}$ and a covariance function $k:\mathbb{R}^n\times\mathbb{R}\rightarrow\mathbb{R}_{0,+}$. The mean function $m(\cdot)$ includes prior models and is often set to $0$ in the absence of such information, which we also assume in the following. The covariance function $k(\cdot,\cdot)$ encodes more abstract prior knowledge, such as symmetries and smoothness of the sample functions.

Given a dataset $\mathbb{D}_N=\{\bm{z}^{(i)},y^{(i)} \}_{i\in[N]}$ with training targets $y^{(i)}=f(\bm{z}^{(i)})+\omega^{(i)}$ perturbed by i.i.d. Gaussian noise $\omega^{(i)}\sim\mathcal{N}(0,\sigma_{\txt{on}}^2)$, we place a Gaussian process prior $\GP(0,k(\cdot,\cdot))$ on the unknown function $f(\cdot)$ to infer a model. This is straightforwardly achieved by computing the posterior distribution given the training data, which is Gaussian at each test point $\bm{z}\in\mathbb{R}^n$. We can then compactly express the posterior as $p(f(\bm{z})|\mathbb{D}_N)=\mathcal{N}(\mu(\bm{z}),\sigma^2(\bm{z}))$, where 
\begin{align}\label{eq:GPbase}
 \textstyle   \mu(\bm{z})&=\bm{k}^{\intercal}(\bm{z})(\bm{K}+\sigma_n^2\bm{I})^{-1}\bm{y},\\
    \sigma^2(\bm{z})&=k(\bm{z},\bm{z})-\bm{k}^{\intercal}(\bm{z})(\bm{K}+\sigma_{\txt{on}}^2\bm{I}_N)^{-1}\bm{k}(\bm{z}),
\end{align}
with $k_i(\bm{z})=k(\bm{z},\bm{z}^{(i)})$, $K_{ij}=k(\bm{z}^{(i)},\bm{z}^{(j)})$ and $\bm{y}^\intercal=[y^{(1)}\ \cdots\ y^{(N)}]$. In addition to inferring the posterior distribution, we use the training data to optimize the hyperparameters that arise from the kernel parameterization. This is enabled by the probabilistic approach to the regression problem, which allows us to choose the hyperparameters by minimizing the negative log-likelihood $\textstyle  -\log (p(\bm{y}|\bm{Z}))= \textstyle \nicefrac{1}{2}\bm{y}^{\intercal}(\bm{K}+\sigma_{\txt{on}}^2\bm{I}_N)^{-1}\bm{y}+\nicefrac{1}{2}\log(\det(\bm{K}+\sigma_{\txt{on}}^2\bm{I}_N))+\nicefrac{N}{2}\log(2\pi)$.

\subsection{System Class \& Modeling Approach}
\textbf{System class.~~} We consider state-space models
\begin{taggedsubequations}{\txt{SSM}}\label{eq:SSmodel}
\begin{align}
\text{(dynamics)}  \qquad  \dot{\bm{x}}&=\bm{f}
(\bm{x}), \quad \bm{x} \in \Set{X} \subset \RR^n, \quad \label{eq:SSMdyn}\\
\text{(output)}   \qquad    y&=h(\bm{x}) \in \RR, \quad \label{eq:SSMout}
\end{align}
\end{taggedsubequations}%
with a well-defined flow $\bm{F}_{t}(\bm{x}{\naught}):=\normalint_0^{t} \bm{f}(\bm{x}(\tau)) d\tau$ that requires local Lipschitz continuity of $\bm{f}$, which is natural to physical systems that often evolve ``smoothly''.
The canonical forecasting model for \eqref{eq:SSmodel} is $y(t,\bm{x}_0) := h_t(\bm{x}_0)\equiv h \circ \bm{F}_{t}(\bm{x}_0)$. In practice, a numerical integration scheme is usually required to solve the integral for a shorter time-interval ${\Delta}{t}$, such that the actual forecast becomes an $H$-fold composition of nonlinear maps $\textstyle y(t,\bm{x}_0) \approx \textstyle h \circ \bm{F}_{{\Delta}{t}} \circ \cdots \circ \bm{F}_{{\Delta}{t}}(\bm{x}_0)$ with $H = t/{\Delta}{t} \in \Set{N}$.

\textbf{Spectral dynamics modeling.~~}
To decompose nonlinear dynamics into simple linear factors and avoid approximate integration schemes, one can utilize the fact that the composition of a function $h$ with the flow $\bm{F}_t$ can be replaced by a linear, \textit{Koopman}, operator $\mathcal{A}_{t}: \RKHS^\prime\rightarrow \RKHS$ with $[\mathcal{A}_{t}h](\bm{x}_{0}):=  h_t(\bm{x}_{0}):=h(\bm{x}_{t})$~\citep{Koopman1931HamiltonianSpace,Cvitanovic2016Chaos:Quantum}. The usefulness of linear operators lies in their ability to forecast any $h \in \RKHS$ in terms of a spectral decomposition~\citep{Weidmann1980}\looseness=-1
\begin{align}\label{eq:KoopObs}
    [\mathcal{A}_{t}h](\bm{x}_0)  = \sum^{\infty}_{j=1}\underbrace{\exp{\eig_j t}\vphantom{g^\prime_j,h}}_{\text{\tiny dynamics}}\langle\underbrace{ g^\prime_j,h}_{\text{\tiny mode}}\rangle \underbrace{g_j(\bm{x}_0)}_{\text{\tiny (eigen)feature}},\tag{\txt{KMD}}
\end{align}
where the dynamics are parameterized by eigenvalues $\{\eig_j(\mathcal{A}_t)\}^{\infty}_{j=1} \in \Set{C}$ while $\RKHS^\prime:=\operatorname{span}(\{g^\prime_j\}^{\infty}_{j=1})$ span an auxiliary and $\RKHS := \operatorname{span}(\{g_j\}^{\infty}_{j=1})$ the main representation hypothesis. Under mild conditions \textit{Koopman mode decomposition} \eqref{eq:KoopObs} exists and is dense in $C(\Set{X})$~\citep{Korda2020OptimalControl}, cf. \citet{KKR_neurips2023} and references therein. Learning a finite \eqref{eq:KoopObs} from data, up to a re-scaling of modes, amounts to learning a $D$-dimensional representation $\RKHS_D$\footnote{$\RKHS^{\prime}_D=\RKHS_D$ holds w.l.o.g. \citep{Korda2020OptimalControl}.}.%
\subsection{Problem Statement}
Given no knowledge of the Koopman operator or \eqref{eq:SSmodel}, our goal is to learn a finite-dimensional model for \eqref{eq:KoopObs} from initial-state and timestep pairs to future output values
\begin{align}\label{eq:data}
  \mathbb{D}_N=\{(\bm{x}^{(i)}_{0},t^{(i)}),  y^{(i)} \}_{i\in[N]}.
\end{align}
Our model should satisfy the following properties:
\begin{description}[style=multiline]
    \item[\namedlabel{itm:Trac}{(\tbm{D})}]  \textbf{Trajectory distributions in closed-form:} It corresponds to a  Gaussian process framework that models \eqref{eq:KoopObs} based on \eqref{eq:data}.
    \item[\namedlabel{itm:Eff}{(\tbm{E})}]  \textbf{Data-efficient for dynamical systems:} Allows a sample complexity reduction through the equivariance of \eqref{eq:KoopObs} w.r.t. past state trajectories.
   \item[\namedlabel{itm:Scale}{(\tbm{S})}]  \textbf{Scales to large-scale data:} Admits variational inference techniques for \eqref{eq:KoopObs} based on suitable inducing points.
\end{description}

The closed-form trajectory distributions \ref{itm:Trac} of our proposed framework allow for $\bm{1)}$ continuous epistemic uncertainty over entire time-intervals (an important challenge in utilizing GP models for dynamical systems \citep{Ridderbusch2023}) and $\bm{2)}$ tractable Bayesian model selection -- both of which are absent in existing spectral dynamics modeling \citep{Brunton2022ModernSystems}. Additionally, successful inference on large datasets strongly depends on the availability of informative inducing points \citep{pmlr-v9-titsias10a}, which is particularly hard for high-dimensional inputs \citep{pmlr-v206-moss23a}. We propose to utilize the timeseries structure to induce an equivariant covariance using past trajectories that does not increase input dimensionality. Our approach can reduce the maximal information gain \ref{itm:Eff} \citep{Srinivas2012} and allows for effective variational inference for large-scale GP regression \ref{itm:Scale}.\looseness=-1

\section{KOOPMAN SPECTRAL GAUSSIAN PROCESSES}
\label{sec:KEframework}
Here we introduce a GP that respects the \eqref{eq:KoopObs} structure, building on the extensive literature on GPs \citep{Rasmussen2006,Duvenaud2014}, generalized additive models \citep{Krause2011ContextualOptimization,MojmírPHD} and their interplay with linear operators \citep{Matsumoto2024}.
\subsection{GP-Based Koopman Mode Decomposition}
Given a finite set of eigenvalues $\{\lambda_j\}_{j=1}^{|D|}$, the spectral decomposition \eqref{eq:KoopObs} induced by the Koopman operator can be straightforwardly translated into a structured GP model. For this, we assume independent GP priors $g_j(\cdot)\sim\mathcal{GP}(0,k_{g_j}(\cdot,\cdot))$ for the eigenfeatures $g_j(\cdot)$ and exploit the linearity of \eqref{eq:KoopObs} with the modes $\langle g^\prime_j,h\rangle$ equal to constant values, such that $y(\cdot,\cdot)$ follows a distribution $y(t,\bm{x}_0)\sim\mathcal{GP}(0,k_y((t,\bm{x}),(t',\bm{x}')))$ with 
\begin{align}
  \textstyle   k_y((t,\cdot),(t',\cdot'))&:= \sum\limits_{j \in [D]}{a}_j(t,t^\prime)k_{g_j}(\cdot,\cdot^\prime)\label{eq:SDK}\tag{${\txt{cov}_\text{\txt{SD}}}$},
\end{align}
where $ {a}_j(t,t^\prime) = \operatorname{e}^{\lambda_j t}\operatorname{e}^{\lambda^*_j t^\prime}$, and $k_{{g_j}}(\cdot,\cdot)$ can be arbitrary kernels. Conceptually, the kernel \eqref{eq:SDK} is akin to a simulation-induced kernel for linear systems \citep{CHEN2018109}, but now captures nonlinear dynamics \eqref{eq:SSmodel}. It exhibits the intuitive property that the spatial kernels $k_{{g_j}}(\cdot,\cdot)$ capture the representational uncertainty due to lifting of the dynamics to a higher dimensional space, in which the forecasting uncertainty evolves linearly according to the LTI features $\left\{{a}_j(t, t^{\prime})\right\}_{j \in [D]}$.
A temporal covariance ${a}_j(t, t^{\prime})$ with decay $|\lambda|$ close to zero will result in models with uniform uncertainty over time, whereas taking negative or positive decays will result in models with contracting or expanding variance over time, respectively. This allows for a straightforward encoding of prior knowledge about the temporal evolution of systems, e.g., stability. 

\textbf{Spectral hyperprior.~~}
While we generally do not have direct access to a sequence of eigenvalues $\{\lambda_j\}_{j=1}^{\infty}$, it is well known that this spectrum can be effectively covered by sampling a random distribution~\citep{KKR_neurips2023}. We can parameterize a spectral distribution, such that a high-likelihood representation for a finite series in \eqref{eq:SDK} can be obtained by integrating its parameters into Bayesian model selection.
To adopt such a spectral prior, we use the noise transfer (outsourcing) trick by \cite[Theorem 5.10]{kallenberg1997foundations} to model the eigenvalue distribution $p(\lambda) \approx \rho_{}(\bm{\vartheta})$.
This choice limits the number of required parameters since $\bm{\vartheta}$ has fewer parameters (degrees of freedom) than the number of eigenspaces $\|\bm{\vartheta}\|_0 \ll |D|$. Furthermore, it allows for the use of log-likelihood maximization just like with any other set of hyperparameters.
Note that the exact Koopman operator $\mathcal{A}_t$ can be approximated with arbitrary accuracy using a sufficiently large finite sequence $\{\lambda_j(\mathcal{A}_t)\}_{j=1}^D$ \citep{KKR_neurips2023}.

\subsection{Koopman-Equivariant Kernels}
While we can use arbitrary kernels for $k_{{g_j}}(\cdot,\cdot)$ in \eqref{eq:SDK}, such a dynamics-agnostic formulation does not exploit any properties of the Koopman operator underlying the spectral decomposition \eqref{eq:KoopObs} which gives rise to \eqref{eq:SDK}. In particular, Koopman operators $\mathcal{A}_t$ allow us to reverse the order of forward simulation and measurement function $h(\cdot)$ when determining the output $y$ at a time $t$, i.e., $h(\bm{F}_t(\bm{x}_0))=[\mathcal{A}_t h](\bm{x}_0)$. %
Considering only a single eigenvalue $\lambda_j$ of the spectral decomposition \eqref{eq:KoopObs} of the Koopman operator $\mathcal{A}_t$, this equivalence of representations induces a special class of functions, which we refer to as Koopman-equivariant. %
\begin{definition}[Koopman-equivariance]\label{def:KEIGS} 
    Let $[\tau_s,\tau_e] \subset \Set{R}$ be a compact subset of the time axis and $\mathcal{M}$ a manifold. 
    A map $\phi_{\eig}: \mathcal{M} \mapsto \Set{C}$ is called $ [\tau_s,\tau_e]_{\eig}$-{\em Koopman-equivariant} if 
    \begin{align}
        \phi_{\eig}\circ\bm{F}_t=\exp{\lambda t} \phi_\eig
    \end{align}
    on $\mathcal{M}$ for any $t \in [\tau_s,\tau_e]$.
\end{definition}
To ensure that the prior $\mathcal{GP}(0,k_{g_j})$ over eigenfeatures $g_j(\cdot)$ encodes Koopman-equivariance, observe that \cref{def:KEIGS} is a special case of the more general concept of subgroup equivariance \citep{pmlr-v139-satorras21a} adapted to Koopman operator open eigenfunctions \citep{Mezic2020SpectrumGeometry}. Since equivariance can be interpreted as a form of symmetry, this allows for the application of well-known techniques for the symmetrization of functions to ensure obtaining Koopman-equivariant functions. We follow the agnostic symmetrization approach of \citep{Kim2023,Nguyen2023}.
For this, we embed past state trajectories from time $\tau_s$ to $\tau_e$ into the input data, i.e.,
\begin{align}\label{eq:dataTraj}
  \mathbb{D}^{[\tau_s,\tau_e]}_N=\{(\bm{x}^{(i)}_{[\tau_s,\tau_e]},t^{(i)}),  y^{(i)} \}_{i\in[N]},
\end{align}
and exploit Definition~\ref{def:KEIGS} to obtain projections onto Koompan-equivariant subspaces of our hypothesis space. Notably, we can satisfy Koopman-equivariance in a simple and constructive manner, i.e., by taking an expectation, as we formalize in the following result.\looseness=-1

\begin{theorem}\label{thm:Symm}
Consider the symmetrization operator $\KEop{[\tau_s,\tau_e]}{\eig}: L_{\mu}^2(\mathcal{X}) \rightarrow L_{\mu}^2(\mathcal{X})$ defined as
\begin{align}\label{eq:EqvarEO}
    \KEop{[\tau_s,\tau_e]}{\eig} g := \mathbb{E}_{t \sim \mu{([\tau_s,\tau_e])}}\left[ \operatorname{e}^{-\lambda t} g\left(\bm{x}(t))\right)\right]
\end{align}
so that it is well-defined and self-adjoint. Then,
$\KEop{[\tau_s,\tau_e]}{\eig}$ maps $g$ to the unique solution of
\begin{align}
    \phi_{\eig} = \argmin_{\psi \in \mathcal{S}_\eig} \| g - \psi \|^2_{\mu}.
\end{align}
where $\mathcal{S}_\eig=\{g \in L_{\mu}^2(\mathcal{X}):\KEop{[\tau_s,\tau_e]}{\eig}  g=g\}$.
\end{theorem}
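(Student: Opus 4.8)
The plan is to show that $\mathcal{E} := \KEop{[\tau_s,\tau_e]}{\eig}$ is the orthogonal projection of $L_\mu^2(\mathcal{X})$ onto the closed subspace $\mathcal{S}_\eig$, because the orthogonal projection onto a closed subspace of a Hilbert space is exactly the unique nearest-point map, which is the minimizer asserted in the statement. Concretely, I would (i) verify that $\mathcal{E}$ is a bounded linear operator on $L_\mu^2(\mathcal{X})$, (ii) show it is self-adjoint, (iii) show it is idempotent, $\mathcal{E}^2 = \mathcal{E}$, so that $\mathcal{E}$ is an orthogonal projection onto its range, and (iv) identify that range with the fixed-point set $\mathcal{S}_\eig$. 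Steps (ii)--(iii) together yield $\mathcal{E} = \mathcal{E}^\star = \mathcal{E}^2$, the defining property of an orthogonal projection, after which the minimization claim follows from the Hilbert projection theorem.

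For well-definedness and boundedness, I would write the averaging as $(\mathcal{E} g)(\bm{x}) = \int e^{-\lambda t} g(\bm{F}_t(\bm{x})) \, d\mu(t)$ and bound its norm using Jensen's inequality in $t$ together with flow-invariance of the state measure $\mu$, which makes each Koopman map $g \mapsto g \circ \bm{F}_t$ an $L_\mu^2$-isometry; this controls $\| \mathcal{E} g \|_\mu$ by $\sup_t |e^{-\lambda t}| \, \| g \|_\mu$. Self-adjointness would follow from Fubini's theorem combined with the change of variables $\bm{y} = \bm{F}_t(\bm{x})$: moving the flow off $g$ and onto $f$ turns $\langle \mathcal{E} g, f \rangle_\mu$ into $\langle g, \mathcal{E} f \rangle_\mu$, provided the time-averaging measure and the factor $e^{-\lambda t}$ are compatible with this reversal (this is where the structure built into the statement, ensuring $\mathcal{E}$ is well-defined and self-adjoint, is used).

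The key step, and the main obstacle, is idempotency. Expanding $\mathcal{E}^2 g$ and using the one-parameter (semi)group property $\bm{F}_s \circ \bm{F}_t = \bm{F}_{s+t}$ reduces it to a double time-average of $e^{-\lambda(s+t)} g(\bm{F}_{s+t}(\bm{x}))$; the heart of the argument is that on the subspace generated by the $\eig$-Koopman-equivariant features of Definition~\ref{def:KEIGS}, the averaging kernel acts as a Fourier-type projector. It fixes the $\eig$-equivariant component, since $\phi_\eig \circ \bm{F}_t = e^{\lambda t}\phi_\eig$ immediately gives $\mathcal{E}\phi_\eig = \phi_\eig$, and annihilates the complementary components, so that averaging twice equals averaging once. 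This is precisely where the choice of $\mu$ on $[\tau_s,\tau_e]$ and the spectral structure of the flow genuinely enter, and securing $\mathcal{E}^2 = \mathcal{E}$ rather than a mere contraction is the delicate part; a naive finite-interval average is only idempotent when the averaging annihilates every non-$\eig$ spectral contribution.

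Finally, I would assemble the projection argument. The set $\mathcal{S}_\eig = \ker(I - \mathcal{E})$ is a closed linear subspace since $I - \mathcal{E}$ is bounded, so the Hilbert projection theorem yields a unique minimizer $\phi_\eig$ of $\| g - \psi \|^2_\mu$ over $\psi \in \mathcal{S}_\eig$, characterized by $\phi_\eig \in \mathcal{S}_\eig$ together with $g - \phi_\eig \perp \mathcal{S}_\eig$. I would then verify that $\phi_\eig := \mathcal{E} g$ satisfies both: membership $\mathcal{E}(\mathcal{E} g) = \mathcal{E} g$ is exactly idempotency, and for any $\psi = \mathcal{E}\psi \in \mathcal{S}_\eig$, self-adjointness gives $\langle g - \mathcal{E} g, \psi \rangle_\mu = \langle g - \mathcal{E} g, \mathcal{E}\psi \rangle_\mu = \langle \mathcal{E} g - \mathcal{E}^2 g, \psi \rangle_\mu = 0$, again by idempotency. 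Hence $\mathcal{E} g$ is the unique minimizer, which is the claim.
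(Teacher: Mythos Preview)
Your plan matches the paper's route: the paper establishes the theorem by citing the symmetrization-as-orthogonal-projection results of \citet{elesedy21a} and \citet{Elesedy2023} (their Lemma~C.7, Corollary~C.8, Propositions~24--25, and Proposition~3.9), which supply exactly your steps (i)--(iv) and then invoke the Hilbert projection theorem. Your flag on idempotency is apt --- it is precisely the content carried by those citations (which rely on Haar-measure averaging over a compact group, where translation invariance makes $\mathcal{E}^2=\mathcal{E}$ automatic), and the paper does not argue it independently for the finite-interval average.
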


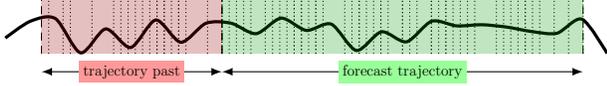
\begin{figure}[t!]
    \centering
    \usetikzlibrary{arrows.meta,
                quotes,             }
                \begin{tikzpicture}[scale=0.48, transform shape,
       every edge/.style = {draw, Straight Barb-Straight Barb},
every edge quotes/.style = {fill=white,font=\normalsize},
  doublearrow/.style={
    <->, 
    >=latex,
   every node/.style={fill=white!60!gray,font=\large}}]

\pgfmathsetseed{12}
\draw[very thick,draw = black,] plot[smooth,domain=0:10,samples=25] (\x/0.6,rnd);

\pgfmathsetmacro\FrameStart{1}
\pgfmathsetmacro\FrameW{5}
\pgfmathsetmacro\FrameOverlap{0.0}
\pgfmathsetmacro\FrameStep{\FrameW-\FrameOverlap}

\foreach[
  evaluate=\i as \y using {ifthenelse(mod(\i,2)==0,-0.5,-0.5)},
  evaluate=\i as \X using \i*\FrameStep+\FrameStart,
  count=\j
] \i in {0,...,1} {
  \draw [densely dashed] (\X,0) -- (\X,1.5);
}

\foreach[
  evaluate=\i as \y using {ifthenelse(mod(\i,2)==0,-0.5,-0.5)},
  evaluate=\i as \X using \i*\FrameStep+\FrameStart,
  count=\j
] \i in {0,...,0} {
  \draw [densely dashed] (\X+\FrameW,0) -- (\X+\FrameW,1.5);
  \draw [doublearrow] (\X,\y) -- node[fill=white!60!red]{trajectory past} (\X+\FrameW,\y);
}

\pgfmathsetmacro\FrameW{10}
\foreach[
  evaluate=\i as \X using \i*0.2+\FrameStart,
  count=\j
] \i in {1,...,70} {
\pgfmathsetmacro{\thenum}{int(random(0,1))}
  \draw [densely dotted] (\X+\thenum,0) -- (\X+\thenum,1.5);
}
\foreach[
  evaluate=\i as \y using {ifthenelse(mod(\i,2)==0,-0.5,-0.5)},
  evaluate=\i as \X using \i*\FrameStep+\FrameStart,
  count=\j
] \i in {1,...,1} {
  \draw [densely dashed] (\X+\FrameW,0) -- (\X+\FrameW,1.5);
  \draw [doublearrow] (\X,\y) -- node[fill=white!60!green]{forecast trajectory} (\X+\FrameW,\y);
}

\fill[fill=red!60!black,nearly transparent] (1,0) rectangle (6,1.5);

\fill[fill=green!60!black,nearly transparent] (6,0) rectangle (16,1.5);

\end{tikzpicture}
    \caption{Backward time equivariance interval (red) and the simulation-induced prediction horizon (green).}
    \label{fig:seq2seq}
\end{figure}

By construction, the symmetrization operator \eqref{eq:EqvarEO} renders every base function $g$ Koopman-equivariant on arbitrary past time intervals $[\tau_s,\tau_e]$, %
such that it allows us the straightforward design of Koopman-equivariant features $\phi_{\eig_j}(\cdot)$. To ensure causality of these features, we restrict ourselves to past trajectories, i.e., intervals $[\tau_s,0]$, which results in 
\begin{align}
    \phi_{\eig_j}(\bm{x}_{[\tau_s,0]}) = [\mathcal{E}_{\lambda_j}^{[\tau_s,0]} g](\bm{x}_0)).
\end{align}
Finally, since $\mathcal{E}_{\lambda_j}^{[\tau_s,0]}$ is a linear operator, we can exploit the closedness of Gaussian processes under linear operators \citep{Matsumoto2024} by placing a GP prior $g(\cdot)\sim\mathcal{GP}(0,k_{g}(\cdot,\cdot))$ on $g(\cdot)$ with arbitrary kernel $k_{g}(\cdot,\cdot)$, such that we obtain the Koopman-equivariant prior $\phi_{\eig_j}(\cdot)\sim\mathcal{GP}(0,k_{\phi_{\eig_j}}(\cdot,\cdot))$ with $k_{\phi_{\eig_j}}(\cdot,\cdot):=\mathcal{E}_{\eig_j}k_{g}(\cdot,\cdot^\prime)\mathcal{E}^{*}_{\eig_j}$, inducing a Koopman-equivariant spectral decomposition kernel 
\begin{align}\label{eq:KE-SDK}
  \textstyle   k^{\txt{KE}}_y((t,\cdot),(t',\cdot'))&:= \sum\limits_{j\in [D]}{a}_j(t,t^\prime)k_{\phi_{\eig_j}}(\cdot,\cdot),
\tag{${\txt{cov}_\text{\txt{KESD}}}$}
\end{align}
that exploits the full information in past trajectories in a structured way to allow predictions of the future evolution using \eqref{eq:KoopObs} as illustrated in Figure \ref{fig:seq2seq}.

\textbf{Practical considerations.~~}
In practice, our resolution of a trajectory is commonly limited by a sampling time, so we only have access to an empirical measure $\hat{\mu}$ for the expectation in \eqref{eq:EqvarEO}. Nevertheless, in most practical considerations and sufficiently regular trajectories, we will get a good sample-based approximation using quadrature so that $|\hat{\mathcal{E}}^{[\tau_s,0]}_{\eig}g-\KEop{[\tau_s,0]}{\eig}g|\approx 0$.\footnote{For a detailed treatment of this aspect cf. supplemental.}

\section{ANALYSIS OF SAMPLE COMPLEXITY}
\label{section:analysisofsamplecomplexity}
To analyze the sample complexity of regression \ref{itm:Eff}, we use the notion of \textit{information gain}, classical in the analysis of Gaussian processes  \citep{Srinivas2012Information-TheoreticSetting}. Our analysis allows us to put into perspective the sample complexity gains of using the proposed operator-theoretic GP w.r.t. more generic and less structured nonlinear models for dynamical systems. Thus, the following complexity study is a first in the literature.
Given the generalized additive structure of our \textit{spectral decomposition covariance} \eqref{eq:SDK}, we quantify the sample-complexity of learning using the well-established notion of maximal {information gain} 
\begin{align}\label{eq:IGbase}
\gamma^{\sigma}_N(k):=\sup _{\bm{x}_N \subseteq \Set{X}} I\left(\bm{y}_N ; y\right)=\frac{1}{2}\operatorname{log}|\bm{I}_N+{\sigma^{-2}}\bm{K}_N|,
\end{align}
that measures the interaction between the data, observation noise, and kernel. This quantity frequently appears in the analysis of the generalization or worst-case estimation error of Gaussian processes \citep{Krause2011ContextualOptimization,pmlr-v130-vakili21a}. The less complex the feature map of the kernel on the same $\Set{X}$, the smaller \eqref{eq:IGbase} will be, implying better statistical efficiency.

\subsection{Mercer Eigenvalues as a Proxy to Information Gain} To study the effect of general kernels on the complexity of learning, we will rely on Mercer's theorem \citep{Mercer1909FunctionsEquations} which states that for a well-behaved $k_x$, it can be expressed via the series expansion
\begin{align}\label{eq:Mercer}
   \textstyle k_x\left(\bm{x}, \bm{x}^{\prime}\right)=\sum_{j=1}^{\infty} \mu_j \varphi_j(\bm{x}) \varphi_j\left(\bm{x}^{\prime}\right),
\end{align}
such that $\{\sqrt{\mu_j}\varphi_j\}_{j=1}^{\infty}$ form an orthornormal basis of $L^2(\Set{X})$ with respect to a finite Borel measure\footnote{Generalization to more general input spaces is straightforward \citep{Steinwart2012MercersTO}.}. The complexity bounds we derive in this work will depend on \textit{how rapidly the eigenvalues $\{\mu_j\}_{j=1}^{\infty} \subseteq \Set{R}_{+}$ decay}. The decay of these eigenvalues is closely related to the complexity of the nonparametric model as well as the generalization properties of the posterior \citep{Micchelli1979DesignPF}. Generally, these eigenvalues decay faster for covariate distributions that are concentrated in a small volume and for kernels that give smooth mean predictors \citep{widom_asymptotic_1963,widom1964asymptotic}. Thus, the bounds we prove here verify the intuition that our Koopman-equivariant covariance can provide an improved finite-sample performance \ref{itm:Eff}. 
To analyze the effects of the induced Koopman equivariance on the sample complexity, some assumptions are needed:
\begin{description}[style=multiline, leftmargin=3em,font=\normalfont,parsep=0.01em]
    \item[\namedlabel{asm:Hreg}{(\txt{HR})}]  \textit{Regularity of the hypothesis:} \textbf{a)} $k_x$ is a Mercer kernel \citep{Mercer1909FunctionsEquations}. \textbf{b)} $\forall \bm{x}, \bm{x}^{\prime} \in$ $\Set{X},\left|k_x\left(\bm{x}, \bm{x}^{\prime}\right)\right| \leq \bar{k}$, for some $\bar{k}>0$ \textbf{c)} $\forall j \in \mathbb{N}, \forall \bm{x} \in \Set{X}$, $\left|\varphi_j(\bm{x})\right| \leq r$, for some $r>0$.
    \item[\namedlabel{asm:Sspec}{(\txt{WS})}]  \textit{The past trajectory interval $[\tau_s,\tau_e]$ and the set of initial conditions form a non-recurrent domain.} 
    \item[\namedlabel{asm:Areg}{(\txt{OR})}]  \textit{The operator $A_{t=1}:=A_1=\sum^{\infty}_{j=1}\exp{\eig_j }\KEop{[\tau_s,0]}{\eig_j}$ is a compact normal operator.} 
\end{description}
Generally, \ref{asm:Hreg} is a mild requirement and is fulfilled for continuous kernels on compact domains \citep{Wang2022}, while \ref{asm:Areg} is classical for limiting the ill-posedness of inverse problems \citep{Cavalier_2008}. \ref{asm:Sspec} is a mild technical assumption and allows for a well-specified symmetrization and Theorem \ref{thm:Symm} non-vacuous, as it ensures the existence of uncountably many functions satisfying Definition \ref{def:KEIGS} for any eigenvalue \citep[Appendix A]{KKR_neurips2023}.
For our  technical results, we differentiate between \textit{mildly} and \textit{severely} ill-posed setting, based on \textit{exponential} and \textit{polynomial} $\eig_j(A_1)$ decay rates, respectively.
\begin{remark}[Strict complexity reduction]\label{rmk:reduction}
    A direct consequence of well-specified equivariance \ref{asm:Sspec} is a guaranteed \emph{strict} reduction in the effective dimension \citep{Elesedy2021B}, which is known to equal the information gain up to logarithmic factors \citep{pmlr-v151-zenati22a}.\looseness=-1
\end{remark}
To study information gain rates for a general class of kernels (that includes our own), we will rely on recent results based on spectral decay properties of kernels \citep{pmlr-v130-vakili21a} and can state the following.
\begin{theorem}\label{th:infogain_asymp}
Consider the Mercer eigenvalues $\{\mu_j\}^{\infty}_{j=1}$ for $k_x$ and let Assumptions \ref{asm:Hreg},\ref{asm:Sspec} and \ref{asm:Areg} hold. Then $\exists \theta \geq 1$ for\vspace{-1em}
\begin{description}[style=multiline, leftmargin=3em,font=\normalfont,noitemsep]
    \item[\namedlabel{case:PD}{(\txt{Poly})}] $\eig_j(A_1)\lesssim j^{-p} \wedge \mu_j\lesssim j^{-a}$, $a>1$ or
    \item[\namedlabel{case:ED}{(\txt{Exp})}] $\eig_j(A_1)\lesssim \exp{-j^p} \wedge~\mu_j \lesssim  \exp{-j^b}$, $b>0$ so that
\end{description}
\[
    \gamma^{\sigma}_N\left(k^{\txt{KE}}_{y}\right)  \lesssim
    \tilde{\mathcal{O}}(\left(\gamma^{\sigma}_N(k_{x})\right)^{\nicefrac{1}{\theta}})
\]
where $\theta=\frac{\max\{2p,a\}}{a}$ \ref{case:PD} and $\theta=\frac{\max\{2p,b\}}{b}$ \ref{case:ED}.
\end{theorem}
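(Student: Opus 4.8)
The plan is to reduce the statement to a question about the decay of the Mercer eigenvalues of $k^{\txt{KE}}_y$, and then transfer that decay through the general spectral information-gain bounds of \citet{pmlr-v130-vakili21a}. Concretely, I would first invoke their result: whenever a bounded Mercer kernel has integral-operator eigenvalues $\nu_k$ with polynomial decay $\nu_k\lesssim k^{-c}$ one has $\gamma^{\sigma}_N\lesssim \tilde{\mathcal{O}}(N^{1/c})$, and for exponential decay $\nu_k\lesssim \exp{-k^c}$ one has $\gamma^{\sigma}_N\lesssim \tilde{\mathcal{O}}((\log N)^{1+\nicefrac{1}{c}})$. Applied to $k_x$ this fixes the reference rate $\gamma^{\sigma}_N(k_x)$, so the theorem reduces to showing that the effective spectral decay exponent of $k^{\txt{KE}}_y$ is $\max\{2p,a\}$ in case \ref{case:PD} and the exponential analogue governed by $\max\{2p,b\}$ in case \ref{case:ED}. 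The final form $\tilde{\mathcal{O}}((\gamma^{\sigma}_N(k_x))^{\nicefrac{1}{\theta}})$ then follows from the elementary identity $N^{1/\max\{2p,a\}}=(N^{1/a})^{a/\max\{2p,a\}}$, reading off $\theta=\max\{2p,a\}/a\ge 1$, and the parallel bookkeeping for the logarithmic rate in \ref{case:ED}.

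The heart of the argument is a spectral comparison between $k^{\txt{KE}}_y$ and $k_x$. Here I would use that each spatial factor $k_{\phi_{\eig_j}}=\KEop{[\tau_s,0]}{\eig_j}k_{g}\,\KEop{[\tau_s,0]}{\eig_j}{}^{\!\star}$ has integral operator equal to the base-kernel operator $T_{k_x}$ sandwiched on both sides by the self-adjoint symmetrization operator of Theorem \ref{thm:Symm}, and that under \ref{asm:Areg} the combined object $A_1=\sum_j\exp{\eig_j}\KEop{[\tau_s,0]}{\eig_j}$ is a compact normal operator whose singular values are precisely the $\eig_k(A_1)$ appearing in the hypotheses. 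Assumption \ref{asm:Sspec} guarantees the symmetrization is well specified, so each $\KEop{[\tau_s,0]}{\eig_j}$ is a genuine nonzero operator inheriting the decay of $A_1$ and the problem is non-vacuous. The two-sided (sandwich) structure is what produces the factor $2$ on $p$: writing $\mathcal{E}$ for a generic symmetrization operator with $s_k(\mathcal{E})\asymp\eig_k(A_1)$, the multiplicative Weyl–Horn inequality for singular values of products gives $s_{2k-1}(\mathcal{E} T_{k_x}\mathcal{E}^{\star})\lesssim s_k(\mathcal{E})^2\,\norm{T_{k_x}}$, while the trivial operator-norm bound gives $s_k(\mathcal{E} T_{k_x}\mathcal{E}^{\star})\lesssim\norm{\mathcal{E}}^2\,s_k(T_{k_x})$. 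Combining the two yields, rate-wise, $\nu_k\lesssim\min\{\eig_k(A_1)^2,\mu_k\}$, whose polynomial decay exponent is exactly $\max\{2p,a\}$; the same pair of bounds gives the exponential analogue feeding \ref{case:ED}.

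With the effective decay in hand I would close by inserting $\nu_k\lesssim\min\{\eig_k(A_1)^2,\mu_k\}$ back into the Vakili bound. In \ref{case:PD} this is immediate and gives $\gamma^{\sigma}_N(k^{\txt{KE}}_y)\lesssim\tilde{\mathcal{O}}(N^{1/\max\{2p,a\}})=\tilde{\mathcal{O}}((\gamma^{\sigma}_N(k_x))^{\nicefrac{1}{\theta}})$ with $\theta=\max\{2p,a\}/a$, where $\theta\ge 1$ since $\max\{2p,a\}\ge a$. Case \ref{case:ED} is handled the same way, tracking the decay exponent through the logarithmic balance $D\log N\asymp N\sum_{k>D}\nu_k$ that underlies the exponential information-gain rate and recording the outcome as the power $\nicefrac{1}{\theta}$ with $\theta=\max\{2p,b\}/b$.

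The step I expect to be the main obstacle is the spectral comparison itself, for two reasons. First, $\gamma^{\sigma}_N$ is a supremum over finite sample configurations through the empirical Gram matrix $\bm{K}_N$, so the clean operator-level singular-value transfer must be married to a uniform control of empirical versus population (Mercer) eigenvalues; the boundedness conditions \ref{asm:Hreg}\,(b)–(c) on $k_x$ and on the features $\varphi_j$ are exactly what make this passage quantitative, and care is required so that the hidden constants stay independent of $N$. Second, $k^{\txt{KE}}_y$ lives on the joint space–time domain and is a finite sum over $j\in[D]$ of temporal weights $a_j(t,t')$ times sandwiched spatial kernels; I would need to verify that these temporal/spectral weights are fully absorbed into the singular values of $A_1$, so that the joint decay is governed by $\min\{\eig_k(A_1)^2,\mu_k\}$ rather than by an uncontrolled product over the two indices — which is precisely where the compact normality of $A_1$ from \ref{asm:Areg} is indispensable.
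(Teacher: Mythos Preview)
Your proposal follows essentially the same route as the paper: reduce the question to the Mercer eigenvalue decay of $k^{\txt{KE}}_y$, transfer that decay through \citet{pmlr-v130-vakili21a}, and obtain the decay via a singular-value inequality for the operator product $A_1 T_{k_x} A_1^\star$. The one tactical difference worth noting is that the paper avoids your Weyl--Horn triple-product bound by first factoring $T_{k_x}=\iota_x\iota_x^\top$ (Lemma~\ref{lem:MercerApp}), so that $\eig_j(A_1 T_{k_x} A_1^\star)=s_j(A_1\iota_x)^2$, and then applies the simpler two-factor bound $s_j(AB)\le\min\{\|B\|s_j(A),\|A\|s_j(B)\}$ (Lemma~\ref{lem:CompDecay}) directly to $A_1\iota_x$, using $s_j(\iota_x)=\sqrt{\mu_j}$. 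This gives the same rate $\nu_j\lesssim\min\{\eig_j(A_1)^2,\mu_j\}$ without the index shift $2k-1$, and it also dissolves your worry about ``a generic symmetrization operator $\mathcal{E}$ with $s_k(\mathcal{E})\asymp\eig_k(A_1)$'': the individual $\KEop{[\tau_s,0]}{\eig_j}$ are projections with singular values in $\{0,1\}$, so the decay lives only in the aggregate $A_1$, and the paper works with $A_1$ from the outset rather than a stand-in $\mathcal{E}$.

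Two smaller remarks. First, your concern about marrying operator-level decay with the empirical supremum in $\gamma^{\sigma}_N$ is unnecessary: the result of \citet{pmlr-v130-vakili21a} already bounds the supremum over sample configurations in terms of the population Mercer eigenvalues, and the conditions \ref{asm:Hreg}\,(b)--(c) are precisely their standing hypotheses. Second, in case \ref{case:ED} the squaring gives $\nu_j\lesssim\exp{-\max\{2j^p,j^b\}}$, but asymptotically $\max\{2j^p,j^b\}\asymp j^{\max\{p,b\}}$ (the constant $2$ is subleading), so the paper's proof in fact arrives at $\theta=\max\{p,b\}/b$ rather than the $\max\{2p,b\}/b$ printed in the main-text statement; you should be aware of this discrepancy when writing up the exponential case.
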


The above result summarizes the rate gains from the Koopman-equivariant Gaussian process. In case the equivariance operator has a sufficiently strong singular value decay, i.e., $\theta > 1$, the information gain of our Koopman-equivariant GP with covariance \eqref{eq:KE-SDK} may be much smaller than for \eqref{eq:SDK}. Crucially, $\theta \geq 1$ is guaranteed, so a slow decay of the operator eigenvalues values will not deteriorate the already existing eigenvalue decay of $\{\mu_j\}^{\infty}_{j=1}$. As \ref{asm:Areg} plays the role of a feature extractor, our result suggests one could obtain a significantly improved rate when $\eig_j(A_1A_1^*)$ has a fast decay, signaling an induced RKHS with low complexity.\looseness=-1

The significance of the asymptotic rates for the maximum information gain when using \eqref{eq:KE-SDK} provided by Theorem~\ref{th:infogain_asymp} becomes clear when comparing the rates to the ones of other kernels as summarized in Table~\ref{tab:IGcomp}. For example, when using a na\"{i}ve contextual (spatio-temporal) kernel $k^{\txt{SE}}(t,t^\prime)\otimes k^{\txt{SE}}(\bm{x}_0,\bm{x}^\prime_i)$ \citep{pmlr-v151-zenati22a} defined over a joint spatio-temporal domain \citep{Li2024STkernel} via RBF kernels\footnote{The SE kernel is used for ease of exposition, but our results cover large classes of Mercer kernels.} $k^{\txt{SE}}$, it is well known that the maximum information gain behaves as $\tilde{\mathcal{O}}(\log(N)^{n+2})$. Due to the LTI features $a_j$ in \eqref{eq:SDK} for describing temporal correlations, the information gain for \ref{eq:SDK} reduces to $\tilde{\mathcal{O}}(\log(N)^{n+1})$ \citep{MojmírPHD}. In contrast, our proposed kernel  \ref{eq:KE-SDK} can exploit the inherent structure imposed by dynamical systems through Koopman-equivariance, such that a complexity of $\tilde{\mathcal{O}}(\log(N)^{\frac{n}{\theta}+1})$ is guaranteed when using SE kernels as the basis for $k_{\phi_{\lambda_j}}$ in \eqref{eq:KE-SDK}. Hence, for $\theta>1$, we virtually counteract the curse of spatial dimensionality that comes from the generic and measure-agnostic bounds on the eigenvalue decay for popular kernels \citep{pmlr-v75-belkin18a}. Note that, due to employing Koompan-equivariance (Theorem \ref{thm:Symm}), the sample complexity is not impeded by the length or time-discretization of a continuous-time trajectory, which sets \eqref{eq:KE-SDK} apart from kernels agnostic of the dynamical systems properties as illustrated in Table~\ref{tab:IGcomp}.\looseness=-1

\begin{table}[t!]%
    \caption{Worst-case information gain (w/o $\log$ factors) for universal RBF base kernel $k^{\txt{SE}}$. Under mild conditions, $\theta  \geq 1$ guarantees reduced sample complexity. The number of discretization steps necessary to handle trajectory inputs in na\"{i}ve kernels and \eqref{eq:SDK} is denoted by $|G|$.}
    \label{tab:IGcomp}
    \setlength{\tabcolsep}{5pt}
    \centering
    \scriptsize
    \vspace{-2ex}
    \begin{tabular}{l|ccc}
    \toprule
        ${\gamma^{\sigma}_N(\cdot)}$ & na\"{i}ve & \eqref{eq:SDK}  & \eqref{eq:KE-SDK} \\
        \midrule
       $\bm{x}_0$ &  $\tilde{\mathcal{O}}(\log(N)^{n{+}2})$ & $\tilde{\mathcal{O}}(\log(N)^{n+1})$ & --- \\
       $\bm{x}_{{\tau_s{,}0}}$ &$\tilde{\mathcal{O}}(\log(N)^{|G|n{+}2})$ & $\tilde{\mathcal{O}}(\log(N)^{|G|n{+}1})$ & $\tilde{\mathcal{O}}(\log(N)^{\frac{n}{\theta}{+}1})$\\
        \bottomrule
    \end{tabular}
\end{table}

\section{VARIATIONAL INFERENCE FOR KOOPMAN-EQUIVARIANT GPs}

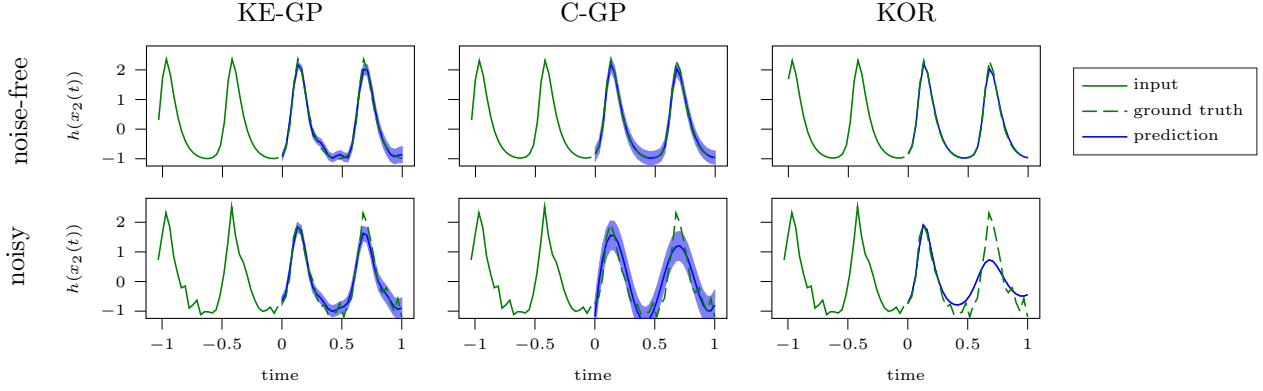
\begin{figure*}[t!]
    \setlength{\tabcolsep}{1pt}
    \renewcommand{\arraystretch}{0.25}
    \centering
    \begin{tabular}{cccccc} 
    \begin{tikzpicture}
            \node[rotate=90] at (0,0) {noise-free};
            \node at (0,-1.) {~};
    \end{tikzpicture}
    & ~&\begin{tikzpicture}
\definecolor{darkgray176}{RGB}{176,176,176}
\definecolor{green}{RGB}{0,128,0}
\definecolor{lightgray204}{RGB}{204,204,204}

\begin{axis}[
    width=.3\textwidth,
    height=.3\textwidth/1.618,
legend cell align={left},
legend style={
  fill opacity=0.8,
  draw opacity=1,
  text opacity=1,
  at={(0.91,0.5)},
  anchor=east,
  draw=lightgray204
},
tick align=outside,
tick pos=left,
title={},
x grid style={darkgray176},
xmin=-1.13387096774194, xmax=1.10161290322581,
yticklabel style = {font = \tiny},
xticklabel style = {font = \tiny},
xtick style={color=black},
y grid style={darkgray176},
ylabel={\tiny $h(x_2(t))$},
ymin=-1.24434151103344, ymax=2.80816290983449,
ytick style={color=black},
title = {KE-GP},
xticklabels={,,}
]
\path [draw=blue, fill=blue, opacity=0.5]
(axis cs:0,-1.02140400442368)
--(axis cs:0,-1.02140400442368)
--(axis cs:0.0322580645161294,-0.652495510938043)
--(axis cs:0.064516129032258,0.262354230312171)
--(axis cs:0.096774193548387,1.38587788384001)
--(axis cs:0.129032258064516,2.07107751409938)
--(axis cs:0.161290322580645,1.91304479941786)
--(axis cs:0.193548387096774,1.096045791632)
--(axis cs:0.225806451612903,0.200357004994826)
--(axis cs:0.258064516129032,-0.326932623751153)
--(axis cs:0.290322580645161,-0.495624209343708)
--(axis cs:0.32258064516129,-0.596688569965196)
--(axis cs:0.354838709677419,-0.808100150089652)
--(axis cs:0.387096774193548,-1.03672170704366)
--(axis cs:0.419354838709678,-1.10878441422885)
--(axis cs:0.451612903225806,-1.03331327373072)
--(axis cs:0.483870967741935,-0.99727272098956)
--(axis cs:0.516129032258065,-1.08882645506894)
--(axis cs:0.548387096774194,-1.10726816024921)
--(axis cs:0.580645161290323,-0.723733555461454)
--(axis cs:0.612903225806452,0.152487838763009)
--(axis cs:0.645161290322581,1.18784215908192)
--(axis cs:0.67741935483871,1.84425611957173)
--(axis cs:0.709677419354839,1.80857170010343)
--(axis cs:0.741935483870968,1.21010470785975)
--(axis cs:0.774193548387097,0.445552659859597)
--(axis cs:0.806451612903226,-0.170389730294337)
--(axis cs:0.838709677419355,-0.582415954342097)
--(axis cs:0.870967741935484,-0.87900913355136)
--(axis cs:0.903225806451613,-1.09089609316336)
--(axis cs:0.935483870967742,-1.16971564012116)
--(axis cs:0.967741935483871,-1.13517934879613)
--(axis cs:1,-1.13540273943964)
--(axis cs:1,-1.13540273943964)
--(axis cs:1,-0.589336779426036)
--(axis cs:0.967741935483871,-0.628004884616578)
--(axis cs:0.935483870967742,-0.685345208139557)
--(axis cs:0.903225806451613,-0.626077562117123)
--(axis cs:0.870967741935484,-0.433322283840301)
--(axis cs:0.838709677419355,-0.154140726890767)
--(axis cs:0.806451612903226,0.241992316668614)
--(axis cs:0.774193548387097,0.840905006264103)
--(axis cs:0.741935483870968,1.58841412615865)
--(axis cs:0.709677419354839,2.17141849752468)
--(axis cs:0.67741935483871,2.19181159353498)
--(axis cs:0.645161290322581,1.52129728516532)
--(axis cs:0.612903225806452,0.472803433036172)
--(axis cs:0.580645161290323,-0.418231775360382)
--(axis cs:0.548387096774194,-0.814981052272685)
--(axis cs:0.516129032258065,-0.805812210943319)
--(axis cs:0.483870967741935,-0.723072183173576)
--(axis cs:0.451612903225806,-0.766537276945688)
--(axis cs:0.419354838709678,-0.846024662513243)
--(axis cs:0.387096774193548,-0.775135242656734)
--(axis cs:0.354838709677419,-0.545246972883508)
--(axis cs:0.32258064516129,-0.337834021537275)
--(axis cs:0.290322580645161,-0.249386944092397)
--(axis cs:0.258064516129032,-0.089932232208076)
--(axis cs:0.225806451612903,0.433451550518775)
--(axis cs:0.193548387096774,1.32119318186608)
--(axis cs:0.161290322580645,2.12897982235537)
--(axis cs:0.129032258064516,2.28022951849028)
--(axis cs:0.096774193548387,1.58874655999785)
--(axis cs:0.064516129032258,0.460734887662643)
--(axis cs:0.0322580645161294,-0.458020404542072)
--(axis cs:0,-0.830871395563078)
--cycle;

\addplot [semithick, blue]
table {%
0 -0.926137699993381
0.0322580645161294 -0.555257957740058
0.064516129032258 0.361544558987407
0.096774193548387 1.48731222191893
0.129032258064516 2.17565351629483
0.161290322580645 2.02101231088661
0.193548387096774 1.20861948674904
0.225806451612903 0.316904277756801
0.258064516129032 -0.208432427979615
0.290322580645161 -0.372505576718053
0.32258064516129 -0.467261295751236
0.354838709677419 -0.67667356148658
0.387096774193548 -0.905928474850199
0.419354838709678 -0.977404538371049
0.451612903225806 -0.899925275338205
0.483870967741935 -0.860172452081568
0.516129032258065 -0.947319333006131
0.548387096774194 -0.961124606260949
0.580645161290323 -0.570982665410918
0.612903225806452 0.312645635899591
0.645161290322581 1.35456972212362
0.67741935483871 2.01803385655336
0.709677419354839 1.98999509881406
0.741935483870968 1.3992594170092
0.774193548387097 0.64322883306185
0.806451612903226 0.0358012931871385
0.838709677419355 -0.368278340616432
0.870967741935484 -0.65616570869583
0.903225806451613 -0.858486827640242
0.935483870967742 -0.92753042413036
0.967741935483871 -0.881592116706355
1 -0.86236975943284
};
\addlegendentry{Prediction}
\addplot [semithick, green]
table {%
-1.03225806451613 0.310677215876974
-1 1.72509317079231
-0.967741935483871 2.35958067123089
-0.935483870967742 1.85373346773129
-0.903225806451613 1.06436987498561
-0.870967741935484 0.39375284056191
-0.838709677419355 -0.0953849947985607
-0.806451612903226 -0.432355212837299
-0.774193548387097 -0.658219795153289
-0.741935483870968 -0.806783922492113
-0.709677419354839 -0.902169203461053
-0.67741935483871 -0.960134910896678
-0.645161290322581 -0.989464930765762
-0.612903225806452 -0.991868128129937
-0.580645161290323 -0.957961550613222
-0.548387096774193 -0.851288713427186
-0.516129032258065 -0.554026202604845
-0.483870967741935 0.243038106393077
-0.451612903225806 1.65036100981229
-0.419354838709677 2.36177948342126
-0.387096774193548 1.89633107767594
-0.354838709677419 1.10719751036095
-0.32258064516129 0.4266122896035
-0.290322580645161 -0.0722737325925517
-0.258064516129032 -0.416693101024244
-0.225806451612903 -0.64782458667783
-0.193548387096774 -0.80001455124028
-0.161290322580645 -0.897905560333097
-0.129032258064516 -0.957683656959815
-0.096774193548387 -0.988507495349605
-0.064516129032258 -0.992535488744642
-0.032258064516129 -0.961254120271588
};
\addlegendentry{Condition}
\addplot [semithick, green, dash pattern=on 5.55pt off 2.4pt]
table {%
0 -0.860613365987377
0.0322580645161294 -0.580085256941532
0.064516129032258 0.177945140645911
0.096774193548387 1.5730115929598
0.129032258064516 2.35986002626562
0.161290322580645 1.93810188327795
0.193548387096774 1.15045548523349
0.225806451612903 0.460068281182399
0.258064516129032 -0.0486707058542165
0.290322580645161 -0.400673957842
0.32258064516129 -0.637181665597286
0.354838709677419 -0.793075190688397
0.387096774193548 -0.893522844289506
0.419354838709678 -0.955142308935798
0.451612903225806 -0.987467578406421
0.483870967741935 -0.993096551558174
0.516129032258065 -0.964343694128448
0.548387096774194 -0.869419253002998
0.580645161290323 -0.604680161786228
0.612903225806452 0.115439309275864
0.645161290322581 1.49344249770474
0.67741935483871 2.3536498063136
0.709677419354839 1.97891203269848
0.741935483870968 1.19411427928786
0.774193548387097 0.494122687936025
0.806451612903226 -0.024568177750323
0.838709677419355 -0.384290661890491
0.870967741935484 -0.626285716196433
0.903225806451613 -0.785962214418609
0.935483870967742 -0.889018792725728
0.967741935483871 -0.952509769733059
1 -0.986345405974012
};
\addlegendentry{Target}
\legend{}
\end{axis}

\end{tikzpicture}
    & \begin{tikzpicture}
\definecolor{darkgray176}{RGB}{176,176,176}
\definecolor{green}{RGB}{0,128,0}
\definecolor{lightgray204}{RGB}{204,204,204}

\begin{axis}[
    width=.3\textwidth,
    height=.3\textwidth/1.618,
legend cell align={left},
legend style={
  fill opacity=0.8,
  draw opacity=1,
  text opacity=1,
  at={(0.91,0.5)},
  anchor=east,
  draw=lightgray204
},
tick align=outside,
tick pos=left,
title={},
x grid style={darkgray176},
xmin=-1.13387096774194, xmax=1.10161290322581,
yticklabel style = {font = \tiny},
xticklabel style = {font = \tiny},
xtick style={color=black},
y grid style={darkgray176},
yticklabels={,,},
ymin=-1.24434151103344, ymax=2.80816290983449,
ytick style={color=black},
title = {C-GP},
xticklabels={,,}
]
\path [draw=blue, fill=blue, opacity=0.5]
(axis cs:0,-1.05108515937071)
--(axis cs:0,-1.05108515937071)
--(axis cs:0.0322580645161294,-0.793160590795055)
--(axis cs:0.064516129032258,-0.0603062917985711)
--(axis cs:0.096774193548387,1.22882234949531)
--(axis cs:0.129032258064516,1.98426786044561)
--(axis cs:0.161290322580645,1.66841311569064)
--(axis cs:0.193548387096774,0.958310780161869)
--(axis cs:0.225806451612903,0.306394904418292)
--(axis cs:0.258064516129032,-0.212888207688193)
--(axis cs:0.290322580645161,-0.582703101588369)
--(axis cs:0.32258064516129,-0.835874367322634)
--(axis cs:0.354838709677419,-1.00203768354496)
--(axis cs:0.387096774193548,-1.10714748485308)
--(axis cs:0.419354838709678,-1.17140308919753)
--(axis cs:0.451612903225806,-1.19965554864374)
--(axis cs:0.483870967741935,-1.19530653556042)
--(axis cs:0.516129032258065,-1.15936290453452)
--(axis cs:0.548387096774194,-1.0803418364458)
--(axis cs:0.580645161290323,-0.800198044363484)
--(axis cs:0.612903225806452,0.00716944377067719)
--(axis cs:0.645161290322581,1.16755075412088)
--(axis cs:0.67741935483871,1.8172387957578)
--(axis cs:0.709677419354839,1.58508429550798)
--(axis cs:0.741935483870968,0.974585357927645)
--(axis cs:0.774193548387097,0.357310713603246)
--(axis cs:0.806451612903226,-0.15783710235564)
--(axis cs:0.838709677419355,-0.531319888487658)
--(axis cs:0.870967741935484,-0.798065930451727)
--(axis cs:0.903225806451613,-0.97783090509574)
--(axis cs:0.935483870967742,-1.09261986576696)
--(axis cs:0.967741935483871,-1.15862392448967)
--(axis cs:1,-1.18635885172024)
--(axis cs:1,-1.18635885172024)
--(axis cs:1,-0.741559354342373)
--(axis cs:0.967741935483871,-0.713829125691939)
--(axis cs:0.935483870967742,-0.647831583308663)
--(axis cs:0.903225806451613,-0.533053684862027)
--(axis cs:0.870967741935484,-0.353302013267092)
--(axis cs:0.838709677419355,-0.0865646795851854)
--(axis cs:0.806451612903226,0.286913791961969)
--(axis cs:0.774193548387097,0.802061551717969)
--(axis cs:0.741935483870968,1.41933709365362)
--(axis cs:0.709677419354839,2.0298323542427)
--(axis cs:0.67741935483871,2.26198510247366)
--(axis cs:0.645161290322581,1.61229702312937)
--(axis cs:0.612903225806452,0.45191606222082)
--(axis cs:0.580645161290323,-0.355451984459446)
--(axis cs:0.548387096774194,-0.635596974629444)
--(axis cs:0.516129032258065,-0.714619200204716)
--(axis cs:0.483870967741935,-0.750562600662023)
--(axis cs:0.451612903225806,-0.754910083090394)
--(axis cs:0.419354838709678,-0.726655065900912)
--(axis cs:0.387096774193548,-0.662396999060331)
--(axis cs:0.354838709677419,-0.557290313339431)
--(axis cs:0.32258064516129,-0.391130080617812)
--(axis cs:0.290322580645161,-0.137956495689864)
--(axis cs:0.258064516129032,0.231857750074283)
--(axis cs:0.225806451612903,0.751141053470489)
--(axis cs:0.193548387096774,1.40305874079444)
--(axis cs:0.161290322580645,2.11316494792416)
--(axis cs:0.129032258064516,2.4290291703633)
--(axis cs:0.096774193548387,1.6735944055097)
--(axis cs:0.064516129032258,0.384476746922106)
--(axis cs:0.0322580645161294,-0.348367611066797)
--(axis cs:0,-0.606285059749114)
--cycle;

\addplot [semithick, blue]
table {%
0 -0.828685109559914
0.0322580645161294 -0.570764100930926
0.064516129032258 0.162085227561768
0.096774193548387 1.4512083775025
0.129032258064516 2.20664851540445
0.161290322580645 1.8907890318074
0.193548387096774 1.18068476047816
0.225806451612903 0.528767978944391
0.258064516129032 0.00948477119304473
0.290322580645161 -0.360329798639116
0.32258064516129 -0.613502223970223
0.354838709677419 -0.779663998442196
0.387096774193548 -0.884772241956707
0.419354838709678 -0.94902907754922
0.451612903225806 -0.977282815867066
0.483870967741935 -0.972934568111222
0.516129032258065 -0.93699105236962
0.548387096774194 -0.857969405537623
0.580645161290323 -0.577825014411465
0.612903225806452 0.229542752995749
0.645161290322581 1.38992388862513
0.67741935483871 2.03961194911573
0.709677419354839 1.80745832487534
0.741935483870968 1.19696122579063
0.774193548387097 0.579686132660608
0.806451612903226 0.0645383448031647
0.838709677419355 -0.308942284036422
0.870967741935484 -0.57568397185941
0.903225806451613 -0.755442294978883
0.935483870967742 -0.87022572453781
0.967741935483871 -0.936226525090803
1 -0.963959103031306
};
\addlegendentry{Prediction}
\addplot [semithick, green]
table {%
-1.03225806451613 0.303603707030286
-1 1.69110343786114
-0.967741935483871 2.31351669120059
-0.935483870967742 1.81729572009365
-0.903225806451613 1.04295365427054
-0.870967741935484 0.385098409128619
-0.838709677419355 -0.0947311717972172
-0.806451612903226 -0.425288873159369
-0.774193548387097 -0.646855270285042
-0.741935483870968 -0.792592233996031
-0.709677419354839 -0.886162340589994
-0.67741935483871 -0.94302496518416
-0.645161290322581 -0.971796837088032
-0.612903225806452 -0.974154301795878
-0.580645161290323 -0.940892963775511
-0.548387096774193 -0.836250102269154
-0.516129032258065 -0.54464447358633
-0.483870967741935 0.237251764466004
-0.451612903225806 1.61779342401137
-0.419354838709677 2.31567366016837
-0.387096774193548 1.85908270088021
-0.354838709677419 1.08496628312198
-0.32258064516129 0.417332545446167
-0.290322580645161 -0.0720597150884879
-0.258064516129032 -0.409924810099038
-0.225806451612903 -0.636657881807129
-0.193548387096774 -0.785951683345303
-0.161290322580645 -0.881979834254542
-0.129032258064516 -0.940620358416416
-0.096774193548387 -0.970857621592344
-0.064516129032258 -0.974808962590712
-0.032258064516129 -0.944122876092924
};
\addlegendentry{Condition}
\addplot [semithick, green, dash pattern=on 5.55pt off 2.4pt]
table {%
0 -0.845397307426445
0.0322580645161294 -0.570207626173946
0.064516129032258 0.173397512723927
0.096774193548387 1.54191596046199
0.129032258064516 2.31379073013104
0.161290322580645 1.90005861134556
0.193548387096774 1.12740106214391
0.225806451612903 0.450151872143582
0.258064516129032 -0.0489058520877034
0.290322580645161 -0.394210509948319
0.32258064516129 -0.626217494675471
0.354838709677419 -0.779144378277501
0.387096774193548 -0.877680520951649
0.419354838709678 -0.938127372044891
0.451612903225806 -0.969837494186583
0.483870967741935 -0.975359348423147
0.516129032258065 -0.947153655602924
0.548387096774194 -0.854035619106571
0.580645161290323 -0.594334491918945
0.612903225806452 0.112081162395787
0.645161290322581 1.463861058815
0.67741935483871 2.30769869017498
0.709677419354839 1.94009214684081
0.741935483870968 1.17022903279026
0.774193548387097 0.483558226222138
0.806451612903226 -0.0252619931929575
0.838709677419355 -0.378138986826819
0.870967741935484 -0.615528894332488
0.903225806451613 -0.772166761385214
0.935483870967742 -0.873262181133951
0.967741935483871 -0.935544929855583
1 -0.968736676607071
};
\addlegendentry{Target}
\legend{}
\end{axis}

\end{tikzpicture}
    &\begin{tikzpicture}
\definecolor{darkgray176}{RGB}{176,176,176}
\definecolor{green}{RGB}{0,128,0}
\definecolor{lightgray204}{RGB}{204,204,204}

\begin{axis}[
    width=.3\textwidth,
    height=.3\textwidth/1.618,
legend cell align={left},
legend style={
  fill opacity=0.8,
  draw opacity=1,
  text opacity=1,
  at={(0.91,0.5)},
  anchor=east,
  draw=lightgray204
},
tick align=outside,
tick pos=left,
title={},
x grid style={darkgray176},
xmin=-1.13387096774194, xmax=1.10161290322581,
yticklabel style = {font = \tiny},
xticklabel style = {font = \tiny},
xtick style={color=black},
y grid style={darkgray176},
ymin=-1.24434151103344, ymax=2.80816290983449,
ytick style={color=black},
yticklabels={,,},
title = {KOR},
xticklabels={,,}
]
\addplot [semithick, blue]
table {%
0 -0.828685109559914
0.0322580645161294 -0.570764100930926
0.064516129032258 0.162085227561768
0.096774193548387 1.4512083775025
0.129032258064516 2.20664851540445
0.161290322580645 1.8907890318074
0.193548387096774 1.18068476047816
0.225806451612903 0.528767978944391
0.258064516129032 0.00948477119304473
0.290322580645161 -0.360329798639116
0.32258064516129 -0.613502223970223
0.354838709677419 -0.779663998442196
0.387096774193548 -0.884772241956707
0.419354838709678 -0.94902907754922
0.451612903225806 -0.977282815867066
0.483870967741935 -0.972934568111222
0.516129032258065 -0.93699105236962
0.548387096774194 -0.857969405537623
0.580645161290323 -0.577825014411465
0.612903225806452 0.229542752995749
0.645161290322581 1.38992388862513
0.67741935483871 2.03961194911573
0.709677419354839 1.80745832487534
0.741935483870968 1.19696122579063
0.774193548387097 0.579686132660608
0.806451612903226 0.0645383448031647
0.838709677419355 -0.308942284036422
0.870967741935484 -0.57568397185941
0.903225806451613 -0.755442294978883
0.935483870967742 -0.87022572453781
0.967741935483871 -0.936226525090803
1 -0.963959103031306
};
\addlegendentry{Prediction}
\addplot [semithick, green]
table {%
-1 1.69110343786114
-0.967741935483871 2.31351669120059
-0.935483870967742 1.81729572009365
-0.903225806451613 1.04295365427054
-0.870967741935484 0.385098409128619
-0.838709677419355 -0.0947311717972172
-0.806451612903226 -0.425288873159369
-0.774193548387097 -0.646855270285042
-0.741935483870968 -0.792592233996031
-0.709677419354839 -0.886162340589994
-0.67741935483871 -0.94302496518416
-0.645161290322581 -0.971796837088032
-0.612903225806452 -0.974154301795878
-0.580645161290323 -0.940892963775511
-0.548387096774193 -0.836250102269154
-0.516129032258065 -0.54464447358633
-0.483870967741935 0.237251764466004
-0.451612903225806 1.61779342401137
-0.419354838709677 2.31567366016837
-0.387096774193548 1.85908270088021
-0.354838709677419 1.08496628312198
-0.32258064516129 0.417332545446167
-0.290322580645161 -0.0720597150884879
-0.258064516129032 -0.409924810099038
-0.225806451612903 -0.636657881807129
-0.193548387096774 -0.785951683345303
-0.161290322580645 -0.881979834254542
-0.129032258064516 -0.940620358416416
-0.096774193548387 -0.970857621592344
-0.064516129032258 -0.974808962590712
-0.032258064516129 -0.944122876092924
};
\addlegendentry{Condition}
\addplot [semithick, green, dash pattern=on 5.55pt off 2.4pt]
table {%
0 -0.845397307426445
0.0322580645161294 -0.570207626173946
0.064516129032258 0.173397512723927
0.096774193548387 1.54191596046199
0.129032258064516 2.31379073013104
0.161290322580645 1.90005861134556
0.193548387096774 1.12740106214391
0.225806451612903 0.450151872143582
0.258064516129032 -0.0489058520877034
0.290322580645161 -0.394210509948319
0.32258064516129 -0.626217494675471
0.354838709677419 -0.779144378277501
0.387096774193548 -0.877680520951649
0.419354838709678 -0.938127372044891
0.451612903225806 -0.969837494186583
0.483870967741935 -0.975359348423147
0.516129032258065 -0.947153655602924
0.548387096774194 -0.854035619106571
0.580645161290323 -0.594334491918945
0.612903225806452 0.112081162395787
0.645161290322581 1.463861058815
0.67741935483871 2.30769869017498
0.709677419354839 1.94009214684081
0.741935483870968 1.17022903279026
0.774193548387097 0.483558226222138
0.806451612903226 -0.0252619931929575
0.838709677419355 -0.378138986826819
0.870967741935484 -0.615528894332488
0.903225806451613 -0.772166761385214
0.935483870967742 -0.873262181133951
0.967741935483871 -0.935544929855583
1 -0.968736676607071
};
\addlegendentry{Target}
\legend{}
\end{axis}

\end{tikzpicture}
    &\definecolor{fig_green}{RGB}{0,128,0}
    \begin{tikzpicture}
        \node at (0,0) {\begin{tikzpicture} 
    \begin{axis}[%
    hide axis,
    xmin=10,
    xmax=50,
    ymin=0,
    ymax=0.4,
    legend style={draw=white!15!black,legend cell align=left}
    ]
    \addlegendimage{semithick, fig_green}
    \addlegendentry{\tiny input};
    \addlegendimage{semithick, fig_green, dash pattern=on 5.55pt off 2.4pt}
    \addlegendentry{\tiny ground truth};
    \addlegendimage{semithick, blue};
    \addlegendentry{\tiny prediction};
    \end{axis}
\end{tikzpicture}};
    \node at (0,-1.) {~};
    \end{tikzpicture}
    \\
    \begin{tikzpicture}
            \node[rotate=90] at (0,0) {noisy};
            \node at (0,-1.65) {~};
    \end{tikzpicture}    
    &~ &\begin{tikzpicture}
\definecolor{darkgray176}{RGB}{176,176,176}
\definecolor{green}{RGB}{0,128,0}
\definecolor{lightgray204}{RGB}{204,204,204}

\begin{axis}[
    width=.3\textwidth,
    height=.3\textwidth/1.618,
legend cell align={left},
legend style={
  fill opacity=0.8,
  draw opacity=1,
  text opacity=1,
  at={(0.91,0.5)},
  anchor=east,
  draw=lightgray204
},
tick align=outside,
tick pos=left,
title={},
x grid style={darkgray176},
xlabel={\tiny time},
xmin=-1.13387096774194, xmax=1.10161290322581,
yticklabel style = {font = \tiny},
xticklabel style = {font = \tiny},
xtick style={color=black},
y grid style={darkgray176},
ylabel={\tiny $h(x_2(t))$},
ymin=-1.24434151103344, ymax=2.80816290983449,
ytick style={color=black}
]
\path [draw=blue, fill=blue, opacity=0.5]
(axis cs:0,-0.791884136999424)
--(axis cs:0,-0.791884136999424)
--(axis cs:0.0322580645161294,-0.532216532304019)
--(axis cs:0.064516129032258,0.207228559293209)
--(axis cs:0.096774193548387,1.12442474773327)
--(axis cs:0.129032258064516,1.68552099965311)
--(axis cs:0.161290322580645,1.56756057868288)
--(axis cs:0.193548387096774,0.908504125338844)
--(axis cs:0.225806451612903,0.146043870272565)
--(axis cs:0.258064516129032,-0.366882740438619)
--(axis cs:0.290322580645161,-0.596461580106275)
--(axis cs:0.32258064516129,-0.730623805831874)
--(axis cs:0.354838709677419,-0.913165271816812)
--(axis cs:0.387096774193548,-1.10654589884703)
--(axis cs:0.419354838709678,-1.19381854020165)
--(axis cs:0.451612903225806,-1.1546485659785)
--(axis cs:0.483870967741935,-1.08630398785724)
--(axis cs:0.516129032258065,-1.04849802441944)
--(axis cs:0.548387096774194,-0.935844580875931)
--(axis cs:0.580645161290323,-0.556390585828836)
--(axis cs:0.612903225806452,0.13874292125789)
--(axis cs:0.645161290322581,0.915902956607306)
--(axis cs:0.67741935483871,1.38730562268057)
--(axis cs:0.709677419354839,1.31274347655288)
--(axis cs:0.741935483870968,0.790708060720758)
--(axis cs:0.774193548387097,0.158464904288433)
--(axis cs:0.806451612903226,-0.298006792955262)
--(axis cs:0.838709677419355,-0.542627672001628)
--(axis cs:0.870967741935484,-0.721536980353704)
--(axis cs:0.903225806451613,-0.942388520552518)
--(axis cs:0.935483870967742,-1.15803749659711)
--(axis cs:0.967741935483871,-1.26309387894733)
--(axis cs:1,-1.24564270290988)
--(axis cs:1,-1.24564270290988)
--(axis cs:1,-0.553480274017001)
--(axis cs:0.967741935483871,-0.609085413130824)
--(axis cs:0.935483870967742,-0.531751521780701)
--(axis cs:0.903225806451613,-0.338120947030914)
--(axis cs:0.870967741935484,-0.137210070592655)
--(axis cs:0.838709677419355,0.0221297258593717)
--(axis cs:0.806451612903226,0.24789628860997)
--(axis cs:0.774193548387097,0.68638603957788)
--(axis cs:0.741935483870968,1.30113999406795)
--(axis cs:0.709677419354839,1.80656027208635)
--(axis cs:0.67741935483871,1.86440299114285)
--(axis cs:0.645161290322581,1.37557834692631)
--(axis cs:0.612903225806452,0.581761298262494)
--(axis cs:0.580645161290323,-0.128417100507951)
--(axis cs:0.548387096774194,-0.52080342500663)
--(axis cs:0.516129032258065,-0.644678468860948)
--(axis cs:0.483870967741935,-0.693848595375854)
--(axis cs:0.451612903225806,-0.773011108705527)
--(axis cs:0.419354838709678,-0.8207407389894)
--(axis cs:0.387096774193548,-0.740992616247295)
--(axis cs:0.354838709677419,-0.555241944757114)
--(axis cs:0.32258064516129,-0.380263917490026)
--(axis cs:0.290322580645161,-0.253998834224472)
--(axis cs:0.258064516129032,-0.0324742077203883)
--(axis cs:0.225806451612903,0.47322937545827)
--(axis cs:0.193548387096774,1.22944838870588)
--(axis cs:0.161290322580645,1.88216516482222)
--(axis cs:0.129032258064516,1.99311527479243)
--(axis cs:0.096774193548387,1.42551734614703)
--(axis cs:0.064516129032258,0.501904359793165)
--(axis cs:0.0322580645161294,-0.245188439871079)
--(axis cs:0,-0.509073434700109)
--cycle;

\addplot [semithick, blue]
table {%
0 -0.650478785849767
0.0322580645161294 -0.388702486087549
0.064516129032258 0.354566459543187
0.096774193548387 1.27497104694015
0.129032258064516 1.83931813722277
0.161290322580645 1.72486287175255
0.193548387096774 1.06897625702236
0.225806451612903 0.309636622865418
0.258064516129032 -0.199678474079504
0.290322580645161 -0.425230207165373
0.32258064516129 -0.55544386166095
0.354838709677419 -0.734203608286963
0.387096774193548 -0.923769257547163
0.419354838709678 -1.00727963959552
0.451612903225806 -0.963829837342014
0.483870967741935 -0.890076291616549
0.516129032258065 -0.846588246640194
0.548387096774194 -0.728324002941281
0.580645161290323 -0.342403843168393
0.612903225806452 0.360252109760192
0.645161290322581 1.14574065176681
0.67741935483871 1.62585430691171
0.709677419354839 1.55965187431961
0.741935483870968 1.04592402739435
0.774193548387097 0.422425471933156
0.806451612903226 -0.0250552521726459
0.838709677419355 -0.260248973071128
0.870967741935484 -0.42937352547318
0.903225806451613 -0.640254733791716
0.935483870967742 -0.844894509188906
0.967741935483871 -0.936089646039077
1 -0.899561488463438
};
\addlegendentry{Prediction}
\addplot [semithick, green]
table {%
-1.03225806451613 0.729314096916195
-1 1.5430002430531
-0.967741935483871 2.32204403870171
-0.935483870967742 1.831985798904
-0.903225806451613 0.851433703296098
-0.870967741935484 0.279354081877078
-0.838709677419355 -0.210590170919282
-0.806451612903226 -0.154089766822581
-0.774193548387097 -0.89795080770656
-0.741935483870968 -0.788817886111994
-0.709677419354839 -0.631217039088834
-0.67741935483871 -1.11565489128188
-0.645161290322581 -1.0186823441863
-0.612903225806452 -1.02455755343605
-0.580645161290323 -1.06131914172655
-0.548387096774193 -0.952046399419613
-0.516129032258065 -0.466246020590151
-0.483870967741935 0.262503198938305
-0.451612903225806 1.29173786310598
-0.419354838709677 2.51920556711464
-0.387096774193548 1.54711284138206
-0.354838709677419 0.915038117881071
-0.32258064516129 0.656212982887232
-0.290322580645161 0.221915856154588
-0.258064516129032 -0.281944627110773
-0.225806451612903 -0.633593952285014
-0.193548387096774 -0.895240317440411
-0.161290322580645 -1.00560588014814
-0.129032258064516 -0.974031437037782
-0.096774193548387 -0.866150101663904
-0.064516129032258 -1.07102836219983
-0.032258064516129 -0.835479545663183
};
\addlegendentry{Condition}
\addplot [semithick, green, dash pattern=on 5.55pt off 2.4pt]
table {%
0 -0.739654414051487
0.0322580645161294 -0.518329938112951
0.064516129032258 0.116995492630876
0.096774193548387 1.4830385503286
0.129032258064516 1.92918851810535
0.161290322580645 1.60484919087504
0.193548387096774 0.902948829211309
0.225806451612903 0.360083074779615
0.258064516129032 0.038407028127166
0.290322580645161 -0.535312914336627
0.32258064516129 -0.850668308618081
0.354838709677419 -0.773023157271037
0.387096774193548 -1.12963704153207
0.419354838709678 -1.06214760780795
0.451612903225806 -1.04132234674408
0.483870967741935 -0.728286752014149
0.516129032258065 -1.19049036317343
0.548387096774194 -0.786213082699808
0.580645161290323 -0.567579376473479
0.612903225806452 0.335513082155103
0.645161290322581 1.1374733999599
0.67741935483871 2.30787657939374
0.709677419354839 1.96813566944444
0.741935483870968 1.36715134974877
0.774193548387097 0.675192568868016
0.806451612903226 -0.179955450959619
0.838709677419355 -0.372618434067805
0.870967741935484 -0.210140874568701
0.903225806451613 -0.699218180799252
0.935483870967742 -0.996483428586153
0.967741935483871 -0.577931038414108
1 -1.19679478002127
};
\addlegendentry{Target}
\legend{}
\end{axis}

\end{tikzpicture}
    & \begin{tikzpicture}
\definecolor{darkgray176}{RGB}{176,176,176}
\definecolor{green}{RGB}{0,128,0}
\definecolor{lightgray204}{RGB}{204,204,204}

\begin{axis}[
    width=.3\textwidth,
    height=.3\textwidth/1.618,
legend cell align={left},
legend style={
  fill opacity=0.8,
  draw opacity=1,
  text opacity=1,
  at={(0.91,0.5)},
  anchor=east,
  draw=lightgray204
},
tick align=outside,
tick pos=left,
title={},
x grid style={darkgray176},
xlabel={\tiny time},
xmin=-1.13387096774194, xmax=1.10161290322581,
yticklabel style = {font = \tiny},
xticklabel style = {font = \tiny},
xtick style={color=black},
y grid style={darkgray176},
ymin=-1.24434151103344, ymax=2.80816290983449,
ytick style={color=black},
yticklabels={,,},
]
\path [draw=blue, fill=blue, opacity=0.5]
(axis cs:0,-1.665446045318)
--(axis cs:0,-1.665446045318)
--(axis cs:0.0322580645161294,-0.568897619418134)
--(axis cs:0.064516129032258,0.263031758801038)
--(axis cs:0.096774193548387,0.80837881476712)
--(axis cs:0.129032258064516,1.06689859178334)
--(axis cs:0.161290322580645,1.05987425552174)
--(axis cs:0.193548387096774,0.827975353710028)
--(axis cs:0.225806451612903,0.427391510412111)
--(axis cs:0.258064516129032,-0.0753940214809364)
--(axis cs:0.290322580645161,-0.609730884208127)
--(axis cs:0.32258064516129,-1.10730328548885)
--(axis cs:0.354838709677419,-1.50844223065865)
--(axis cs:0.387096774193548,-1.76767234362311)
--(axis cs:0.419354838709678,-1.85794397149951)
--(axis cs:0.451612903225806,-1.77307757287024)
--(axis cs:0.483870967741935,-1.52808132186236)
--(axis cs:0.516129032258065,-1.15719445331029)
--(axis cs:0.548387096774194,-0.709747383335965)
--(axis cs:0.580645161290323,-0.244192486820868)
--(axis cs:0.612903225806452,0.179086613560679)
--(axis cs:0.645161290322581,0.505374216416193)
--(axis cs:0.67741935483871,0.692700794731376)
--(axis cs:0.709677419354839,0.717475249796781)
--(axis cs:0.741935483870968,0.577886228992258)
--(axis cs:0.774193548387097,0.294555925632589)
--(axis cs:0.806451612903226,-0.0917064341730509)
--(axis cs:0.838709677419355,-0.52479970843623)
--(axis cs:0.870967741935484,-0.940245125827395)
--(axis cs:0.903225806451613,-1.27320560445854)
--(axis cs:0.935483870967742,-1.46655351593068)
--(axis cs:0.967741935483871,-1.47794982792781)
--(axis cs:1,-1.28506461143995)
--(axis cs:1,-1.28506461143995)
--(axis cs:1,-0.308494704515844)
--(axis cs:0.967741935483871,-0.502960017690905)
--(axis cs:0.935483870967742,-0.491970746399289)
--(axis cs:0.903225806451613,-0.298729000184287)
--(axis cs:0.870967741935484,0.03416669025934)
--(axis cs:0.838709677419355,0.449555573601588)
--(axis cs:0.806451612903226,0.882610046978934)
--(axis cs:0.774193548387097,1.26885004869444)
--(axis cs:0.741935483870968,1.55216499587092)
--(axis cs:0.709677419354839,1.69173937657283)
--(axis cs:0.67741935483871,1.66695089356827)
--(axis cs:0.645161290322581,1.47961336153512)
--(axis cs:0.612903225806452,1.15331907153329)
--(axis cs:0.580645161290323,0.730036686685325)
--(axis cs:0.548387096774194,0.264480380521523)
--(axis cs:0.516129032258065,-0.182967136242761)
--(axis cs:0.483870967741935,-0.553853726218446)
--(axis cs:0.451612903225806,-0.798849088776305)
--(axis cs:0.419354838709678,-0.883714042924335)
--(axis cs:0.387096774193548,-0.79343981256301)
--(axis cs:0.354838709677419,-0.53420442976094)
--(axis cs:0.32258064516129,-0.133056136115851)
--(axis cs:0.290322580645161,0.364529431938274)
--(axis cs:0.258064516129032,0.898881466603521)
--(axis cs:0.225806451612903,1.40168402321776)
--(axis cs:0.193548387096774,1.80229195788188)
--(axis cs:0.161290322580645,2.03423047083746)
--(axis cs:0.129032258064516,2.04131128161313)
--(axis cs:0.096774193548387,1.78285641240048)
--(axis cs:0.064516129032258,1.23761649151677)
--(axis cs:0.0322580645161294,0.406094571417332)
--(axis cs:0,-0.68887762798297)
--cycle;

\addplot [semithick, blue]
table {%
0 -1.17716183665049
0.0322580645161294 -0.0814015240004009
0.064516129032258 0.750324125158903
0.096774193548387 1.2956176135838
0.129032258064516 1.55410493669824
0.161290322580645 1.5470523631796
0.193548387096774 1.31513365579596
0.225806451612903 0.914537766814935
0.258064516129032 0.411743722561292
0.290322580645161 -0.122600726134926
0.32258064516129 -0.620179710802348
0.354838709677419 -1.02132333020979
0.387096774193548 -1.28055607809306
0.419354838709678 -1.37082900721192
0.451612903225806 -1.28596333082327
0.483870967741935 -1.0409675240404
0.516129032258065 -0.670080794776524
0.548387096774194 -0.222633501407221
0.580645161290323 0.242922099932229
0.612903225806452 0.666202842546984
0.645161290322581 0.992493788975658
0.67741935483871 1.17982584414982
0.709677419354839 1.2046073131848
0.741935483870968 1.06502561243159
0.774193548387097 0.781702987163514
0.806451612903226 0.395451806402942
0.838709677419355 -0.0376220674173207
0.870967741935484 -0.453039217784028
0.903225806451613 -0.785967302321414
0.935483870967742 -0.979262131164983
0.967741935483871 -0.990454922809359
1 -0.796779657977895
};
\addlegendentry{Prediction}
\addplot [semithick, green]
table {%
-1.03225806451613 0.729314096916195
-1 1.5430002430531
-0.967741935483871 2.32204403870171
-0.935483870967742 1.831985798904
-0.903225806451613 0.851433703296098
-0.870967741935484 0.279354081877078
-0.838709677419355 -0.210590170919282
-0.806451612903226 -0.154089766822581
-0.774193548387097 -0.89795080770656
-0.741935483870968 -0.788817886111994
-0.709677419354839 -0.631217039088834
-0.67741935483871 -1.11565489128188
-0.645161290322581 -1.0186823441863
-0.612903225806452 -1.02455755343605
-0.580645161290323 -1.06131914172655
-0.548387096774193 -0.952046399419613
-0.516129032258065 -0.466246020590151
-0.483870967741935 0.262503198938305
-0.451612903225806 1.29173786310598
-0.419354838709677 2.51920556711464
-0.387096774193548 1.54711284138206
-0.354838709677419 0.915038117881071
-0.32258064516129 0.656212982887232
-0.290322580645161 0.221915856154588
-0.258064516129032 -0.281944627110773
-0.225806451612903 -0.633593952285014
-0.193548387096774 -0.895240317440411
-0.161290322580645 -1.00560588014814
-0.129032258064516 -0.974031437037782
-0.096774193548387 -0.866150101663904
-0.064516129032258 -1.07102836219983
-0.032258064516129 -0.835479545663183
};
\addlegendentry{Condition}
\addplot [semithick, green, dash pattern=on 5.55pt off 2.4pt]
table {%
0 -0.739654414051487
0.0322580645161294 -0.518329938112951
0.064516129032258 0.116995492630876
0.096774193548387 1.4830385503286
0.129032258064516 1.92918851810535
0.161290322580645 1.60484919087504
0.193548387096774 0.902948829211309
0.225806451612903 0.360083074779615
0.258064516129032 0.038407028127166
0.290322580645161 -0.535312914336627
0.32258064516129 -0.850668308618081
0.354838709677419 -0.773023157271037
0.387096774193548 -1.12963704153207
0.419354838709678 -1.06214760780795
0.451612903225806 -1.04132234674408
0.483870967741935 -0.728286752014149
0.516129032258065 -1.19049036317343
0.548387096774194 -0.786213082699808
0.580645161290323 -0.567579376473479
0.612903225806452 0.335513082155103
0.645161290322581 1.1374733999599
0.67741935483871 2.30787657939374
0.709677419354839 1.96813566944444
0.741935483870968 1.36715134974877
0.774193548387097 0.675192568868016
0.806451612903226 -0.179955450959619
0.838709677419355 -0.372618434067805
0.870967741935484 -0.210140874568701
0.903225806451613 -0.699218180799252
0.935483870967742 -0.996483428586153
0.967741935483871 -0.577931038414108
1 -1.19679478002127
};
\addlegendentry{Target}
\legend{}
\end{axis}

\end{tikzpicture}
    &\begin{tikzpicture}
\definecolor{darkgray176}{RGB}{176,176,176}
\definecolor{green}{RGB}{0,128,0}
\definecolor{lightgray204}{RGB}{204,204,204}

\begin{axis}[
    width=.3\textwidth,
    height=.3\textwidth/1.618,
legend cell align={left},
legend style={
  fill opacity=0.8,
  draw opacity=1,
  text opacity=1,
  at={(0.91,0.5)},
  anchor=east,
  draw=lightgray204
},
tick align=outside,
tick pos=left,
title={},
x grid style={darkgray176},
xlabel={\tiny time},
xmin=-1.13387096774194, xmax=1.10161290322581,
yticklabel style = {font = \tiny},
xticklabel style = {font = \tiny},
xtick style={color=black},
y grid style={darkgray176},
ymin=-1.24434151103344, ymax=2.80816290983449,
ytick style={color=black},
yticklabels={,,},
]
\path [draw=blue, fill=blue, opacity=0.5]
(axis cs:0,-0.739898853705199)
--(axis cs:0,-0.739898853705199)
--(axis cs:0.0322580645161294,-0.442945945873865)
--(axis cs:0.064516129032258,0.177347417694429)
--(axis cs:0.096774193548387,1.1793753362096)
--(axis cs:0.129032258064516,1.9082301566808)
--(axis cs:0.161290322580645,1.74112809427527)
--(axis cs:0.193548387096774,1.0935983658069)
--(axis cs:0.225806451612903,0.475966804677645)
--(axis cs:0.258064516129032,0.00745292875602004)
--(axis cs:0.290322580645161,-0.330334796939457)
--(axis cs:0.32258064516129,-0.563439462459379)
--(axis cs:0.354838709677419,-0.705507882191618)
--(axis cs:0.387096774193548,-0.775254931041344)
--(axis cs:0.419354838709678,-0.789205189430896)
--(axis cs:0.451612903225806,-0.750487798384455)
--(axis cs:0.483870967741935,-0.651092336115223)
--(axis cs:0.516129032258065,-0.478710653656521)
--(axis cs:0.548387096774194,-0.227412542331311)
--(axis cs:0.580645161290323,0.083543012528116)
--(axis cs:0.612903225806452,0.395990468795268)
--(axis cs:0.645161290322581,0.630876528174991)
--(axis cs:0.67741935483871,0.730448613899942)
--(axis cs:0.709677419354839,0.685308374337714)
--(axis cs:0.741935483870968,0.528205912488009)
--(axis cs:0.774193548387097,0.309887456428381)
--(axis cs:0.806451612903226,0.0779767485881812)
--(axis cs:0.838709677419355,-0.132959994201391)
--(axis cs:0.870967741935484,-0.302395337257076)
--(axis cs:0.903225806451613,-0.4204759409994)
--(axis cs:0.935483870967742,-0.483716978187525)
--(axis cs:0.967741935483871,-0.491757964036874)
--(axis cs:1,-0.446135940069033)
--(axis cs:1,-0.446135940069033)
--(axis cs:1,-0.446135940069033)
--(axis cs:0.967741935483871,-0.491757964036874)
--(axis cs:0.935483870967742,-0.483716978187525)
--(axis cs:0.903225806451613,-0.4204759409994)
--(axis cs:0.870967741935484,-0.302395337257076)
--(axis cs:0.838709677419355,-0.132959994201391)
--(axis cs:0.806451612903226,0.0779767485881812)
--(axis cs:0.774193548387097,0.309887456428381)
--(axis cs:0.741935483870968,0.528205912488009)
--(axis cs:0.709677419354839,0.685308374337714)
--(axis cs:0.67741935483871,0.730448613899942)
--(axis cs:0.645161290322581,0.630876528174991)
--(axis cs:0.612903225806452,0.395990468795268)
--(axis cs:0.580645161290323,0.083543012528116)
--(axis cs:0.548387096774194,-0.227412542331311)
--(axis cs:0.516129032258065,-0.478710653656521)
--(axis cs:0.483870967741935,-0.651092336115223)
--(axis cs:0.451612903225806,-0.750487798384455)
--(axis cs:0.419354838709678,-0.789205189430896)
--(axis cs:0.387096774193548,-0.775254931041344)
--(axis cs:0.354838709677419,-0.705507882191618)
--(axis cs:0.32258064516129,-0.563439462459379)
--(axis cs:0.290322580645161,-0.330334796939457)
--(axis cs:0.258064516129032,0.00745292875602004)
--(axis cs:0.225806451612903,0.475966804677645)
--(axis cs:0.193548387096774,1.0935983658069)
--(axis cs:0.161290322580645,1.74112809427527)
--(axis cs:0.129032258064516,1.9082301566808)
--(axis cs:0.096774193548387,1.1793753362096)
--(axis cs:0.064516129032258,0.177347417694429)
--(axis cs:0.0322580645161294,-0.442945945873865)
--(axis cs:0,-0.739898853705199)
--cycle;

\addplot [semithick, blue]
table {%
0 -0.739898853705199
0.0322580645161294 -0.442945945873865
0.064516129032258 0.177347417694429
0.096774193548387 1.1793753362096
0.129032258064516 1.9082301566808
0.161290322580645 1.74112809427527
0.193548387096774 1.0935983658069
0.225806451612903 0.475966804677645
0.258064516129032 0.00745292875602004
0.290322580645161 -0.330334796939457
0.32258064516129 -0.563439462459379
0.354838709677419 -0.705507882191618
0.387096774193548 -0.775254931041344
0.419354838709678 -0.789205189430896
0.451612903225806 -0.750487798384455
0.483870967741935 -0.651092336115223
0.516129032258065 -0.478710653656521
0.548387096774194 -0.227412542331311
0.580645161290323 0.083543012528116
0.612903225806452 0.395990468795268
0.645161290322581 0.630876528174991
0.67741935483871 0.730448613899942
0.709677419354839 0.685308374337714
0.741935483870968 0.528205912488009
0.774193548387097 0.309887456428381
0.806451612903226 0.0779767485881812
0.838709677419355 -0.132959994201391
0.870967741935484 -0.302395337257076
0.903225806451613 -0.4204759409994
0.935483870967742 -0.483716978187525
0.967741935483871 -0.491757964036874
1 -0.446135940069033
};
\addlegendentry{Prediction}
\addplot [semithick, green]
table {%
-1.03225806451613 0.729314096916195
-1 1.5430002430531
-0.967741935483871 2.32204403870171
-0.935483870967742 1.831985798904
-0.903225806451613 0.851433703296098
-0.870967741935484 0.279354081877078
-0.838709677419355 -0.210590170919282
-0.806451612903226 -0.154089766822581
-0.774193548387097 -0.89795080770656
-0.741935483870968 -0.788817886111994
-0.709677419354839 -0.631217039088834
-0.67741935483871 -1.11565489128188
-0.645161290322581 -1.0186823441863
-0.612903225806452 -1.02455755343605
-0.580645161290323 -1.06131914172655
-0.548387096774193 -0.952046399419613
-0.516129032258065 -0.466246020590151
-0.483870967741935 0.262503198938305
-0.451612903225806 1.29173786310598
-0.419354838709677 2.51920556711464
-0.387096774193548 1.54711284138206
-0.354838709677419 0.915038117881071
-0.32258064516129 0.656212982887232
-0.290322580645161 0.221915856154588
-0.258064516129032 -0.281944627110773
-0.225806451612903 -0.633593952285014
-0.193548387096774 -0.895240317440411
-0.161290322580645 -1.00560588014814
-0.129032258064516 -0.974031437037782
-0.096774193548387 -0.866150101663904
-0.064516129032258 -1.07102836219983
-0.032258064516129 -0.835479545663183
};
\addlegendentry{Condition}
\addplot [semithick, green, dash pattern=on 5.55pt off 2.4pt]
table {%
0 -0.739654414051487
0.0322580645161294 -0.518329938112951
0.064516129032258 0.116995492630876
0.096774193548387 1.4830385503286
0.129032258064516 1.92918851810535
0.161290322580645 1.60484919087504
0.193548387096774 0.902948829211309
0.225806451612903 0.360083074779615
0.258064516129032 0.038407028127166
0.290322580645161 -0.535312914336627
0.32258064516129 -0.850668308618081
0.354838709677419 -0.773023157271037
0.387096774193548 -1.12963704153207
0.419354838709678 -1.06214760780795
0.451612903225806 -1.04132234674408
0.483870967741935 -0.728286752014149
0.516129032258065 -1.19049036317343
0.548387096774194 -0.786213082699808
0.580645161290323 -0.567579376473479
0.612903225806452 0.335513082155103
0.645161290322581 1.1374733999599
0.67741935483871 2.30787657939374
0.709677419354839 1.96813566944444
0.741935483870968 1.36715134974877
0.774193548387097 0.675192568868016
0.806451612903226 -0.179955450959619
0.838709677419355 -0.372618434067805
0.870967741935484 -0.210140874568701
0.903225806451613 -0.699218180799252
0.935483870967742 -0.996483428586153
0.967741935483871 -0.577931038414108
1 -1.19679478002127
};
\addlegendentry{Target}
\legend{}
\end{axis}

\end{tikzpicture}
    &
    \end{tabular}
    
     \vspace{-0.7\intextsep}
\caption{Multi-step mean and 2-sigma interval of the prediction for predator population from the predator-prey dynamics for our proposed Koopman-equivariant GP (KE-GP), a generic contextual kernel (C-GP), and a Koopman operator regression approach (KOR) for noise-free (top) and noisy (bottom) training data.}
\label{fig:illustrative}
\end{figure*}

\label{section:svigps}
GP model scale poorly with the dataset size, requiring $\BigO(N^3)$ computations and $\BigO(N^2)$ memory during training. To address this, in the following we present a sparse GP approximation that uses a variational inference approach. Our approach closely follows stochastic variational inference with sparse GPs \citep{hensmann2013gaussian,Wilk2018}, with some additional modifications to the selection and optimization of inducing points. As discussed at the end of this section, this choice allows considerable scalability during training \ref{itm:Scale}, and presents desirable properties when used in conjunction with our equivariant covariance function.
\subsection{Variational Inference with Sparse GPs}

	During training, computational complexity stems mainly from the inversion of the data covariance matrix $\Kff$, where $\left[\Kff\right]_{nn'} =  k^{\txt{KE}}_y((t,\vz^{(n)}),(t',\vz^{(n^\prime)})) =: k^{\txt{KE}}_{y(t,t^\prime)}(\vz^{(n)},\vz^{(n^\prime)})$. To address these problems, we resort to variational inference using \emph{inducing variables}  \citep{quinonero2005unifying,hensmann2013gaussian}. %
 We obtain the sparse GP by considering $M \ll N$ inducing observations $\vm$, corresponding to the inducing trajectories $\smash{\{{\vz}^{(m)}:=\bm{x}^{(m)}_{[\tau_s,\tau_e]}\}_{m=1}^M = \mZ}$. Instead of employing the GP prior for the trajectories $\vm$ corresponding to $\mZ$, we place a simpler Gaussian prior $q(\cdot)$ over $\vm$, specified by a mean $\vm$ and covariance $\mS$. By leveraging a variational inference argument \citep{hensmann2013gaussian}, we then obtain the approximate Gaussian process posterior %
 $\GP(\tilde{\mu}(\cdot), \tilde{\sigma}^2(\cdot, \cdot))$ with mean and variance
	\begin{align}
 \begin{split}
	\tilde{\mu}(\cdot) {=} & \Kdu\Kuu\inv\vm,\qquad \\  \tilde{\sigma}^2(\cdot, \cdot) {=}& \kdd - \Kdu\Kuu\inv[\Kuu{ -} \mS]\Kuu\inv\Kud \label{eq:varpost}\,
 \end{split}
	\end{align}
	where $[\Kuu]_{ij} = k^{\txt{KE}}_{y(t,t^\prime)}(\vz^{(i)}, \vz^{(j)})$ %
 and $\smash{\Kud = [k^{\txt{KE}}_{y(t,t^\prime)}(\vz^{(m)}, \cdot)]_{m=1}^M}$. The shape of the posterior can be adjusted by changing the values ${\mZ}$ and output mean $\vm$ and variance $\mS$ of the inducing outputs. Here, we follow an approach similar to \cite{hensmann2013gaussian}, which allows us to minimize by sampling batches of data instead of computing the full gradient, improving memory complexity to $\mathcal{O}(BM+M^2)$. We choose the hyperparameters and inducing points jointly by minimizing the loss
 \begin{align}
 \begin{split}
     \textstyle\sum_{i=1}^{N} &\big( %
 {-}\textstyle \frac{1}{2}\log(2\pi\sigma_{\txt{on}}^{2}) {-}\frac{\sigma_{\txt{on}}^{2}}{2}(y_i-\bm{k}_i^\top \Kuu^{-1} \bm{m})^2 
\\ &{-} \textstyle \frac{1}{2} \sigma_{\txt{on}}^2 \tilde{k}_{i,i} {-} \frac{1}{2} \text{tr} \left( \bm{S} \bm{\Lambda}_i \right) \big)
- \text{KL} \left( q(\bm{u}) \| p(\bm{u}) \right),
\end{split}
 \end{align}
where $\bm{k}_i =[k^{\txt{KE}}_{y(t,t^\prime)}({\vz}^{(m)}, \bm{x}^{(i)}_{[\tau_s,\tau_e]})]_{m=1}^M$, $\Lambda_i =  \Kuu^{-1} \bm{k}_i \bm{k}_i^\top \Kuu^{-1}
$, $\tilde{k}_{i,i} = [\Kff - \Kfu \Kuu^{-1} \Kfu]_{ii}$, and $[\Kfu]_{ij} = k^{\txt{KE}}_{y(t,t^\prime)}(\bm{x}^{(i)}_{[\tau_s,\tau_e]},\vz^{(j)})$. However, unlike \cite{hensmann2013gaussian}, we only optimize inducing trajectories and avoid sampling any time/context-related inducing points, which is due to the structure of the spectral decomposition \eqref{eq:KoopObs}. This allows a significant reduction in training complexity compared, e.g., to the generic contextual kernel $k^{\txt{C}}(\cdot,\cdot):=k^{\txt{SE}}(t,t^\prime)\otimes k^{\txt{SE}}(\bm{x}_{0},\bm{x}^\prime_{0})$, 
where inducing points representing time are also optimized.

Due to the structure of our Koopman-equivariant construction, and the resulting benefits in information gain presented in Section \ref{section:analysisofsamplecomplexity}, our approach is also more robust to a lack of correlation between points, an issue commonly observed in conventional sparse GP approximations \citep{Murray2010,HensmanNIPS2015}. In particular, a lower information gain implies that less inducing points are required than with conventional GPs to accurately represent the full posterior \citep{Burt2019RatesRegression}.%

\section{NUMERICAL EXPERIMENTS}\label{sec:NumExp}
To demonstrate the applicability of KE-GPs to realistic data, we perform qualitative and quantitative studies on a set of benchmark examples. As a classical dynamical systems example, we choose the predator-prey model; from the robotics domain, we consider expert demonstrations on the halfcheetah environment from D4RL~\citep{fu2020d4rl} and forecast the first state and action; as a high uncertainty example we choose temperature data from the Monash TSF benchmark \citep{godahewa2021monash} taken at \textit{Oikolab} -- demonstrating the usefulness of building in \eqref{eq:KoopObs} structure as a prior for highly complex weather dynamics. Since these datasets provide a single long trajectory, we split off the last chunk as test data and partition the trajectory into $N$ input-task pairs to comply with our model structure.\looseness=-1

\begin{table}[t!]%
    \caption{Comparable nonparametric frameworks.}
    \label{tab:contributionNonparam}
    \centering
    \footnotesize
    \vspace{-2ex}
    \begin{tabular}{l|ccc}
    \toprule
        Method& \ref{itm:Trac} & \ref{itm:Eff} & \ref{itm:Scale}\\
        \midrule
        C-GP \citep{Li2024STkernel} & ({\color{green!80!black}{\text{\cmark}}})   & \color{red!80!black}{\text{\xmark}} & \color{green!80!black}{\text{\cmark}}\\
        KOR \citep{Kostic2022LearningSpaces} & \color{red!80!black}{\text{\xmark}}  & ({\color{green!80!black}{\text{\cmark}}}) & \color{green!80!black}{\text{\cmark}} \\
        {\text{\textbf{KE-GP} (ours)}} & \color{green!80!black}{\text{\cmark}} & \color{green!80!black}{\text{\cmark}} & \color{green!80!black}{\text{\cmark}} \\
        \bottomrule
    \end{tabular}
\end{table}
\begin{table}[t!]%
    \caption{Simulations on small subsets of the Predator-Prey (PP), D4RL Half-Cheetah (D4RL), and Oikolab Temparature (OT) datasets. We report RMSE in mean and standard deviation for 5 runs. Training data are $N$ past trajectories over a unit-normalized interval, discretized using $H$ equidistant points.}
    \label{tab:smallsim}
    \centering
    \footnotesize
    \vspace{-2ex}
    \begin{tabular}{lc|ccc}
    \toprule
      & $N\times H$& \tbm{KE-GP} & C-GP & KOR\\
   \midrule
    PP&32$\times$32 & 0.28$\pm$0.0& 0.60$\pm$0.0 & \textbf{0.27}$\pm$0.0\\
    D4RL&32$\times$16 & 0.46$\pm$0.0& 0.98$\pm$0.0 & \textbf{0.44}$\pm$0.0\\
    OT&32$\times$16 & \textbf{0.63}$\pm$0.0& 0.68$\pm$0.0 & 0.86$\pm$0.0\\
    
        \bottomrule
    \end{tabular}
\end{table}
\begin{table}[t!]%
    \caption{Simulations on large subsets of the Predator-Prey (PP), D4RL Half-Cheetah (D4RL), and Oikolab Temparature (OL) datasets. Training data are $N$ past trajectories over a unit-normalized interval, discretized using $H$ equidistant points.}
    \label{tab:largesim}
    \centering
    \footnotesize
    \vspace{-2ex}
    \begin{tabular}{lc|ccc}
    \toprule
      & $N\times H$& \tbm{KE-GP} & C-GP & KOR\\
   \midrule
    PP &\phantom{0}512$\times$32& \textbf{0.26}$\pm$0.0\phantom{0}& 0.42$\pm$0.0\phantom{0} & 0.53$\pm$0.0\\
    D4RL$\!\!\!$ &3000$\times$16& 0.48$\pm$0.02 & 0.66$\pm$0.07& \textbf{0.44}$\pm$0.0\\
    OT &4000$\times$16& \textbf{0.54}$\pm$0.03& 0.60$\pm$0.02 & 0.71$\pm$0.0\\
    
        \bottomrule
    \end{tabular}
\end{table}

\textbf{Baselines~~} To put our novel algorithm into perspective, we compare to two standard approaches: Gaussian Processes with the time-dependent context (C-GP) by \cite{Li2024STkernel} and operator regression for dynamical systems (KOR)~\citep{Kostic2022LearningSpaces} from the \hyperlink{https://github.com/Machine-Learning-Dynamical-Systems/kooplearn}{\txt{kooplearn}} package and equip it with SciPy's \citep{2020SciPy-NMeth} \textit{minimize} for hyperparameter tuning. While Koopman-based, their method does not forecast via a decomposable model akin to \eqref{eq:KoopObs}, but requires taking powers of a $D{\times}D$ dense matrix. 
As summarized in Table~\ref{tab:contributionNonparam}, these two methods exhibit some of the important properties discussed in Section \ref{sec:ProbStat}, such that they are valuable baselines.\looseness=-1

\begin{figure}[t]
    \centering
    \begin{tikzpicture}

\definecolor{darkgray176}{RGB}{176,176,176}
\definecolor{darkorange25512714}{RGB}{255,127,14}
\definecolor{forestgreen4416044}{RGB}{44,160,44}
\definecolor{lightgray204}{RGB}{204,204,204}
\definecolor{steelblue31119180}{RGB}{31,119,180}

\begin{axis}[
width=5.7cm, height=3.5cm,
legend cell align={left},
legend style={
  at={(1.02,0.8)},
  anchor=north west,
},
tick align=outside,
tick pos=left,
x grid style={darkgray176},
xmin=15, xmax=10000,
xmode=log,
xlabel = {\text{\tiny \# datapoints}},
ylabel = {\text{\tiny emp. info. gain}},
xtick style={color=black},
xtick={0.01,0.1,1,10,100,1000,10000,100000,1000000},
xticklabels={
  \(\displaystyle {10^{-2}}\),
  \(\displaystyle {10^{-1}}\),
  \(\displaystyle {10^{0}}\),
  \(\displaystyle {10^{1}}\),
  \(\displaystyle {10^{2}}\),
  \(\displaystyle {10^{3}}\),
  \(\displaystyle {10^{4}}\),
  \(\displaystyle {10^{5}}\),
  \(\displaystyle {10^{6}}\)
},
y grid style={darkgray176},
ymin=5, ymax=4000,
ymode=log,
ytick style={color=black},
ytick={0.01,0.1,1,10,100,1000,10000,100000},
yticklabels={
  \(\displaystyle {10^{-2}}\),
  \(\displaystyle {10^{-1}}\),
  \(\displaystyle {10^{0}}\),
  \(\displaystyle {10^{1}}\),
  \(\displaystyle {10^{2}}\),
  \(\displaystyle {10^{3}}\),
  \(\displaystyle {10^{4}}\),
  \(\displaystyle {10^{5}}\)
},
yticklabel style = {font = \tiny},
xticklabel style = {font = \tiny},
]
\addplot [very thick, steelblue31119180,]
table {%
9999 2264.11377853996
7278 1804.26327966419
5298 1420.04472403012
3856 1095.40602083438
2807 836.680567640406
2043 631.775979654692
1487 472.562066757352
1082 352.135588183581
788 260.479692402163
573 191.711486161951
417 140.513623368777
303 102.644311381288
221 75.2133200772773
161 55.159666675983
117 40.2831954700074
85 29.31805807427
62 21.4780064745164
45 15.586340006635
32 11.0902089995822
23 7.97107383279624
17 5.89163232362588
12 4.15888308335967
9 3.11916231251975
6 2.07944154167984
4 1.38629436111989
3 1.03972077083992
2 0.693147180559945
1 0.346573590279973
1 0.346573590279973
1 0.346573590279973
};
\addlegendentry{\tiny \ref{eq:SDK}$(\bm{x}_0)$}
\addplot [very thick, forestgreen4416044, ]
table {%
9999 584.656726983284
7278 482.648758569247
5298 400.018580700851
3856 327.016416026596
2807 267.936837961897
2043 227.51171555213
1487 176.282258165846
1082 127.923214142905
788 116.989659625437
573 91.4829039943821
417 77.3576902555283
303 54.7094047705121
221 41.356397075653
161 45.5260312182046
117 25.0994743235434
85 22.1324167310142
62 16.1900860183269
45 12.9863449371233
32 9.59705643999886
23 7.41139449351921
17 5.41375041363105
12 3.95743297067132
9 3.10929738877
6 2.03296155184543
4 1.38344518620096
3 0.987590703724384
2 0.687289146233464
1 0.346573590279973
1 0.346573590279973
1 0.346573590279973
};
\addlegendentry{\tiny \textbf{\ref{eq:KE-SDK}} (true)}

\addplot [very thick, forestgreen4416044, dashed]
table {%
9999 1150.24144167382
7278 921.876964566388
5298 730.129320722141
3856 569.321298285514
2807 444.464772609148
2043 338.563956158658
1487 256.820764550324
1082 199.371251263352
788 151.518582913528
573 128.056611743859
417 88.4672878074439
303 70.909756169937
221 55.4035601061034
161 34.2302972124705
117 33.418917895361
85 24.5881170687663
62 18.9084220075427
45 14.4666214371714
32 10.138627472861
23 7.38739589901983
17 5.79102868865375
12 4.13377361608335
9 3.02188727225571
6 1.93259562627923
4 1.38548323020342
3 1.00804375391702
2 0.69309479616628
1 0.346573590279973
1 0.346573590279973
1 0.346573590279973
};
\addlegendentry{\tiny \textbf{\ref{eq:KE-SDK}} (random)$\!\!\!$}

\end{axis}

\end{tikzpicture}
    \vspace{-1.5\intextsep}
    \caption{Empirical information gain $\hat{\gamma}$ for a 2D linear system scaled to remove effects of constants. %
    The improved rates confirm our theoretical results for Koopman-equivariant GPs, leading to a lower information gain compared to their non-equivariant counterpart \eqref{eq:SDK}, even when a randomly sampled eigenvalue spectrum $\{\eig_j\}_{j=1}^D$  is used instead of the true spectrum.
    }
    \label{fig:enter-label}
\end{figure}
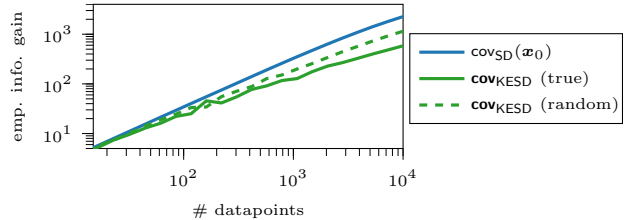
\textbf{Qualitative Comparison~~} We first qualitatively compare the different approaches on the predator-pray model as illustrated in Figure~\ref{fig:illustrative}. It can be clearly seen that all methods allow to accurately predict the future trajectory when given noise-free data from the dynamical system. However, when the state trajectories are perturbed by noise as commonly encountered in practice, significant differences between the predictions become apparent. While our proposed KE-GP maintains a high accuracy and reasonably small confidence intervals, the estimated uncertainty of the C-GP considerably grows and the prediction accuracy for longer horizons significantly drops for the Koopman operator regression (KOR) approach from \cite{Kostic2022LearningSpaces}. This high accuracy of KE-GPs can be attributed to their strong generalization capabilities captured by the information gain as discussed in Section \ref{section:analysisofsamplecomplexity}. When empirically comparing this value, we can immediately see an improvement over non-equivariant kernels, cf. Figure \ref{fig:enter-label}.\looseness=-1

\textbf{Quantitative Evaluation~~} We perform two evaluations for each model run and dataset: a small subset for which exact inference is possible and a large subset handled using variational inference. 
We observe that KE-GP performs robustly on all datasets and sizes as depicted in Tables~\ref{tab:smallsim} and \ref{tab:largesim}. While it consistently outperforms the C-GP, the KOR baseline is better on some datasets but the difference in accuracy is marginal. Importantly, KE-GPs provide a significant improvement over KOR for the other data sets. This is fully in line with our qualitative comparison, which shows that KOR can be sensitive to noise with a severe impact on its performance. In addition, we want to stress here that the KOR method does not come with methods for automated model selection, such that manual parameter tuning was necessary to make it competitive. Therefore, this comparison clearly demonstrates the improved generalization ability achieved by embedding the operator-theoretic foundations in our KE-GP approach.\looseness=-1

\section{CONCLUSION}
\label{sec:Concl}
We presented a novel approach to incorporate an operator-theoretic dynamical system structure into Gaussian process regression. Our framework enables a tractable probabilistic treatment of continuous-time dynamical models not present in existing literature. Utilizing a symmetrization tailored to dynamical systems, based on the concept of Koopman-equivariance (KE), we achieve a sample-complexity reduction compared to a contextual kernel without our proposed \eqref{eq:KoopObs} structure. In scaling to large datasets we exploit our model structure to avoid sampling any time/context-related inducing points. Hence, it does not suffer from a lack of correlation between inducing points, which is common for conventional sparse GP.
Through numerical experiments, we show the utility of our KE-GP, demonstrating superior prediction performance to vanilla contextual GPs and on par or better than Koopman operator learning.

\subsubsection*{Acknowledgements}
This work is supported by the DAAD programme Konrad Zuse Schools of Excellence in Artificial Intelligence (relAI), sponsored by the German Federal Ministry of Education and Research and by the European Research Council (ERC) Consolidator Grant “Safe data-driven control for human-centric systems (CO-MAN)” under grant agreement number 864686.

\onecolumn
\aistatstitle{
Supplementary Materials for ``Koopman-Equivariant Gaussian Processes"}

\appendix

The supplementary materials are organized as follows.
\begin{itemize}
\item Appendix~\ref{supl:related} expands on various aspects of related work from the main paper in greater detail.
    \item  Appendix~\ref{supl:koop} contains background on standing assumptions and spectral theory of Koopman operators. 
    \item To complement the sample complexity analysis, we address our framework's representation/approximation power in Appendix \ref{supl:repPWR}.
    \item  The proofs of theoretical results are found in Appendix~\ref{supl:Proof}.
    \item Details on the setup of numerical experiments are found in Appendix \ref{supl:NumExp}.
\item Finally, Appendix~\ref{supl:AddExp} includes additional experiments and ablation studies.
\end{itemize}

\section{EXPANDED RELATED WORK}\label{supl:related}
\paragraph{Vanilla GPs}
Gaussian process (GP) regression \citep{Rasmussen2006} has attracted attention for learning nonlinear dynamical systems due to its capability of inferring models with little structural prior knowledge: either by using so-called universal kernels \citep{Micchelli2006UniversalKernels} or placing a prior on a set of kernels and optimizing their likelihood of explaining the data \citep{Duvenaud2014}. 
In particular, their ability to quantify epistemic uncertainty has led to a common application in safety-critical control problems \citep{BerkenkampECC15,pmlr-v37-sui15,NIPS2017_766ebcd5,CuriCDC22}. However, commonly used GP models are single-step predictors, such that uncertainty propagation necessitates approximations when predicting probability distributions more than a single time step in the future. Uncertainty propagation often relies on iterative approaches, in which the previous predictions are used as uncertain inputs to the GP model. This can be exploited in a sampling-based fashion by randomly drawing states \citep{Bradford2019} or using the unscented transform \citep{Ko2007a}.
While the computational complexity of sampling-based approaches can be reduced through further approximations \citep{pmlr-v120-hewing20a, TBpredGP}, it generally remains high.
Approximating the predictive distributions, e.g., using a Taylor approximation \citep{Girard2003} or through exact moment matching \citep{Deisenroth2011PILCO:Search} can reduce the complexity, there are no accuracy guarantees of these approximations for long-term forecasts. 
Direct solutions to these challenges include, e.g., direct modeling of uncertainty intervals \citep{Polymenakos2020, Curi2020} or using ``a GP per time-step'' of prediction \citep{Pfefferkorn2022}, which suffers from a lack of time-correlation and forecast non-linearity.

\paragraph{State-space GPs}
A variety of works considers models with task correlation. Considering modeling dynamical systems, latent variable state-space models have the ability to decouple the model into the dynamics (process) and static (output) structures \citep{Wang2005_ccd45007,pmlr-v9-titsias10a, Frigola2014, Damianou2016, NIPS2017_1006ff12}. Like the recent work of \citep{Fan2023}, these models are limited to settings where a single trajectory is available and do not exploit any time-series structure. Still, they require posterior approximations due to the nonlinearity of latent dynamics. 
Aiding tractability, some works consider linear time-invariant (LTI) models \citep{pmlr-v5-alvarez09a,Sarkka2013}, but come with strong prior-knowledge requirements and unclear representational power. In particular, our KE-GPs can be considered as a continuous
contextual GP for dynamical systems with \eqref{eq:KoopObs} structure.

\paragraph{Koopman operator-based approaches}
While operator regression \citep{Williams2015ADecomposition,Klus2020EigendecompositionsSpaces,Kostic2022LearningSpaces,Li2022OptimalLearning,Ishikawa2024,Mauroy2024,Meunier2024} could be applied to build an LTI predictor \eqref{eq:KoopObs}, this comes with inherent limitations. Namely, the recovery of normal spectra and eigenspaces of $\mathcal{A}_t$ using operator regression in an infinite-dimensional RKHS is an \textit{ill-posed inverse problem} \citep{Knapik2011,Knapik2016,Horowitz2014}. Spectral estimation gets increasingly hard with the eigenvalue decay of the covariance \citep{klebanov2020rigorous}, limiting the utility of estimated spectra and eigenspaces. To mitigate these effects, \citet{Kostic2023KoopmanEigenvalues} suggests using low-rank estimators and empirically estimated RKHSs \citep{kostic2024learning} to control the degree of ill-posedness. However, there is no guarantee such a low-rank representation would span an observable of interest and form an LTI predictor \eqref{eq:KoopObs}. In stark contrast, our KE-GP regression bypasses this ill-posedness by construction and directly learns a universal representation of \eqref{eq:KoopObs} in a probabilistic fashion using Bayesian principles.
    Furthermore, \textit{our approximation-based complexity bounds} (in terms of information gain) \textit{are measure-independent and do not require any i.i.d.-type sampling assumptions}. This is in stark contrast to state-of-the-art concentration results in Koopman operator learning by \cite{Kostic2022LearningSpaces,Kostic2023KoopmanEigenvalues} that are dependent on measures, cf. \citep{pmlr-v75-belkin18a} for a discussion.

\paragraph{Works connecting Koopmanism and GPs}
Previous attempts at connecting Koopmanism to GPs \citep{Lian2020OnOperators,10.1162/neco_a_01555,Loya2023}  rely on heuristics and ad-hoc chocies, lacking theoretical justification as well as rigorous representational considerations. Furthermore, they hinge on heuristics by applying subspace-identification or dynamic mode decomposition before applying Gaussian process regression. In contrast, we offer a principled and fully-tractable approach with provable representational and learning guarantees.

\paragraph{Signature kernels}
Also geared towards sequential data, there is a recent rise in popularity of so-called \textit{signature kernels} \citep{Kiraly2016KernelsData, Lee2023TheKernel,Salvi2021ThePDE, Lemercier2021SigGPDE:Data} that also use a symmetrization to be rendered time-reparametrization invariant. This prohibits the extraction of dynamical system representations related to transfer operators and their eigenfunctions. Crucially, time-reparametrization-invariance allows them to excel at discriminative tasks \citep{Lemercier2021SigGPDE:Data,Salvi2021ThePDE} but not at generative tasks such as long-term forecasting \citep{KKR_neurips2023}.

\section{KOOPMAN OPERATOR MODELS FOR DETERMINISTIC DYNAMICS}\label{supl:koop}

\begin{remark}[Operator boundedness]\label{rmk:bounded}
    Consider a forward complete system on a compact set $\Set{X}$ and a continuous flow $\bm{F}_{t}$. It is well-known that a  time-$t$ Koopman operator $\mathcal{A}_t$ is then a contraction semigroup on ${C}(\Set{X})$ \citep{Kreidler2018CompactSystems}. Due to forward completeness of the flow, we therefore obtain a Banach algebra ${C}(\Set{X})$ with a bounded semigroup $\{\mathcal{A}_t\}_{t\geq0} \in \mathcal{B}({C}(\Set{X}))$.%
\end{remark}

\begin{definition}[Non-recurrent domain]\label{def:Nonrec}
Let time \(T \in(0, \infty)\) be given. A set \(\Set{X}_0\subset\) \(\Set{X}\) is called nonrecurrent if
\[
\bm{x}\in \Set{X}_0 \Longrightarrow \bm{F}_t(\bm{x}) \notin \Set{X}_0 \quad \forall t \in(0, T].
\]
A non-recurrent domain is the image $\Set{X}_{T}$ of non-recurrent set of initial conditions $\Set{X}_0$ traced out by the flow map $\bm{F}_t(\cdot)$
\[
\Set{X}_{T}=\bigcup_{t \in[0, T]} \bm{F}_t(\Set{X}_0)=\bigcup_{t \in[0, T]}\left\{\bm{F}_t\left(\bm{x}_{0}\right) \mid \bm{x}_{0} \in\Set{X}_0\right\}.
\]
\end{definition}

Less formally, one can think of the non-recurrent domain as the domain \emph{where flow does not intersect itself}.

Practically, non-recurrence is commonly ensured by a choice of the time interval $[0,T]$ so no periodicity is exhibited. Note that it does not mean the system's behavior is not allowed the be periodic, but our perception of it via data does. Effectively this prohibits the multi-valuedness of eigenfunctions -- allowing them to define an injective feature map.
    Thus, non-recurrence is a certain but general condition that bounds the time-horizon $T$ in which it is feasible to completely describe the nonlinear system's flow via an LTI predictor.
    
Note that our Assumption \ref{asm:Sspec} requires the existence of a nonrecurrent set that allows for a nonrecurernt domain.
It makes for a less-restrictive and intuitive condition compared to existing RKHS approaches \citep{Kostic2022LearningSpaces,Kostic2023KoopmanEigenvalues} that rely on the self-adjointness and compactness of the actual Koopman operator, which is rarely fulfilled for deterministic dynamics \eqref{eq:SSmodel} and hard to verify without prior knowledge.

\subsection{Koopman Mode Decomposition (KMD)}

As in the main text, when referring to \textit{Koopman Mode Decomposition} \eqref{eq:KoopObs}, we let the eigenfunctions absorb the spatial mode coefficients $\langle{ g^\prime_j,h}\rangle$ (possible w.l.o.g.) as they correspond to eigenfunctions $g_j$ and not eigenvalues $\eig_j$ \cite[Definition 9]{Budisic2012AppliedKoopmanism}.
\begin{lemma}[Universality of \eqref{eq:KoopObs}]\label{lem:universal}
    Consider a quantity of interest $h \in C(\Set{X})$, a forward-complete system flow $\bm{F}_{t}(\cdot)$ on a non-recurrent domain $\Set{X}$ (Definition \ref{def:Nonrec}) of a compact set $\Set{X}$. Then, the output trajectory ${y}(t) = {h}(\bm{x}(t)), \forall t \in [0,T]$ is arbitrarily closely described by the eigenpairs $\{\exp{\eig_j t},g_j\}_{j \in \Set{N}} {\subseteq} (\Set{C} \tsgn{\times} C(\Set{X}))$ of the Koopman operator semigroup $\{\operator{A}_t\}^{T}_{t\tsgn{=}0}$ so that $\forall \varepsilon > 0, \exists \bar{D} \in \Set{N}$
    \begin{equation}\label{modeDecom}
 |{h}(\bm{x}(t)) - \textstyle{\sum^{\bar{D}}_{j =1}}\operatorname{e}^{\eig_j t} g_j (\bm{x}\naught) | < \varepsilon, \forall t \in [0,T].
    \end{equation}
\end{lemma}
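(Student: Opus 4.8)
The plan is to reduce the claim to a purely spatial uniform approximation of $h$ on the compact non-recurrent domain $\Set{X}$ by a finite linear combination of continuous Koopman eigenfunctions, and then to transport that bound along the flow to obtain \eqref{modeDecom}. Suppose we can produce continuous eigenpairs $(\eig_j,g_j)\in\Set{C}\times C(\Set{X})$, $j\in[\bar D]$, with the mode coefficients absorbed into the $g_j$ (possible w.l.o.g., cf. \citet[Definition 9]{Budisic2012AppliedKoopmanism}), such that $\norminf{h-\sum_{j=1}^{\bar D} g_j}<\varepsilon$ on $\Set{X}$. Since the flow is forward complete and, by Definition \ref{def:Nonrec}, $\bm{x}(t)=\bm{F}_t(\bm{x}_0)\in\Set{X}$ for all $t\in[0,T]$, I would evaluate this bound at $\bm{x}(t)$ and invoke the eigenrelation $g_j\circ\bm{F}_t=\exp{\eig_j t}\,g_j$ (cf. Definition \ref{def:KEIGS}) to get $|h(\bm{x}(t))-\sum_{j=1}^{\bar D}\exp{\eig_j t}\,g_j(\bm{x}_0)|=|h(\bm{x}(t))-\sum_{j=1}^{\bar D} g_j(\bm{x}(t))|\le\norminf{h-\sum_{j=1}^{\bar D} g_j}<\varepsilon$ uniformly in $t\in[0,T]$, which is exactly \eqref{modeDecom}. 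Hence the whole problem collapses onto the spatial approximation, and the time horizon enters only to guarantee $\bm{x}(t)\in\Set{X}$.

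The spatial approximation I would obtain from the Stone--Weierstrass theorem applied to the algebra $\mathcal{B}\subset C(\Set{X})$ of finite linear combinations of continuous eigenfunctions. First I would check that $\mathcal{B}$ is a unital, self-adjoint subalgebra: the constant $1$ is an eigenfunction with eigenvalue $0$; a product of eigenfunctions is again one, since $(g_jg_k)\circ\bm{F}_t=(g_j\circ\bm{F}_t)(g_k\circ\bm{F}_t)=\exp{(\eig_j+\eig_k)t}\,g_jg_k$, so $\mathcal{B}$ is closed under multiplication; and conjugation maps the eigenpair $(\eig_j,g_j)$ to $(\bar\eig_j,\bar g_j)$, so $\mathcal{B}$ is closed under complex conjugation. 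As $\Set{X}$ is compact Hausdorff, the complex Stone--Weierstrass theorem then gives $\overline{\mathcal{B}}=C(\Set{X})$ in the uniform norm, \emph{provided} $\mathcal{B}$ separates the points of $\Set{X}$; continuity of $h$ and compactness of $\Set{X}$ then yield the finite $\bar D$-term truncation required above.

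The crux is the point-separation of $\mathcal{B}$, and this is precisely where non-recurrence is indispensable. On a non-recurrent domain (Definition \ref{def:Nonrec}) the flow does not self-intersect, so the space--time map $(t,\bm{x}_0)\mapsto\bm{F}_t(\bm{x}_0)$ is injective and furnishes a trajectory coordinate along each orbit together with transversal coordinates across orbits; this rules out multi-valuedness, so the eigenfunctions are well defined, and it supplies eigenfunctions that vary along orbits as well as ones that distinguish distinct orbits, so that $\mathcal{B}$ separates points. The existence of such continuous eigenpairs for the required eigenvalues, and the consequent density of \eqref{eq:KoopObs} in $C(\Set{X})$, is the content of \citet{Korda2020OptimalControl} (cf. \citet[Appendix A]{KKR_neurips2023}), while the boundedness and semigroup structure of $\{\mathcal{A}_t\}_{t\in[0,T]}$ underlying these constructions are recorded in Remark \ref{rmk:bounded}. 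I expect the genuine obstacle to sit here: having many eigenfunctions is not the same as having a point-separating family, and excluding degeneracies relies on the injectivity of the flow granted by non-recurrence. Once separation is secured, Stone--Weierstrass and the substitution of the first paragraph complete the proof.
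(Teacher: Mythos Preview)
Your proposal is correct and follows essentially the same route as the paper: both arguments rest on the density of continuous Koopman eigenfunctions in $C(\Set{X})$ on a non-recurrent domain, which is the content of \citet[Theorem 2]{Korda2020OptimalControl}. You spell out the underlying Stone--Weierstrass mechanism (unital self-adjoint algebra, point separation via non-recurrence) and the clean reduction from the time-uniform bound to a purely spatial one, whereas the paper's proof simply cites \citet{Mezic2020SpectrumGeometry}, \citet{Kuster2015TheSystems}, and \citet{Korda2020OptimalControl} for continuity of eigenfunctions, spectral richness, and universality, respectively.
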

\begin{proof}
With continuous eigenfunctions for continuous systems proved valid in \cite[Lemma 5.1]{Mezic2020SpectrumGeometry},\cite[Theorem 1]{Korda2020OptimalControl}, the space of continuous functions over a compact set is naturally the space of interest. On a non-recurrent domain, there exist uniquely defined non-trivial eigenfunctions and, by \cite[Theorem 3.0.2]{Kuster2015TheSystems}, the spectrum is rich -- with any eigenvalue in the closed complex unit disk legitimate \citep{Ikeda2022KoopmanSpaces}. Further, by \cite[Theorem 2]{Korda2020OptimalControl}, this richness is inherited by the Koopman eigenfunctions --- making them universal approximators of continuous functions.
\end{proof}
\paragraph{Intuition on spectral sampling}
One may wonder if sampling spectra from a set enclosing the true spectrum may be enough to represent the spectral decomposition of the Koopman operator. Recalling that the spectral decomposition consists of projections to eigenspaces, we remark on a well-known result.
\begin{remark}\label{rmk:ChoiceOfMeasures}
    The choice of our measure of integration might seem arbitrary, and it indeed is. Since we, in general, do not assume knowledge of the spectrum of the Koopman-semigroup, we \textit{have to} make an approximation. To this end, an educated guess on where the (point-) spectrum might be located is helpful. As elaborated above, the Hille-Yosida-Theorem provides a convenient way to connect the practically attainable growth rates to bounds on the spectrum. 
    The Riesz projection operator $P_\eig: \raum{C}\mapsto \{g\in\raum{C}: \operator{A}g=\eig g\}$ to an eigenspace of $\operator{A}$ can be represented by 
    \[P_{\eig} = \frac{1}{2\pi i}\normalint_{\gamma_{\eig}} \frac{\d{s}}{s - \operator{A}},\]
    where $\gamma_{\eig}$ is a Jordan curve enclosing $\eig$ and no other point in $\sigma(\operator{A})$ \citep{Dunford1943SpectralProjections}. %
    Obviously $\bigcup_{\eig\in\sigma(\operator{A})} \operatorname{range}(P_{\eig})=\raum{C}$, iterating on the fact that we can represent the operator $T$ by its spectral components.
    It becomes apparent that sampling from a set enclosing $\sigma(\eig)$ can be seen as sampling curves, eventually enclosing sufficient spectral components. And as stated, one can choose arbitrary measures on $\Complex$ as long as one ensures they enclose the spectrum. 
\end{remark}

\section{REPRESENTATIONAL POWER OF KOOPMAN SPECTRAL KERNELS}\label{supl:repPWR}

When the Koopman operator is spectral, e.g., on a non-recurrent domain, the canonical representation of a Koopman operator acting on a well-specified observable ${h} \in \RKHS$ remains well-specified.
\begin{lemma}
  Denote by $[\mathsf{y}_t^{\txt{\tiny KE}}]_{\sim}$ the $L_2$ equivalence class of $\mathsf{y}_t^{\txt{\tiny KE}}$ and denote 
  \begin{align}
    \mathcal{A}^{[\tau_s,0]}_t = \sum^{\infty}_{j=1}\exp{\eig_j t} \mathcal{E}_{\eig_j}^{[\tau_s,0]}
\end{align}
  as the canonical spectral representation of a Koopman operator on the time-interval $[\tau_s,0]$. If $h_0 \in \RKHS$, there exists a kernel $k_y^{\txt{\tiny KE}}$, with integral operator $\mathcal{T}_{k_y^{\txt{\tiny KE}}}={ \mathcal{A}^{[\tau_s,0]}_t} \mathcal{T}_{k_x} {\mathcal{A}^{[\tau_s,0]}_t}^* $, s.t. for $\mathsf{h} \sim \mathcal{GP}\left(0, k_{x}\right), \mathsf{y}_t^{\txt{\tiny KE}} \sim \mathcal{GP}\left(0, k_y^{\txt{\tiny KE}}\right),[\mathsf{y}_t^{\txt{\tiny KE}}]_{\sim}$ has the same distribution as $ \mathcal{A}^{[\tau_s,0]}_t[\mathsf{h}]_{\sim}$.
  \begin{proof}
      Lemma 3.1 \citep{Wang2022}.
  \end{proof}
\end{lemma}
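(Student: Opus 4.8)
The plan is to read this statement as a concrete instance of the general ``GP-under-a-bounded-linear-operator'' principle (the cited Lemma~3.1 of \citet{Wang2022}) and to spend the real effort verifying that the Koopman mode operator $\mathcal{A}^{[\tau_s,0]}_t$ satisfies that principle's hypotheses. First I would make precise the sense in which a zero-mean Gaussian process is an operator-valued object: since $k_x$ is a bounded Mercer kernel (Assumption~\ref{asm:Hreg}) on the finite-measure space $(\mathcal{X},\mu)$, one has $\mathbb{E}\,\|\mathsf{h}\|_{L^2_\mu}^2 = \operatorname{tr}(\mathcal{T}_{k_x}) < \infty$ (the trace being the integral of $k_x(x,x)$ over $\mathcal{X}$), so the sample paths of $\mathsf{h}\sim\mathcal{GP}(0,k_x)$ lie in $L^2_\mu(\mathcal{X})$ almost surely and $[\mathsf{h}]_\sim$ is a centered Gaussian random element of the Hilbert space $L^2_\mu(\mathcal{X})$ whose covariance operator is exactly $\mathcal{T}_{k_x}$. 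This identification of a GP with a Gaussian measure on $L^2_\mu$ carrying covariance operator equal to the kernel integral operator is the structure on which the pushforward lemma rests.

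Second, I would establish that $A := \mathcal{A}^{[\tau_s,0]}_t = \sum_{j=1}^\infty \exp{\eig_j t}\,\mathcal{E}_{\eig_j}^{[\tau_s,0]}$ is a bounded linear operator on $L^2_\mu(\mathcal{X})$ with an explicit adjoint. By Theorem~\ref{thm:Symm} each $\mathcal{E}_{\eig_j}^{[\tau_s,0]}$ is self-adjoint and returns the minimizer over the closed fixed-point subspace $\mathcal{S}_{\eig_j}$, hence is the orthogonal projection onto $\mathcal{S}_{\eig_j}$, with operator norm at most one and $(\mathcal{E}_{\eig_j}^{[\tau_s,0]})^2 = \mathcal{E}_{\eig_j}^{[\tau_s,0]}$. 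Combined with the contraction-semigroup property of the Koopman operator (Remark~\ref{rmk:bounded}) and the compact-normality of $A_1$ in Assumption~\ref{asm:Areg} — which makes the $\mathcal{E}_{\eig_j}^{[\tau_s,0]}$ mutually orthogonal spectral projections with eigenvalues accumulating only at $0$ — the series converges in operator norm to a bounded (indeed compact normal) operator, and $A^\ast = \sum_{j}\exp{\eig_j^\ast t}\,\mathcal{E}_{\eig_j}^{[\tau_s,0]}$.

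Third, I would invoke the pushforward result: the image of a centered Gaussian random element $[\mathsf{h}]_\sim$ under a bounded linear map $A$ is again centered Gaussian, with covariance operator $A\,\mathcal{T}_{k_x}\,A^\ast$. Because $\mathcal{T}_{k_x}$ is trace-class and $A$ is bounded, $A\,\mathcal{T}_{k_x}\,A^\ast$ is positive, self-adjoint, and trace-class, so Mercer's theorem returns a symmetric positive-definite kernel $k_y^{\txt{\tiny KE}}$ with $\mathcal{T}_{k_y^{\txt{\tiny KE}}} = A\,\mathcal{T}_{k_x}\,A^\ast$. Then $\mathsf{y}_t^{\txt{\tiny KE}} \sim \mathcal{GP}(0,k_y^{\txt{\tiny KE}})$ has covariance operator $A\,\mathcal{T}_{k_x}\,A^\ast$, matching that of $A[\mathsf{h}]_\sim$; since a centered Gaussian measure on a separable Hilbert space is uniquely determined by its covariance operator, $[\mathsf{y}_t^{\txt{\tiny KE}}]_\sim \stackrel{d}{=} A[\mathsf{h}]_\sim$, which is the assertion.

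The hard part is the gap between the GP as a family of pointwise random variables and as an $L^2_\mu$-valued random element. GP sample paths generically fail to lie in the RKHS $\RKHS$, so even though $\mathcal{A}^{[\tau_s,0]}_t$ and its constituents $\mathcal{E}_{\eig_j}^{[\tau_s,0]}$ are most naturally described through the equivariance/RKHS structure, I must argue that $A$ acts on the larger space $L^2_\mu$ where the paths actually live, and that the resulting equality of laws holds only up to $\mu$-null modifications — which is exactly why the statement is phrased via the equivalence class $[\cdot]_\sim$. The hypothesis $h_0 \in \RKHS$ guarantees the mode decomposition is well-specified, and the boundedness/compact-normality supplied by Remark~\ref{rmk:bounded} and Assumption~\ref{asm:Areg} are precisely what keep $A\,\mathcal{T}_{k_x}\,A^\ast$ trace-class and the pushforward Gaussian; checking these hypotheses for the specific operator $\mathcal{A}^{[\tau_s,0]}_t$, rather than reproving the abstract pushforward, is where the substance lies.
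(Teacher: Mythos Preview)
Your proposal is correct and follows essentially the same route as the paper: both reduce the claim to Lemma~3.1 of \citet{Wang2022}, the pushforward-of-a-Gaussian-process-under-a-bounded-linear-operator result. The paper simply cites that lemma without further comment, whereas you additionally spell out why the hypotheses (trace-class $\mathcal{T}_{k_x}$, boundedness and adjoint of $\mathcal{A}^{[\tau_s,0]}_t$ via Theorem~\ref{thm:Symm}, Remark~\ref{rmk:bounded}, and Assumption~\ref{asm:Areg}) are met in this specific setting.
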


\begin{remark}
    Note  that the above holds for the \eqref{eq:SDK} when setting the individual equivariance operators to the identity so that  $\mathcal{A}^{[0,0]}_t = \sum^{\infty}_{j=1}\exp{\eig_j t}I_j$ 
\end{remark}

The above infinite sum may seem concerning. However, under mild conditions \ref{asm:Sspec}, there always exists a finite rank representation that is dense in the space of continuous function equipped with the supremum norm (universal). This is formalized in the following.

\begin{lemma}[Universality]
Let \ref{asm:Sspec} hold and consider a universal base kernel $k_x$ so that $\{k_{g_j}= k_x\}^D_{j=1}$. Then the induced kernels \eqref{eq:SDK} and \eqref{eq:KE-SDK} are universal, given a sufficiently rich spectral components $\{\eig_j\in \Set{C}\}^{D}_j $.
    \begin{proof}
       The universality of \eqref{eq:SDK} follows directly by \citet[Theorem 2]{Korda2020OptimalControl}. With a well-specified symmetrization by \ref{asm:Sspec}, the universality for functions satisfying Koopman-equivariance is inherited by applying \citet[Theorem 1 (ii)]{KKR_neurips2023} component-wise.
    \end{proof}
\end{lemma}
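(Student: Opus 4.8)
The plan is to establish universality — density of each induced kernel's RKHS in $C(\Set{X})$ under the supremum norm — by reducing both claims to the universality of the Koopman mode decomposition (Lemma~\ref{lem:universal}) together with the universality of the base kernel $k_x$, and then arguing separately that the equivariant symmetrization preserves this property.

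First I would treat \eqref{eq:SDK}. The RKHS of the additive kernel $\sum_{j\in[D]} a_j(t,t') k_{g_j}(\cdot,\cdot')$ is the sum of the tensor-product RKHSs and hence contains every finite combination $\sum_{j\in[D]} \operatorname{e}^{\eig_j t} g_j(\bm{x}_0)$ with $g_j \in \RKHS_{k_x}$. By \ref{asm:Sspec} the data lie on a non-recurrent domain, so Lemma~\ref{lem:universal} (equivalently \citet[Theorem 2]{Korda2020OptimalControl}) guarantees that for a sufficiently rich spectrum $\{\eig_j\}_{j=1}^{D}\subseteq\Set{C}$ and suitable eigenfeatures $g_j$, such finite sums approximate any output trajectory $h(\bm{x}(t))$ uniformly to within $\varepsilon/2$. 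Since $k_x$ is universal, each required $g_j$ can in turn be approximated in the uniform norm by some $\hat g_j \in \RKHS_{k_x}$; propagating this through the finitely many bounded exponential coefficients $\operatorname{e}^{\eig_j t}$ over the compact time interval yields a further $\varepsilon/2$ approximation, and the triangle inequality closes the argument, so \eqref{eq:SDK} is universal.

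Next I would handle \eqref{eq:KE-SDK}, where $k_{g_j}$ is replaced by $k_{\phi_{\eig_j}} = \mathcal{E}_{\eig_j} k_x \mathcal{E}^*_{\eig_j}$, so its RKHS consists of the symmetrized, Koopman-equivariant features. The crucial observation is that the \emph{true} Koopman eigenfunctions for eigenvalue $\eig_j$ are themselves Koopman-equivariant in the sense of Definition~\ref{def:KEIGS}, hence lie in $\mathcal{S}_{\eig_j}$ and are fixed by the projection, $\mathcal{E}_{\eig_j} g_j = g_j$; moreover \ref{asm:Sspec} guarantees that $\mathcal{S}_{\eig_j}$ is non-vacuous (uncountably many equivariant functions exist per eigenvalue). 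Since $\mathcal{E}_{\eig_j}$ is the orthogonal projection onto $\mathcal{S}_{\eig_j}$ (Theorem~\ref{thm:Symm}) and $g_j \in \mathcal{S}_{\eig_j}$, any base approximation $\hat g_j$ is only improved by projection, $\normH{g_j - \mathcal{E}_{\eig_j}\hat g_j}{\mu} = \normH{\mathcal{E}_{\eig_j}(g_j - \hat g_j)}{\mu} \leq \normH{g_j - \hat g_j}{\mu}$, so the universal approximating family survives the symmetrization. Applying \citet[Theorem 1 (ii)]{KKR_neurips2023} component-wise then transfers universality to the equivariant features $\phi_{\eig_j}$, and combining this with the mode-decomposition argument above yields universality of \eqref{eq:KE-SDK}.

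The main obstacle I anticipate is the mismatch of norms: universality is a statement in the supremum norm on $C(\Set{X})$, whereas the symmetrization operator and its contraction property live in $L^2_\mu(\Set{X})$. Closing this gap requires either invoking \citet[Theorem 1 (ii)]{KKR_neurips2023} at the level of the RKHS (so the equivariant kernel's universality is certified directly rather than through the $L^2$ contraction), or an intermediate argument passing from uniform to $L^2_\mu$ approximation — immediate on the compact domain with finite $\mu$ — for the projection step and back, the latter needing uniform control of the symmetrized approximants via the boundedness in \ref{asm:Hreg}. I would prioritize routing the equivariant case entirely through \citet[Theorem 1 (ii)]{KKR_neurips2023}, using the non-vacuousness from \ref{asm:Sspec} as the hypothesis that makes that theorem applicable, so that the $L^2$ contraction serves only as the intuition for why the projection cannot destroy the approximating family.
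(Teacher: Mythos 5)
Your proposal is correct and takes essentially the same route as the paper: universality of \eqref{eq:SDK} via \citet[Theorem 2]{Korda2020OptimalControl} on the non-recurrent domain guaranteed by \ref{asm:Sspec}, and universality of \eqref{eq:KE-SDK} by applying \citet[Theorem 1 (ii)]{KKR_neurips2023} component-wise to the symmetrized features. Your additional projection/contraction discussion and your resolution of the sup-norm versus $L^2_\mu$ mismatch---routing the equivariant case through the cited theorem at the RKHS level, with non-vacuousness of $\mathcal{S}_{\eig_j}$ from \ref{asm:Sspec} as the enabling hypothesis---merely make explicit what the paper's two-sentence proof leaves implicit.
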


In the above lemma, the "sufficiently rich" can be understood as a set of eigenvalues that enclose the true spectrum. This is straightforwardly achieved by sampling eigenvalues from a distribution with support that encloses the true spectra \citep[Proposition 3]{KKR_neurips2023}.

\begin{remark}[Uncountable eigenpairs]\label{rem:InfKEIGS}
    Under Assumption \ref{asm:Sspec} any and all eigenvalues are legitimate, and, for each $\eig_j$, there are at least uncountably infinitely many eigenfunction-eigenvalue pairs \citep[Corollary 3]{Bollt2021GeometricRepresentation}.
\end{remark}

While na\"{i}vely, one would be tempted to optimize a large set of individual eigenvalues, this may be a highly ill-posed problem; as indicated by  Remark \ref{rem:InfKEIGS} and limits the optimization to a very few eigenvalues in practice \citep{Korda2020OptimalControl,caldarelli2024linear}. This is a key motivation in our likelihood optimization of the eigenvalue distribution with only a few degrees of freedom, allows us to efficiently \textit{choose the most likely set of eigenvalues amongst the infinite possibilities}.

\section{PROOFS OF THEORETICAL RESULTS}\label{supl:Proof}

In the following, we restate the definition of Koopman equivariance for completeness.

\begin{definition}[Definition \ref{def:KEIGS} restated]
    Let $[\tau_s,\tau_e] \subset \Set{R}$ be a compact subset of the time axis and $\mathcal{M}$ a manifold. 
    A map $\phi_{\eig}: \mathcal{M} \mapsto \Set{C}^\mathcal{M}$ is called $ [\tau_s,\tau_e]_{\eig}$-{\em Koopman-equivariant} if 
    \begin{align}
        \phi_{\eig}\circ\bm{F}_t=\exp{\lambda t} \phi_\eig
    \end{align}
    on $\mathcal{M}$ for any $t \in [\tau_s,\tau_e]$.
\end{definition}

\subsection{Symmetrization Based on Koopman Equivariance}

\begin{theorem}[Theorem \ref{thm:Symm} restated \& expanded]
Consider the symmetrization operator $\KEop{[\tau_s,\tau_e]}{\eig}: L_{\mu}^2(\mathcal{X}) \rightarrow L_{\mu}^2(\mathcal{X})$ defined as
\begin{align}\label{eq:EqvarEOSupp}
    \KEop{[\tau_s,\tau_e]}{\eig} g := \mathbb{E}_{t \sim \mu{([\tau_s,\tau_e])}}\left[ \operatorname{e}^{-\lambda t} g\left(\bm{x}(t))\right)\right]
\end{align}
so that it is well-defined and self-adjoint. Then,
\begin{enumerate}[leftmargin=*,label=\roman*.]
\item \label{itm:iffKE} a function $f \in L_{\mu}^2(\mathcal{X})$ is $\KEqvar{[\tau_s,\tau_e]}\text{-equivariant}$ if and only if $\KEop{[\tau_s,\tau_e]}{\eig}[f] = f$, implying $\KEop{[\tau_s,\tau_e]}{\eig}$ is a projection operator so $\|\KEop{[\tau_s,\tau_e]}{\eig}\|=1$ if $L_{\mu}^2(\mathcal{X})$ contains any $\KEqvar{[\tau_s,\tau_e]}\text{-equivariant}$ functions ($\|\KEop{[\tau_s,\tau_e]}{\eig}\|=0$ otherwise); 
 \item \label{itm:Odecomp} $L_{\mu}^2(\mathcal{X})$ decomposes into symmetric and anti-symmetric part
$L_{\mu}^2(\mathcal{X})=\mathcal{S}_\eig \oplus \mathcal{S}_\eig^\perp$
where $\mathcal{S}_\eig=\left\{g \in L_{\mu}^2(\mathcal{X}): g\right.$ is  $\KEqvar{[\tau_s,\tau_e]}\text{-equivariant}\}$ and $\mathcal{S}_\eig^\perp=\{g \in L_{\mu}^2(\mathcal{X}):\KEop{[\tau_s,\tau_e]}{\eig}  g=0\}$;
\item \label{itm:Proj}
the symmetrization operator
$\KEop{[\tau_s,\tau_e]}{\eig}$ maps $g$ to the unique solution of
\begin{align}
    \phi_{\eig} = \argmin_{\psi \in \mathcal{S}_\eig} \| g - \psi \|^2_{\mu}.
\end{align}
\end{enumerate}
\begin{proof}
\ref{itm:iffKE} Lemma C.7 Corollary C.8 \citep{elesedy21a} 
\ref{itm:Odecomp},\ref{itm:Proj} Proposition 24 and 25 \citep{elesedy21a} and Proposition 3.9. \citep{Elesedy2023}
\end{proof}
\end{theorem}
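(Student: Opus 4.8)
The plan is to show that $\KEop{[\tau_s,\tau_e]}{\eig}$ is the \emph{orthogonal projection} of $L_\mu^2(\mathcal{X})$ onto the equivariant subspace $\mathcal{S}_\eig$; once this is in hand, all three numbered claims fall out of elementary Hilbert-space facts. Writing the Koopman action as $[\mathcal{A}_t g](\bm{x}) = g(\bm{x}(t)) = g(\bm{F}_t(\bm{x}))$, the operator is the average of the \emph{twisted} family $U_t := \exp{-\eig t}\,\mathcal{A}_t$ against the probability measure $\mu$ on $[\tau_s,\tau_e]$, i.e. $\KEop{[\tau_s,\tau_e]}{\eig} = \mathbb{E}_{t\sim\mu}[U_t]$. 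Well-definedness and boundedness on $L_\mu^2(\mathcal{X})$ would follow from the contraction-semigroup bound on $\{\mathcal{A}_t\}$ (Remark on operator boundedness), the boundedness of $|\exp{-\eig t}|$ on the compact interval and finiteness of $\mu$, via Minkowski's integral inequality; self-adjointness is built in through the invariance of the averaging measure, exactly as in the Reynolds-operator construction.

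First I would dispatch the easy inclusion in claim \ref{itm:iffKE}. If $f$ is $\KEqvar{[\tau_s,\tau_e]}$-equivariant, then $f\circ\bm{F}_t = \exp{\eig t}\, f$ for all $t\in[\tau_s,\tau_e]$, so
\begin{align}
    \KEop{[\tau_s,\tau_e]}{\eig} f = \mathbb{E}_{t\sim\mu}\!\left[\exp{-\eig t}\,(f\circ\bm{F}_t)\right] = \mathbb{E}_{t\sim\mu}\!\left[\exp{-\eig t}\exp{\eig t} f\right] = f,
\end{align}
using that $\mu$ is a probability measure. The substantive direction is the converse together with idempotence: I would first record the cocycle relation $U_sU_t = \exp{-\eig(s+t)}\mathcal{A}_{s+t} = U_{s+t}$ inherited from the semigroup property $\mathcal{A}_s\mathcal{A}_t=\mathcal{A}_{s+t}$, and then use the invariance of $\mu$ under this action to obtain $U_s\,\KEop{[\tau_s,\tau_e]}{\eig} = \KEop{[\tau_s,\tau_e]}{\eig}$, whence $(\KEop{[\tau_s,\tau_e]}{\eig})^2 = \KEop{[\tau_s,\tau_e]}{\eig}$ and $\mathrm{ran}\,\KEop{[\tau_s,\tau_e]}{\eig}\subseteq\mathcal{S}_\eig$. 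This is precisely the Reynolds/group-averaging argument, and is the point at which I would invoke the abstract statements of \citet[Lemma C.7, Cor. C.8]{elesedy21a}: the symmetrization of a (twisted) representation is an idempotent, self-adjoint operator whose range is exactly the fixed-point subspace. Combined with the easy inclusion this yields $\mathcal{S}_\eig = \{f: \KEop{[\tau_s,\tau_e]}{\eig} f = f\} = \mathrm{ran}\,\KEop{[\tau_s,\tau_e]}{\eig}$, and $\|\KEop{[\tau_s,\tau_e]}{\eig}\|\in\{0,1\}$ according to whether $\mathcal{S}_\eig$ is trivial (a nonzero orthogonal projection has norm one).

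With $\KEop{[\tau_s,\tau_e]}{\eig}$ recognized as an orthogonal projection, claims \ref{itm:Odecomp} and \ref{itm:Proj} are immediate. Item \ref{itm:Odecomp} is the standard decomposition $L_\mu^2(\mathcal{X}) = \mathrm{ran}\,\KEop{[\tau_s,\tau_e]}{\eig}\oplus\ker\KEop{[\tau_s,\tau_e]}{\eig} = \mathcal{S}_\eig\oplus\mathcal{S}_\eig^\perp$, the summands being closed as the range and kernel of a bounded projection. Item \ref{itm:Proj} is then the Hilbert projection theorem: the orthogonal projection onto the closed subspace $\mathcal{S}_\eig$ is the unique minimizer of $\|g-\psi\|_\mu^2$ over $\psi\in\mathcal{S}_\eig$, so $\phi_\eig = \KEop{[\tau_s,\tau_e]}{\eig} g$.

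The main obstacle is the converse/idempotence step: unlike a genuine compact group carrying a Haar measure, here the ``symmetry'' is the flow restricted to a compact time interval, so $\mu$ is not literally translation-invariant and $U_t$ is only a semigroup. Making the Reynolds argument rigorous therefore rests on Assumption \ref{asm:Sspec} (non-recurrence), which guarantees single-valued eigenfunctions and hence a well-specified, non-trivial fixed-point space $\mathcal{S}_\eig$ — ensuring the projection is meaningful rather than vacuous and that the abstract equivariance machinery of \citet{elesedy21a,Elesedy2023} transfers faithfully to the Koopman setting.
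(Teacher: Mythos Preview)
Your proposal is correct and takes essentially the same approach as the paper: both reduce the three claims to the abstract symmetrization/equivariance results of \citet{elesedy21a,Elesedy2023}, with your write-up supplying the surrounding Hilbert-space reasoning (idempotence via the Reynolds/averaging argument, then the standard range--kernel decomposition and projection theorem) that the paper leaves implicit in its bare citations. You also correctly flag the one genuine subtlety---that $\mu$ on a compact interval is not a Haar measure on a group, so the translation-invariance step needs the non-recurrence assumption \ref{asm:Sspec} to make the Elesedy machinery applicable---which the paper does not spell out.
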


\subsection{New Information Gain Rates}
\paragraph{Technical Lemmas}
As our technical results rely on a spectral representation of the base hypothesis space, we state the following technical lemma on Mercer representations considered in this work.

\begin{lemma}[Mercer representation]\label{lem:MercerApp}
Let $\RKHS$ be any RKHS with kernel $k_x$ s.t.~$
\normalint P(d\bm{x}) k_x(\bm{x},\bm{x}) < \infty.
$
Then 
\begin{enumerate}[leftmargin=*,label=\roman*.]
   \item $\RKHS$ can be embedded into $L_2(P(d\bm{x}))$, and 
   the natural inclusion operator $\iota_x:\RKHS\to L_2(P(d\bm{x}))$ and $\iota_x^\top$ are Hilbert-Schmidt; the map $\operator{T}_{k_x}: h\mapsto \normalint P(d\bm{x}) k_x(\bm{x},\cdot) h(\bm{x})$ defines a positive, self-adjoint and trace-class operator; $\operator{T}_{k_x} = \iota_x\iota_x^\top$. 
   \item $\operator{T}_{k_x}$ has the decomposition $$
\operator{T}_{k_x} h = \sum_{i\in I} \mu_i \langle\bar{\varphi}_i, h \rangle_2\bar{\varphi}_i,
$$
where the index set $I\subset \Set{N}$ is at most countable, and $\{\bar e_i\}$ is an orthonormal system in $L_2(P(d\bm{x}))$. 
\item There exists an orthogonal system $\{e_i: i\in I\}$ of $\RKHS$ s.t.~$[e_i]_\sim = \sqrt{\lambda_i}\bar{\varphi}_i$. 
\item If $k_x$ is additionally bounded and continuous, 
$\{e_i: i\in I\}$ will %
define a Mercer's representation whose convergence is absolute and uniform. %
\end{enumerate}
\begin{proof}
\cite[Lemma 2.3, 2.2 (for i), %
       2.12 (for ii-iii), Corollary 3.5 %
       (for iv)]{Steinwart2012MercersTO}.
\end{proof}
\end{lemma}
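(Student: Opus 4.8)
The plan is to treat the statement as the standard integral-operator theory underlying Mercer's theorem (following the route of \citet{Steinwart2012MercersTO}), establishing the four claims in sequence while carefully tracking the distinction between a pointwise-defined element of $\RKHS$ and its equivalence class $[\cdot]_\sim$ in $L_2(P(d\bm{x}))$. For claim (i), I would first invoke the reproducing property $h(\bm{x}) = \langle h, k_x(\cdot,\bm{x})\rangle_{\RKHS}$ together with Cauchy--Schwarz to obtain the pointwise bound $|h(\bm{x})| \le \|h\|_{\RKHS}\, k_x(\bm{x},\bm{x})^{1/2}$, whence $\int |h|^2\, dP \le \|h\|_{\RKHS}^2 \int k_x(\bm{x},\bm{x})\, dP < \infty$ by hypothesis; this shows the inclusion $\iota_x : \RKHS \to L_2(P(d\bm{x}))$ is well-defined and bounded. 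To upgrade to Hilbert--Schmidt I would expand $\sum_j \|\iota_x u_j\|_2^2$ over an orthonormal basis $\{u_j\}$ of $\RKHS$ and use Tonelli plus Parseval to rewrite it as $\int \|k_x(\cdot,\bm{x})\|_{\RKHS}^2\, dP = \int k_x(\bm{x},\bm{x})\, dP < \infty$, so $\iota_x$ and its adjoint $\iota_x^\top$ are Hilbert--Schmidt. Verifying $\mathcal{T}_{k_x} = \iota_x \iota_x^\top$ amounts to pairing both sides against test functions and applying Fubini to the defining integral; positivity, self-adjointness and the trace-class property then follow since $\iota_x \iota_x^\top$ is a product of two Hilbert--Schmidt operators.

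With $\mathcal{T}_{k_x}$ shown positive, self-adjoint and compact (trace-class implies compact), claim (ii) is the spectral theorem for compact self-adjoint operators: an at-most-countable family of strictly positive eigenvalues $\{\mu_i\}_{i\in I}$ accumulating only at $0$, with $L_2$-orthonormal eigenfunctions $\{\bar\varphi_i\}$, giving the stated diagonalization. For claim (iii) I would lift these eigenfunctions into $\RKHS$ by setting $e_i := \mu_i^{-1/2}\, \iota_x^\top \bar\varphi_i$. A direct computation using $\iota_x \iota_x^\top = \mathcal{T}_{k_x}$ yields $\langle e_i, e_j\rangle_{\RKHS} = (\mu_i \mu_j)^{-1/2} \langle \mathcal{T}_{k_x} \bar\varphi_i, \bar\varphi_j\rangle_2 = \delta_{ij}$, so $\{e_i\}$ is an orthonormal (hence orthogonal) system in $\RKHS$, while $\iota_x e_i = \mu_i^{-1/2} \mathcal{T}_{k_x} \bar\varphi_i = \sqrt{\mu_i}\, \bar\varphi_i$, i.e. $[e_i]_\sim = \sqrt{\mu_i}\, \bar\varphi_i$ with $\lambda_i = \mu_i$.

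Finally, assuming $k_x$ bounded and continuous on the (compact) domain, each $e_i$ lies in $\RKHS \subset C(\mathcal{X})$, so $\varphi_i := \mu_i^{-1/2} e_i$ is a continuous representative of $\bar\varphi_i$. To obtain the absolutely and uniformly convergent expansion $k_x(\bm{x},\bm{x}') = \sum_i \mu_i \varphi_i(\bm{x}) \varphi_i(\bm{x}')$ of claim (iv), I would first treat the diagonal: the partial sums $\sum_{i\le n} \mu_i \varphi_i(\bm{x})^2$ are continuous and increase pointwise to the continuous limit $k_x(\bm{x},\bm{x})$, so Dini's theorem gives uniform convergence on the compact domain, and the off-diagonal absolute and uniform convergence then follows from a Cauchy--Schwarz tail estimate controlled by the uniformly small diagonal tails.

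The routine analytic content (the pointwise bound, the spectral theorem, and the lift in claim (iii)) is standard; the two places demanding care are the operator identity $\mathcal{T}_{k_x} = \iota_x \iota_x^\top$, which requires justifying an interchange of integration with the RKHS inner product, and the uniform-convergence claim of (iv), which hinges on compactness of the domain and the Dini argument. The subtlest bookkeeping throughout is the systematic separation between pointwise-defined RKHS functions and their $L_2(P)$ equivalence classes under $[\cdot]_\sim$: the eigenfunctions $\bar\varphi_i$ are defined only up to $P$-null sets, whereas the lifts $e_i$ are canonical, and keeping this distinction consistent is exactly what makes claims (iii) and (iv) mesh. I expect this last point, rather than any single estimate, to be the main obstacle to a fully rigorous write-up.
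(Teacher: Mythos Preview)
Your proposal is correct and follows precisely the standard integral-operator route of \citet{Steinwart2012MercersTO} that the paper itself cites as its proof; in effect you have written out what the paper delegates to that reference. The only minor caveat is that your Dini argument for (iv) tacitly assumes compactness of the domain, which Steinwart's Corollary~3.5 relaxes, but in the paper's setting (continuous kernels on compact $\mathbb{X}$) this is already in force.
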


We state the following technical Lemma that will help prove a result on information gain rates.

\begin{lemma}[\citep{Bhatia1997}]\label{lem:CompDecay}
    Let $\operator{A},\operator{B}$ be any two operators, $\|\cdot\|$ denote the operator norm and $s_j$ the $j$-th largest singular value. Then
    $$
    s_j(\operator{A}\operator{B}) \leq \min\{\|\operator{B}\|s_j(\operator{A}),\|\operator{A}\|s_j(\operator{B})\}
    $$
\end{lemma}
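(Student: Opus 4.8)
The plan is to prove the two bounds $s_j(\operator{A}\operator{B}) \leq \norm{\operator{B}}\,s_j(\operator{A})$ and $s_j(\operator{A}\operator{B}) \leq \norm{\operator{A}}\,s_j(\operator{B})$ separately, and then take their minimum. The one tool I would use throughout is the \emph{approximation-number characterization} of singular values: for a compact operator $\operator{T}$ on a Hilbert space,
\begin{align}
    s_j(\operator{T}) = \inf\{\norm{\operator{T} - \operator{F}} : \operatorname{rank}(\operator{F}) \leq j-1\},
\end{align}
which identifies the $j$-th singular value with the operator-norm distance from $\operator{T}$ to the operators of rank at most $j-1$ (the Schmidt/Eckart--Young theorem), the infimum being attained by the truncated singular value expansion.

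For the first bound I would fix any $\operator{F}$ with $\operatorname{rank}(\operator{F}) \leq j-1$ and observe that $\operator{F}\operator{B}$ then also has rank at most $j-1$, since $\operatorname{range}(\operator{F}\operator{B}) \subseteq \operatorname{range}(\operator{F})$. Applying the characterization above to $\operator{A}\operator{B}$ with the particular competitor $\operator{F}\operator{B}$ gives
\begin{align}
    s_j(\operator{A}\operator{B}) \leq \norm{\operator{A}\operator{B} - \operator{F}\operator{B}} = \norm{(\operator{A}-\operator{F})\operator{B}} \leq \norm{\operator{A}-\operator{F}}\,\norm{\operator{B}},
\end{align}
using submultiplicativity of the operator norm. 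Taking the infimum over all admissible $\operator{F}$ and invoking the characterization once more for $\operator{A}$ yields $s_j(\operator{A}\operator{B}) \leq \norm{\operator{B}}\,s_j(\operator{A})$. The second bound follows by the symmetric argument: approximate $\operator{B}$ by a rank-$(j-1)$ operator $\operator{G}$, note that $\operator{A}\operator{G}$ again has rank at most $j-1$, and bound $\norm{\operator{A}\operator{B} - \operator{A}\operator{G}} \leq \norm{\operator{A}}\,\norm{\operator{B}-\operator{G}}$ before passing to the infimum over $\operator{G}$.

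The only point that genuinely requires care is the justification of the approximation-number characterization in the relevant setting. For compact operators on a Hilbert space --- which is exactly the case of interest here, since $\operator{A}_1$ is compact normal under Assumption~\ref{asm:Areg} --- this is standard. The remaining ingredients, namely that rank is non-increasing under left- or right-composition and that the operator norm is submultiplicative, are immediate, so once the characterization is in hand the argument is essentially a one-line estimate repeated twice. An equivalent route would replace the approximation numbers by the Courant--Fischer min-max formula for $s_j$, but the approximation-number approach is cleaner because it isolates submultiplicativity as the entire content of the proof; I would therefore expect the characterization itself (rather than any of the estimates) to be the only step meriting a citation.
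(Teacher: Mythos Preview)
Your argument is correct: the approximation-number characterization plus submultiplicativity of the operator norm is the standard route to this inequality, and your handling of the rank constraint and the two symmetric bounds is clean. Note, however, that the paper does not supply its own proof of this lemma at all --- it is stated as a technical fact with a citation to \citet{Bhatia1997} and used directly in the proof of Theorem~\ref{th:infogain_asymp} --- so there is no in-paper argument to compare against; your write-up simply fills in what the reference would provide.
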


The above results will be used to bound the eigenvalue decay i.e. 
$$
\eig_j(\operator{A}\operator{B}\operator{B}^*\operator{A}^*)=s_j(\operator{A}\operator{B})^2.
$$ 

\begin{theorem}[Theorem \ref{th:infogain_asymp} restated]
Consider the Mercer eigenvalues $\{\mu_j\}^{\infty}_{j=1}$ for $k_x$ and let Assumptions \ref{asm:Hreg},\ref{asm:Sspec} and \ref{asm:Areg} hold. Then $\exists \theta \geq 1$ for\vspace{-1em}
\begin{description}[style=multiline, leftmargin=3em,font=\normalfont]
    \item[\namedlabel{case:PD2}{(\txt{Poly})}] $$\eig_j(A_1)\lesssim j^{-p} \wedge \mu_j\lesssim j^{-a}, a>1$$ or
    \item[\namedlabel{case:ED2}{(\txt{Exp})}]  $$\eig_j(A_1)\lesssim \exp{-j^p} \wedge~\mu_j \lesssim  \exp{-j^b},b>0$$ so that
\end{description}
\[
    \gamma^{\sigma}_N\left(k^{\txt{KE}}_{y}\right)  \in
    \tilde{\mathcal{O}}(\left(\gamma^{\sigma}_N(k_{x})\right)^{\nicefrac{1}{\theta}})
\]
where $\theta=\frac{\max\{2p,a\}}{a}$ \ref{case:PD2} and $\theta=\frac{\max\{p,b\}}{b}$ \ref{case:ED2}.
\begin{proof}
We prove the results considering the following two eigendecay profiles:
    \begin{description}[style=multiline, leftmargin=3em,font=\normalfont]
    \item[\ref{case:PD2}{(\txt{Poly})}] 
First, based on the information gain results from \cite[Corollary 1]{pmlr-v130-vakili21a}, we can use a simplified (free of constants) information gain 
\begin{align}\label{eq:polyIGsimple}
\gamma^{\sigma}_N\left(k_{x}\right) \in \mathcal{O}\left( N^{\frac{1}{a}} {(\log{N})}^{1-\frac{1}{a}} \right)
\end{align}
for the base kernel $k_x$ with a polynomial decay of Mercer eigenvalues
\begin{align}
    \mu_j\lesssim j^{-a}, a>1.
\end{align}
Secondly,  by boundedness of $A_1$, we have an $A_1$-induced decay $\eig_j(A_1 \mathcal{T}_{k_x}A^*_1)\lesssim j^{-a^\prime}$ for some $a'$. This allows us to define a ratio between them and the native decay rate of Mercer eigenvalues $\theta:=\frac{a^\prime}{a} > 0$.  To uncover the information gain differences, we
\begin{subequations}
can equivalently define $a^\prime = \theta a$, such that expressing everything in terms of $\theta$ and $a$ we get  
    \begin{align}
\gamma^{\sigma}_N\left(k^{\txt{KE}}_{y}\right) 
& \in {\mathcal{O}} \left({N}^{\frac{1}{a\theta}}(\log N)^{(1
-\frac{1}{a\theta})}\right) \\
& \in {\mathcal{O}} \left({N}^{\frac{1}{a}(\frac{1}{\theta})}(\log N)^{\left(\frac{1}{\theta}+(1-\frac{1}{\theta})
-\frac{1}{a \theta}\right)}\right) \\
& \in {\mathcal{O}} \left({N}^{\frac{1}{a}(\frac{1}{\theta})}(\log N)^{(1
-\frac{1}{a})(\frac{1}{\theta})}(\log N)^{1-\frac{1}{\theta}}\right) \\
& \in {\mathcal{O}} \left(\left({N}^{\frac{1}{a}}(\log N)^{(1
-\frac{1}{a})}\right)^\frac{1}{\theta} (\log N)^{1-\frac{1}{\theta}}\right)\\
& \in \gamma^{\sigma}_N\left(k_{x}\right)^\frac{1}{\theta}{\mathcal{O}} \left( (\log N)^{1-\frac{1}{\theta}}\right) \\
& \in \tilde{\mathcal{O}} \left(\left(\gamma^{\sigma}_N\left(k_{x}\right)\right)^\frac{1}{\theta}\right).
\end{align}
\end{subequations}
Using $\eig_j(A_1)\lesssim j^{-p}$ and $\eig_j(\iota_x\iota_x^*)\overset{\text{Lem. \ref{lem:MercerApp}}}{\equiv}\eig_j\left(\operator{T}_{k_x}\right):=\mu_j\lesssim j^{-a}, a>1$  and invoking Lemma \ref{lem:CompDecay}, we obtain 
\begin{align}
\eig_j\left({{A}^{}_1} \iota_x\iota_x^* {{A}^{*}_1}\right) = \left(s_j\left({{A}^{}_1} \iota_x\right)\right)^2 &  \leq  \left(\min\{\|\iota_x\|s_j(A_1),\|A_1\|s_j(\iota_x)\}\right)^2\\
&  \lesssim \left(\min\{s_j(A_1),s_j(\iota_x)\}\right)^2 \\
&  \lesssim \left(\min\left\{j^{-{p}},\sqrt{j^{-{a}}}\right\}\right)^2 \\
&  \lesssim \left(j^{-{\max\{p,{\frac{a}{2}}\}}}\right)^2\\
     & \lesssim  {j^{-\max\{2p,a\}}}
\end{align}
leading to
\begin{align}
    \gamma^{\sigma}_N\left(k^{\txt{KE}}_{y}\right) & \in \tilde{\mathcal{O}} \left(\gamma^{\sigma}_N\left(k_{x}\right)^\frac{a}{\max\{2p,a\}}\right).
\end{align}
where identifying $\theta := \frac{\max\{2p,a\}}{a} \geq 1$ proves the \ref{case:PD2} part of the result.
    \item[\ref{case:ED2}{(\txt{Exp})}]  
First, based on the information gain results from \cite[Corollary 1]{pmlr-v130-vakili21a}, we can use a simplified (free of constants) information gain 
\begin{align}\label{eq:expIGsimple}
\gamma^{\sigma}_N\left(k_{x}\right) \in \mathcal{O}\left({(\log{N})}^{1+\frac{1}{b}} \right)
\end{align}
for the base kernel $k_x$ with an exponential decay of Mercer eigenvalues
\begin{align}
    \mu_j \lesssim \exp{{-{j}^{b}}},b>0.
\end{align}
Secondly, by boundedness of $A_1$, we have an $A_1$-induced decay $\eig_j(A_1 \mathcal{T}_{k_x}A^*_1)\lesssim \exp{{-{j}^{b^\prime}}}$. This allows us to define a ratio between them and the native decay rate of Mercer eigenvalues $\theta:=\frac{b^\prime}{b} > 0$.  To uncover the information gain differences,
\begin{subequations}
we can equivalently define $b^\prime = \theta b$, such that expressing everything in terms of $\theta$ and $b$ we get 
    \begin{align}
\gamma^{\sigma}_N\left(k^{\txt{KE}}_{y}\right) 
& \in {\mathcal{O}} \left((\log N)^{(1
+\frac{1}{b \theta})}\right) \\ 
& \in {\mathcal{O}} \left((\log N)^{(\frac{1}{\theta}+(1-\frac{1}{\theta})
+\frac{1}{b \theta})}\right)  \\
& \in {\mathcal{O}} \left((\log N)^{(1
+\frac{1}{b})\frac{1}{\theta}} (\log N)^{1-\frac{1}{\theta}}\right)\\
& \in {\mathcal{O}} \left(\left((\log N)^{(1
+\frac{1}{b})}\right)^\frac{1}{\theta} (\log N)^{1-\frac{1}{\theta}}\right)\\
& \in {\mathcal{O}} \left((\log N)^{(1
+\frac{1}{b})}\right)^\frac{1}{\theta} {\mathcal{O}} (\log N)^{1-\frac{1}{\theta}}\\
& \in \gamma^{\sigma}_N\left(k_{x}\right)^\frac{1}{\theta} {\mathcal{O}} (\log N)^{1-\frac{1}{\theta}}\label{eq:ExpFullIG}\\
& \in \tilde{\mathcal{O}} \left(\gamma^{\sigma}_N\left(k_{x}\right)^\frac{1}{\theta}\right).
\end{align}
\end{subequations}
Using $\eig_j(A_1)\lesssim \exp{{-{j}^{p}}}$ and $\eig_j(\iota_x\iota_x^*)\overset{\text{Lem. \ref{lem:MercerApp}}}{\equiv}\eig_j\left(\operator{T}_{k_x}\right):=\mu_j\lesssim \exp{{-{j}^{b}}}, b>0$  and invoking Lemma \ref{lem:CompDecay}, we obtain 
\begin{align}
\eig_j\left({{A}^{}_1} \iota_x\iota_x^* {{A}^{*}_1}\right) = \left(s_j\left({{A}^{}_1} \iota_x\right)\right)^2 &  \leq  \left(\min\{\|\iota_x\|s_j(A_1),\|A_1\|s_j(\iota_x)\}\right)^2\\
&  \lesssim \left(\min\{s_j(A_1),s_j(\iota_x)\}\right)^2 \\
&  \lesssim \left(\exp{-{\max\{j^p,\frac{1}{2}j^{{b}}\}}}\right)^2 \\
&  \lesssim \exp{-{\max\{2j^p,j^{{b}}\}}} \\
     & \lesssim  \exp{-j^{\max\{p,b\}}}
\end{align}
leading to
\begin{align}
    \gamma^{\sigma}_N\left(k^{\txt{KE}}_{y}\right) &  \in \tilde{\mathcal{O}} \left(\gamma^{\sigma}_N\left(k_{x}\right)^\frac{b}{\max\{p,b\}}\right).
\end{align}
where identifying $\theta := \frac{\max\{p,b\}}{b} \geq 1$ proves the \ref{case:ED2} part of the result, finishing the proof.
\end{description}
\end{proof}
\end{theorem}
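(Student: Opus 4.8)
The plan is to reduce the information-gain comparison to a comparison of eigenvalue decay rates of the associated integral operators, and then to feed the improved decay into the spectral information-gain asymptotics of \cite{pmlr-v130-vakili21a}. The entry point is the identity furnished by the representational lemma at $t=1$: the integral operator of the Koopman-equivariant kernel factors as $\mathcal{T}_{k^{\txt{KE}}_y} = A_1 \mathcal{T}_{k_x} A_1^*$, where $A_1 = \sum_{j} \exp{\lambda_j}\KEop{[\tau_s,0]}{\lambda_j}$ is the canonical equivariance operator. By \ref{asm:Areg}, $A_1$ is compact, normal, and in particular bounded, so this conjugation yields a positive, trace-class operator whose eigenvalues $\eig_j(A_1 \mathcal{T}_{k_x} A_1^*)$ govern $\gamma^{\sigma}_N(k^{\txt{KE}}_y)$.

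First I would factor the base operator as $\mathcal{T}_{k_x} = \iota_x \iota_x^*$ via the inclusion operator of Lemma \ref{lem:MercerApp}, so that $\eig_j(A_1 \mathcal{T}_{k_x} A_1^*) = s_j(A_1 \iota_x)^2$. The crucial estimate is Bhatia's singular-value inequality (Lemma \ref{lem:CompDecay}), which gives $s_j(A_1 \iota_x) \le \min\{\|\iota_x\| s_j(A_1),\, \|A_1\| s_j(\iota_x)\}$; since both operator norms are finite, up to constants this is $s_j(A_1 \iota_x) \lesssim \min\{s_j(A_1), s_j(\iota_x)\}$. Squaring converts this back to the eigenvalue decay of the conjugated operator. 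Under each hypothesis the min of the two decays is dominated by whichever factor decays more slowly, so the exponents combine as a maximum: $\eig_j(A_1 \mathcal{T}_{k_x} A_1^*) \lesssim j^{-\max\{2p,a\}}$ in the polynomial case and $\lesssim \exp{-j^{\max\{p,b\}}}$ in the exponential case.

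Next I would substitute these sharpened rates into \cite[Corollary 1]{pmlr-v130-vakili21a}, which yields $\gamma^{\sigma}_N(k) \in \mathcal{O}(N^{1/a}(\log N)^{1-1/a})$ for polynomial decay $j^{-a}$ and $\gamma^{\sigma}_N(k) \in \mathcal{O}((\log N)^{1+1/b})$ for exponential decay $\exp{-j^b}$. Writing the KE exponent as a multiple of the base exponent --- $\max\{2p,a\} = \theta a$, i.e.\ $\theta = \max\{2p,a\}/a$ in the polynomial case and $\theta = \max\{p,b\}/b$ in the exponential case --- both information gains are expressed through $\theta$ and the base rate. A short algebraic rearrangement factors out $(\gamma^{\sigma}_N(k_x))^{1/\theta}$ and leaves a residual $(\log N)^{1-1/\theta}$ factor that is absorbed into $\tilde{\mathcal{O}}(\cdot)$, delivering the claimed bound. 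That $\theta \ge 1$ is immediate from $\max\{2p,a\}\ge a$ (resp.\ $\max\{p,b\}\ge b$), so the equivariance never worsens the rate.

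The main obstacle I anticipate is ensuring the chain of bounds controls the genuine Mercer eigenvalues of $k^{\txt{KE}}_y$ and not merely those of an abstractly conjugated operator: one needs $k^{\txt{KE}}_y$ to remain a bounded Mercer kernel so that \ref{asm:Hreg} and the Vakili asymptotics actually apply, and one needs \ref{asm:Sspec} to guarantee the symmetrization is well-specified so that $A_1$ acts nontrivially (otherwise the induced RKHS, and hence $\theta$, degenerates). The log-factor bookkeeping in the third step is routine but must be carried out carefully to confirm that the leftover powers of $\log N$ are genuinely subdominant and vanish under $\tilde{\mathcal{O}}$.
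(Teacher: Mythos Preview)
Your proposal is correct and follows essentially the same approach as the paper: factor $\mathcal{T}_{k_x}=\iota_x\iota_x^*$, apply Bhatia's singular-value inequality (Lemma~\ref{lem:CompDecay}) to bound $\eig_j(A_1\mathcal{T}_{k_x}A_1^*)=s_j(A_1\iota_x)^2$ by the faster of the two decays, and then plug the resulting exponent into the Vakili asymptotics, absorbing the residual $(\log N)^{1-1/\theta}$ into $\tilde{\mathcal{O}}$. The paper organizes the two steps in reverse order (it first performs the algebraic rewriting with an abstract $\theta$, then computes $\theta$ via Lemma~\ref{lem:CompDecay}), but the content is identical.
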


\begin{remark}[Base SE kernel in \eqref{eq:KE-SDK}]
    By plugging in the known input-dimension dependent information gain for the SE kernel in \eqref{eq:ExpFullIG}, we see that the decay speed-up  $\theta > 1 \equiv 2p > b$ can counteract the curse of input dimensionality, leading to to 
    $$
    \gamma^{\sigma}_N\left(k^{\txt{KE}}_{y}\right) \in {\mathcal{O}} \left((\log N)^{\frac{n}{\theta}+1}\right).
    $$
\end{remark}

\paragraph{Intuition on time-independent $A_1$} 
We defined the operator $A_1$ in \ref{asm:Areg} as time-independent for ease of exposition; the family of $\{A_t\}^T_{t=0}$ is uniquely defined by an infinitesimal generator.
This is due to the confinement to a non-recurrent domain \ref{asm:Sspec} that prescribes finite frequencies of oscillation and finite growth rates, allowing for the infinitesimal generator bounded \citep{Zeng2023AOperator}.
In practice, we can always take the worst-case time-exponent for analysis by scaling the time appropriately.

\section{DETAILS ON NUMERICAL EXPERIMENTS}\label{supl:NumExp}

\subsection{Implementation Details}
We implemented GP regression using \texttt{GPJax}~\citep{Pinder2022}. All of the experiments were performed on machines with {2TB} of RAM, 8 NVIDIA Tesla P100 16GB GPUs and 4 AMD EPYC 7542 CPUs.

\subsubsection{Spectral Hyperprior}\label{ssec:SpecPrio}
To tractably optimize over a spectral prior, we use the noise transfer (outsourcing) trick by \cite[Theorem 5.10]{kallenberg1997foundations} to model the eigenvalue distribution $p(\lambda) \approx \rho_{}(\bm{\vartheta})$.
This choice limits the number of required parameters since $\bm{\vartheta}$ has fewer parameters (degrees of freedom) than the number of eigenspaces $\|\bm{\vartheta}\|_0 \ll |D|$. Furthermore, it allows for the use of log-likelihood maximization just like with any other set of hyperparameters. We use a uniform distribution on $\{\lambda_j=s_j+\mathrm{i}\omega_j: s\in [-\vartheta_{s}, \vartheta_{s}]+\vartheta_{\overline{s}},~\omega\in[-\vartheta_{\omega}, \vartheta_{\omega}]+\vartheta_{\overline{\omega}}\}$,~$\bm{\vartheta}{=}[\vartheta_s,\vartheta_{\overline{s}},\vartheta_{\omega},\vartheta_{\overline{\omega}}]^\top$. To obtain equivariant features, we compute the expectation~\eqref{eq:EqvarEOSupp} wrt. a uniform underlying distribution in time, which in case of equally spaced points in time leads to a trapezoid rule.
\subsubsection{Preprocessing and Initialization}
We standardize all data trajectories such that the target has to have zero mean and unit variance and the forecast time is between zero and one. This allows us to choose similar parameters for all datasets. We initialize the generative parameters as follows for KE-GP and C-GP:
\begin{enumerate}[leftmargin=*,label=\txt{init}.\roman*)]
    \item \textit{Prior mean}: $\mu(\bm{x})=0$ and keep it fixed;
    \item \textit{Lengthscale}: $\frac{\sqrt{n_x}}{2}\operatorname{std} (\Set{X}_{\text{input}})$ following \cite{pmlr-vanillaBOgreat-hvarfner24a};
    \item \textit{Signal variance}: $\sigma_{s}^2=1$ and fix it, since the data was standardized. Is advised based on the results of \cite{pmlr-vanillaBOgreat-hvarfner24a};
    \item \textit{Observation noise variance}: $\sigma_{\txt{on}}^2=1$;
    \item\label{itm:SpecPrio} \textit{Spectral prior}:  $\vartheta_{{\omega}}=15, \vartheta_{\overline{\omega}}=0$, $\vartheta_{s}=1$, $\vartheta_{\overline{s}}=0$ for scale and bias parameters of the uniform distribution.
    \item \textit{Inducing trajectories}: $N{=}32$ for exact GP and variational inference, sampled form the train split.
\end{enumerate}

\subsubsection{Variational Inference}
As variational inference is notoriously sensitive to initial guesses, we employ a scheme to robustly optimize the generative and variational parameters.
To this end we first get reasonable guesses as described above, sample a set of inducing points $(\tilde{Z}, \tilde{Y})$ from the training data and train an exact GP on the inducing inputs via marginal log-likelihood. This yields a set of generative parameters, in particular parameters for the spectral distribution that fit the data. The so obtained posterior is used to initialize the variational GP: $m = \overline{\mu}_{\text{MLL}}(\tilde{Z})$ and $S = \overline{\sigma}_{\text{MLL}}(\tilde{Z}, \tilde{Z})$. This means the initial VI model is the exact posterior on inducing inputs, making the optimization easier, as the initial gradients are smaller. To optimize the variational GP we follow~\cite{Toth20a} and start by first optimizing the variational parameters only, then optimize all parameters jointly and finally optimize the variational parameters on the joint training and validation dataset.

Due to the structure of our Koopman-equivariant construction, and the resulting benefits in information gain presented in Section \ref{section:analysisofsamplecomplexity}, we found KE-GPs more robust to a lack of correlation between points than C-GP, an issue commonly observed in conventional sparse GP approximations \citep{Murray2010,HensmanNIPS2015}. In particular, a lower information gain implies that less inducing points are required than with conventional GPs to accurately represent the full posterior \citep{Burt2019RatesRegression}.%

\paragraph{C-GP}
For the contextual GP \citep{Li2024STkernel} with covariance $k^{\txt{SE}}(t,t^\prime)\otimes k^{\txt{SE}}(\bm{x}_0,\bm{x}^\prime_i)$, we perform the same type of varionational inference as described above, but consider inducing points $\bm{z}=[\bm{x}_0^\top,t^\top]^\top$.

\subsection{Benchmark Dynamics}
We perform our quantitative study on the following examples of varying complexity. 
From the robotics domain, we consider expert demonstrations from D4RL~\citep{fu2020d4rl} from the \txt{\small halfcheetah} environment and forecast the first state and action. We take temperature data from the {Monash TSF} benchmark~\citep{godahewa2021monash} as a sample for highly complex weather dynamics. Since the latter datasets provide a single long trajectory, we split off the last chunk as test data and partition the trajectory into \#$N$ dataparis pairs to comply with \eqref{eq:dataTraj}.

\paragraph{Predator-Prey ODE} We use the predator-prey model:
\begin{equation}\label{eq:volterra_lotka}
    \dot{x}_{1} = r_{1}  x_{1} + c_{i}  \gamma_{1} x_{1}  x_{1}
,\quad
\dot{x}_{2} = r_{2}  x_{2} + c_{i} \gamma_{2} x_{1}  x_{2}
.
\end{equation}%
where $r_1,r_2$, $\gamma$, $c_1,c_2$ are reproduction rates, interaction effects and frequency, respectively. We choose parameters $r_1{=}0.2$, $\gamma_1{=}0.4$, $r_2{=}0.25$, $\gamma_2{=}0.2$, $c_i=2$. We create a dataset by simulating the system for $H=64$ steps with $\Delta t=3$ for $N=1024$ trajectories from initial conditions in $[0, 2]\times [0, 1]$.

\paragraph{Linear ODE}
To validate the information gain results on a simple example we use a 2-dimensional linear system $\dot{x_1}=-6 x_2, \dot{x_2}=6x_1$. We create a dataset by simulating the system for $H=16$ steps with $\Delta t=0.06$ for $N=1000$ trajectories from initial conditions in the unit box. The eigenvalues of this system are $\lambda_{1,2}=\pm 6j$. Which we use to compare a randomly sampled enclosing vs. a matching spectral distribution.

\paragraph{D4RL}
 The Datasets for Deep Data-Driven Reinforcement Learning (D4RL) \citep{fu2020d4rl} provides a trajectory collection of reinforcement learning agents interacting with the environments defined in the OpenAI gym  \citep{oAIgym}. We pick the environment \txt{\small halfcheetah} with the expert policy. For the \txt{\small halfcheetah} we use all actions and observations as our state, demonstrating that our method works in a high dimensional input space. The task for the \txt{\small halfcheetah} is to forecast the first action. Input trajectory have 16 steps, the target is to be forecast for 16 steps.
 
\paragraph{Oikolab Temparature}
As a final benchmark, we draw on the \emph{monash\_tsf} dataset collection \citep{godahewa2021monash} and choose their OIKOLAB weather station dataset to test our KE-GP method on the task temperature forecasting. To this end, we provide the models with state trajectories consisting of 8 quantities, including temperature for the past 32 hours and set the task as forecasting the temperature for 16 hours.

\section{ADDITIONAL EXPERIMENTS}\label{supl:AddExp}

\subsection{Covariance Visualisation vs C-GP}\label{sec:covvis}
We compare initial and marginal log-likelihood optimized covariances for our KE-GP and C-GP. The spatial covariance corresponds to that of a trajectory. The temporal covariance is separately displayed for the same trajectory. As Figure \ref{fig:illustrative} shows, the learned covariances for our KE-GP and C-GP are of similar shape, the initial spatial covariance of KEGP is less local than that of MTGP, already encoding the trajectory structure before hyperparameter optimization. Further, the KEGP temporal covariance delivers a considerably simple forecasting model, as it is the superposition of multiple one-dimensional LTI systems \eqref{eq:KoopObs}, as opposed to a spatially and temporally nonlinear covariance of C-GP. 

Notably, since the spectral distribution is a parametrized uniform distribution, KE-GP has only seven parameters, whereas C-GP has 100.

{
\newcommand{\plottimecov}[1]{
\includegraphics[width=\linewidth, trim={0, 1cm, 0, 3cm},clip=true]{#1}
}
\newcommand{\plotspacecov}[1]{
\includegraphics[width=\linewidth, trim={0, 1cm, 0, 4cm},clip=true]{#1}
}
\begin{figure*}[t]
    \vspace{-0.2cm}
    \centering
    { \setlength{\tabcolsep}{3pt}
    \newcolumntype{Y}{>{\centering\arraybackslash}X}
    \begin{tabularx}{0.95\textwidth}{Y|YY|YY}
   \toprule     
         &\multicolumn{2}{c|}{{Space}}
        &\multicolumn{2}{c}{{Time}}
        \\
        \midrule
        \multicolumn{1}{c|}{\multirow{8}{*}{\tbm{KE-GP} (ours)}}  &Initial   & NLL-optimized  & Initial  & NLL-optimized 
        \\
        &\plotspacecov{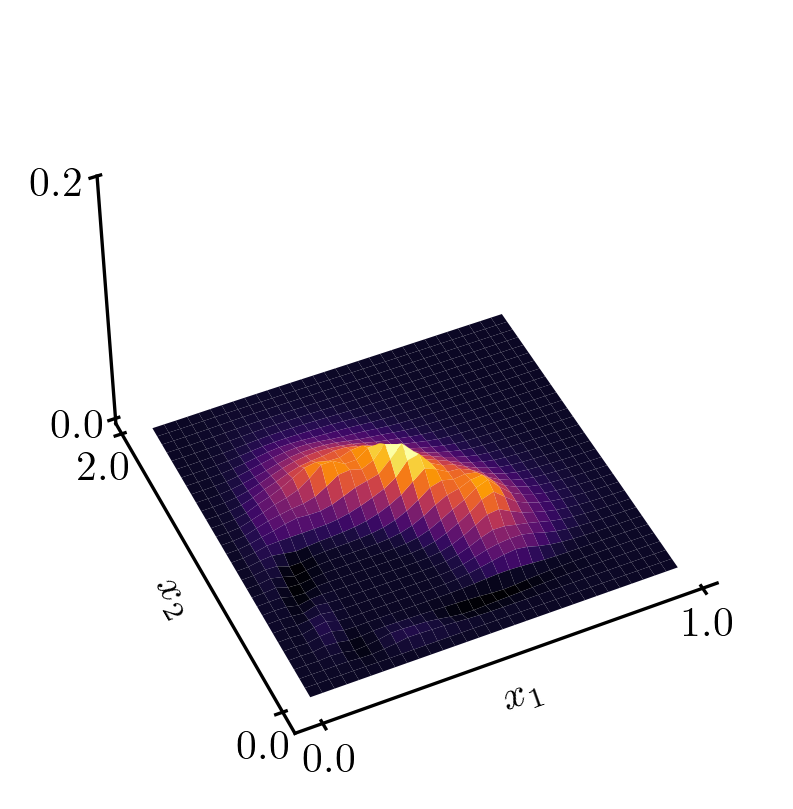}
        &\plotspacecov{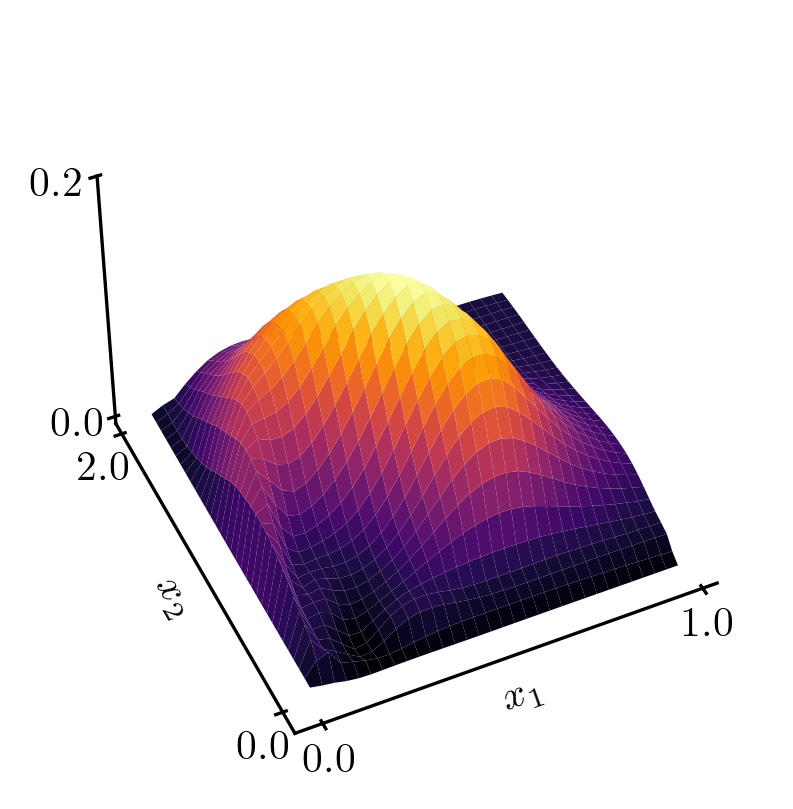}
        &\plottimecov{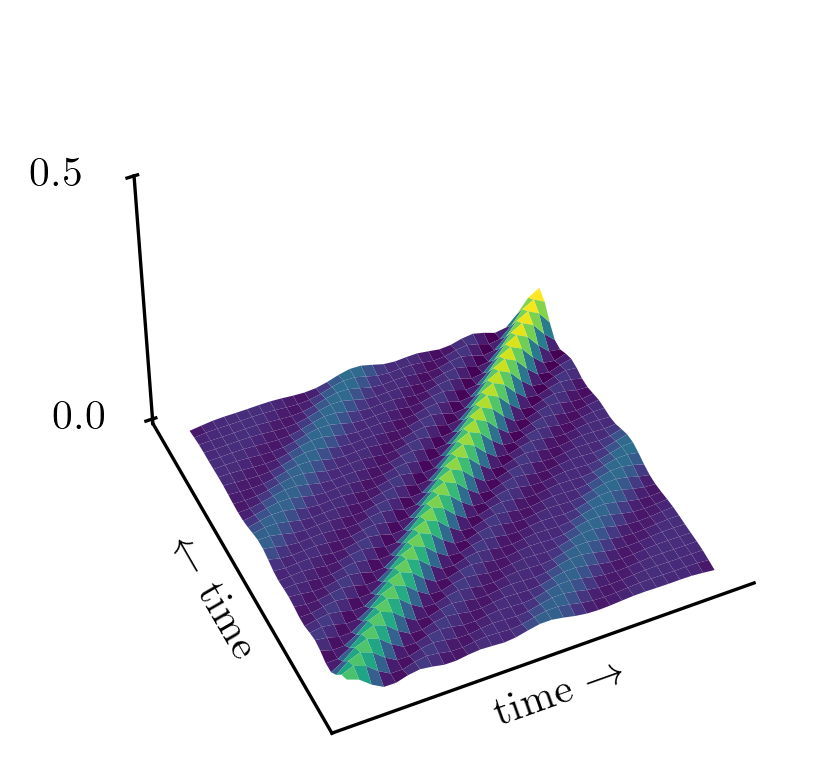}
        &\plottimecov{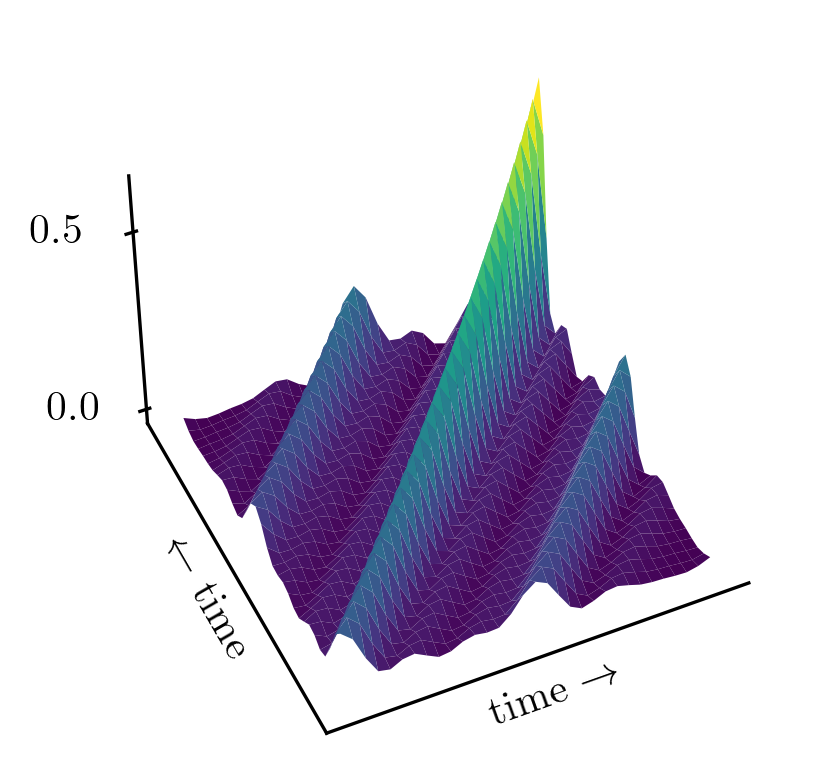}
        \\\midrule
        \multicolumn{1}{c|}{\multirow{8}{*}{C-GP}}&Initial  & NLL-optimized  & Initial  & NLL-optimized 
        \\
        &\plotspacecov{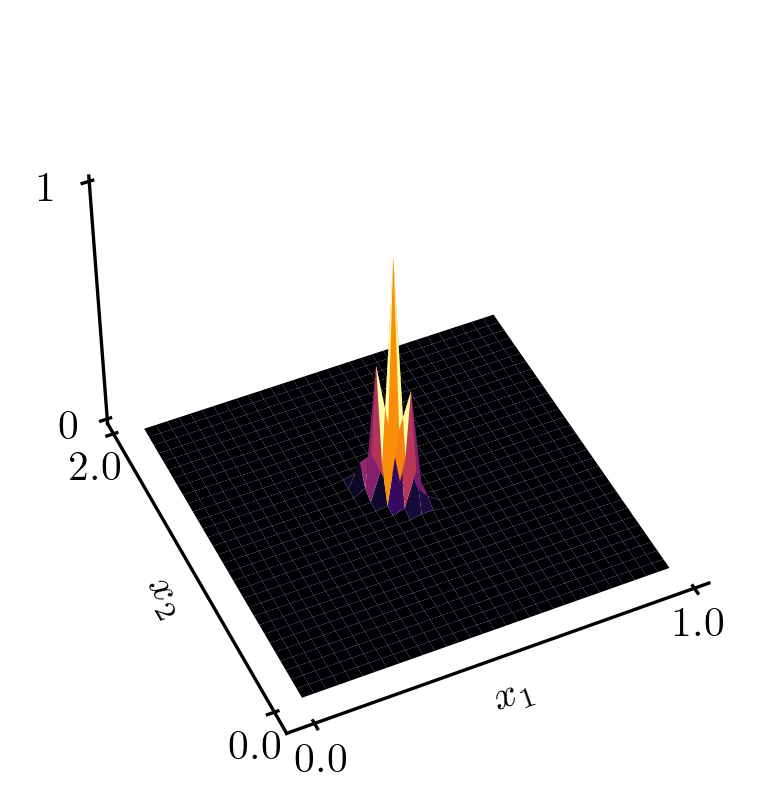}
        &\plotspacecov{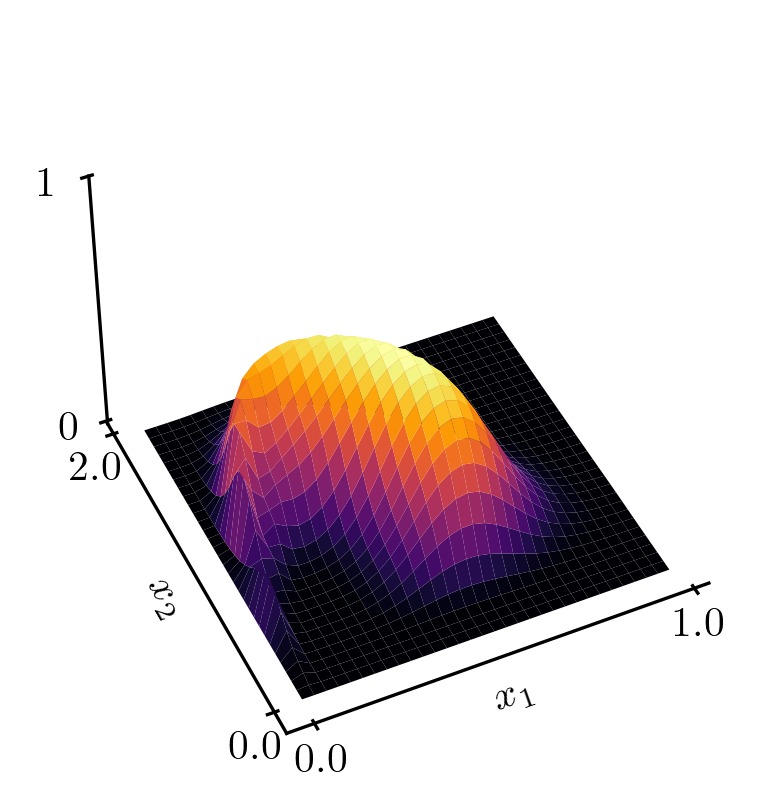}     
        &\plottimecov{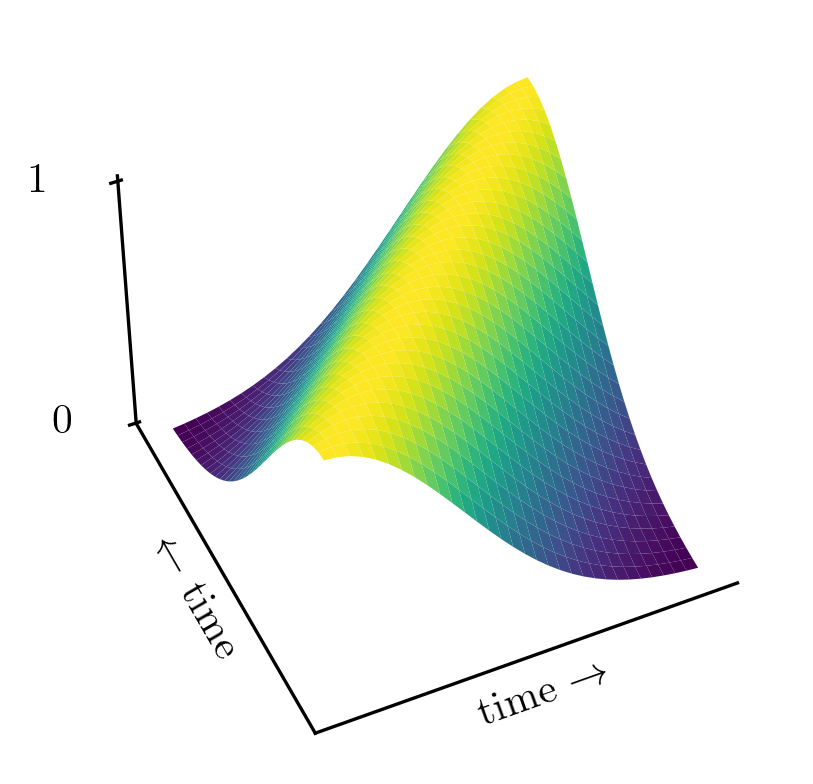}
        &\plottimecov{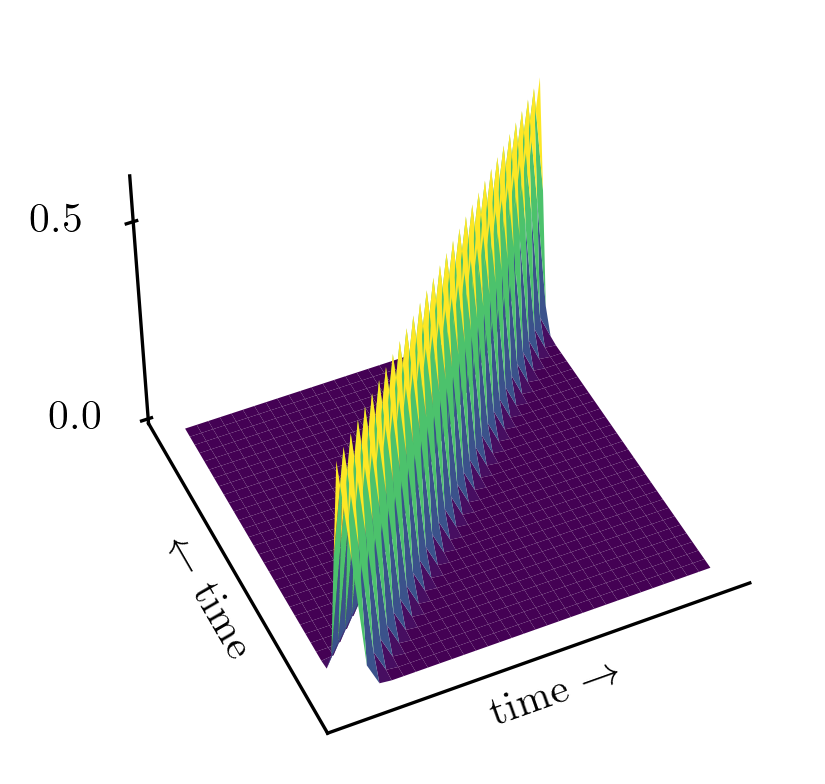}  
         \\
        \bottomrule
    \end{tabularx}}
\label{tab:visualization_covariances}
\caption{Visualization of the GP covariances in space and time. The spatial, KE-GP prior already strongly indicates the shape of the NLL-optimized covariance.}
\end{figure*}%
}%

\subsection{Ablation Studies}
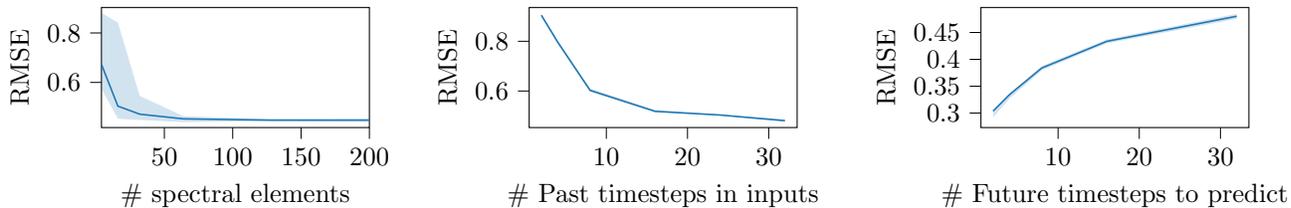
\begin{figure}
    \centering
    \begin{tabular}{ccc}
         \begin{tikzpicture}

\definecolor{darkgray176}{RGB}{176,176,176}
\definecolor{steelblue31119180}{RGB}{31,119,180}

\begin{axis}[
tick align=outside,
tick pos=left,
x grid style={darkgray176},
xlabel={\# spectral elements},
xmin=4, xmax=200,
xtick style={color=black},
y grid style={darkgray176},
ylabel={RMSE},
ymin=0.417490166140613, ymax=0.903106457942425,
ytick style={color=black},
width=.3\textwidth,
height=.3\textwidth/1.618
]
\path [fill=steelblue31119180, fill opacity=0.2]
(axis cs:4,0.881032990133251)
--(axis cs:4,0.574420399529126)
--(axis cs:16,0.453103807689859)
--(axis cs:32,0.448052204935505)
--(axis cs:64,0.439563633949786)
--(axis cs:128,0.443391971510985)
--(axis cs:200,0.439848155491176)
--(axis cs:200,0.452136814706208)
--(axis cs:200,0.452136814706208)
--(axis cs:128,0.45175971054538)
--(axis cs:64,0.463454218665153)
--(axis cs:32,0.545803970532105)
--(axis cs:16,0.842368863109215)
--(axis cs:4,0.881032990133251)
--cycle;

\addplot [semithick, steelblue31119180]
table {%
4 0.673066022018499
16 0.504251898590968
32 0.471655720792867
64 0.452680720917626
128 0.447490430328974
200 0.44762024853612
};
\end{axis}

\end{tikzpicture}&\begin{tikzpicture}

\definecolor{darkgray176}{RGB}{176,176,176}
\definecolor{steelblue31119180}{RGB}{31,119,180}

\begin{axis}[
tick align=outside,
tick pos=left,
x grid style={darkgray176},
xlabel={\# Past timesteps in inputs},
xmin=0.5, xmax=33.5,
xtick style={color=black},
y grid style={darkgray176},
ylabel={RMSE},
ymin=0.452980364289388, ymax=0.935912556203044,
ytick style={color=black},
width=.3\textwidth,
height=.3\textwidth/1.618
]
\path [fill=steelblue31119180, fill opacity=0.2]
(axis cs:2,0.910481528889501)
--(axis cs:2,0.900811607670182)
--(axis cs:4,0.790759293978888)
--(axis cs:8,0.597137146766)
--(axis cs:16,0.516293487479056)
--(axis cs:24,0.502189003713974)
--(axis cs:32,0.47711683083742)
--(axis cs:32,0.486140930745907)
--(axis cs:32,0.486140930745907)
--(axis cs:24,0.504629999074906)
--(axis cs:16,0.524275867431541)
--(axis cs:8,0.610472744161038)
--(axis cs:4,0.801846192521429)
--(axis cs:2,0.910481528889501)
--cycle;

\addplot [semithick, steelblue31119180]
table {%
2 0.904730247622836
4 0.797395625190662
8 0.602611666975664
16 0.518120766090444
24 0.503434577634483
32 0.480596655387911
};
\end{axis}

\end{tikzpicture}
         & \begin{tikzpicture}

\definecolor{darkgray176}{RGB}{176,176,176}
\definecolor{steelblue31119180}{RGB}{31,119,180}

\begin{axis}[
tick align=outside,
tick pos=left,
unbounded coords=jump,
x grid style={darkgray176},
xlabel={\# Future timesteps to predict},
xmin=0.5, xmax=33.5,
xtick style={color=black},
y grid style={darkgray176},
ylabel={RMSE},
ymin=0.273050360799735, ymax=0.496544441562461,
ytick style={color=black},
width=.3\textwidth,
height=.3\textwidth/1.618
]
\path [fill=steelblue31119180, fill opacity=0.2]
(axis cs:2,0.305484449202977)
--(axis cs:2,0.291764398414827)
--(axis cs:4,0.326508472805897)
--(axis cs:8,0.379748969921021)
--(axis cs:16,0.429820280688766)
--(axis cs:32,0.475163641188379)
--(axis cs:32,0.484630805937767)
--(axis cs:32,0.484630805937767)
--(axis cs:16,0.437145931625496)
--(axis cs:8,0.3875434729257)
--(axis cs:4,0.336330751295796)
--(axis cs:2,0.305484449202977)
--cycle;

\addplot [semithick, steelblue31119180]
table {%
2 0.303780062839106
4 0.334069058751524
8 0.383930406359925
16 0.433497393477944
32 0.480176740701709
};
\end{axis}

\end{tikzpicture}
    \end{tabular}
    \caption{Ablation Studies: test RMSE with varying data and spectral parameters; we report mean and interquartile range over 10 runs.}
    \label{fig:abationES}
\end{figure}
As the KE-GP model performance depends on parameters that are absent in naive models, we perform ablation studies to shed light on the role of the new components. In particular, we take a closer look at the effect of the number of modes, the effect of varying length input trajectories, and the effect of varying length forecast time. To this end, we take the predator-prey dynamics \eqref{eq:volterra_lotka}, and 
build exact KE-GP models on $N_{\text{train}}=32$ trajectories and evaluate the model on $N_{\text{test}}=256$ validation trajectories. We do not optimize the base kernel $k_{g_j}$ hyperparameters, i.e., to enable a consistent comparison between models. 

\textbf{Eigenspace Dimensionality} As displayed in Figure~\ref{fig:abationES} (left), the performance improves significantly when increasing the number of eigenspaces sampled up to about $D=64$; by increasing $D$ further, the performance increase saturates. This behavior resembles the increase in resolution we gain by increasing the number of eigenspaces sampled. A reasonable $D$ is dependent on the quality of the learned hyperprior (Subsection \ref{ssec:SpecPrio}), which in turn depends on the dynamical system at hand. Crucially, there is no loss of performance when there are too many spectral elements.

\textbf{Input Trajectory Length} Figure~\ref{fig:abationES}~(middle) shows that a longer input trajectory leads to the equivariance operator $\mathcal{E}$ introducing more information about the dynamics into the prior covariance. As prescribed by representation theory of Section \ref{supl:repPWR} and discussed in~\ref{sec:covvis}, this leads to a better prior even without the need for optimization.

\textbf{Target Trajectory Length} As displayed in Figure~\ref{fig:abationES}~(right), the forecasting problem gets progressively harder as we want to forecast for a longer time, the RMSE increases at a rate of approximately $\sqrt{\txt{steps}}$, which is expected for RMSE.%

\newpage
\bibliography{References}

\end{document}